\def\paperTitle{On Computational Limits of FlowAR Models: Expressivity and Efficiency}
\def\paperAuthor{
Zhao Song\thanks{\texttt{magic.linuxkde@gmail.com}. UC Berkeley.}
}
\theoremstyle{plain}
\newtheorem{theorem}{Theorem}[section]
\newtheorem{lemma}[theorem]{Lemma}
\newtheorem{definition}[theorem]{Definition}
\newtheorem{remark}[theorem]{Remark}
\renewcommand\cite\citep
\newcommand{\wh}{\widehat}
\newcommand{\wt}{\widetilde}
\newcommand{\R}{\mathbb{R}}
\renewcommand{\d}{\mathrm{d}}
\newcommand{\X}{\mathsf{X}}
\newcommand{\Y}{\mathsf{Y}}
\newcommand{\Z}{\mathsf{Z}}
\newcommand{\F}{\mathsf{F}}
\newcommand{\V}{\mathsf{V}}
\newcommand{\TC}{\mathsf{TC}}
\newcommand{\dlogtime}{\mathsf{DLOGTIME}}
\DeclareMathOperator{\poly}{poly}
\DeclareMathOperator{\diag}{diag}
\begin{document}

\ifdefined\isarxiv

\date{}
\title{\paperTitle}
\author{\paperAuthor}

\else

%

%
\runningauthor{Cao, Gong, Ke, Li, Liang, Sha, Shi, Song}

\twocolumn[
\aistatstitle{\paperTitle}
\aistatsauthor{ 
Yang Cao$^{1}$ \And Chengyue Gong$^2$ \And Yekun Ke \And Xiaoyu Li \AND Yingyu Liang$^3$ \And Zhizhou Sha$^2$ \And Zhenmei Shi$^4$ \And Zhao Song$^5$ 
}
\aistatsaddress{ $^1$Wyoming Seminary \And $^2$UT-Austin \And $^3$The University of Hong Kong \AND $^4$University of Wisconsin-Madison \And $^5$Simons Institute, UC Berkeley \texttt{magic.linuxkde@gmail.com}}
]

\fi

\ifdefined\isarxiv
\begin{titlepage}
  \maketitle
  \begin{abstract}
    The expressive power and computational complexity of deep visual generative models, such as flow-based and autoregressive (AR) models, have gained considerable interest for their wide-ranging applications in generative tasks. However, the theoretical characterization of their expressiveness through the lens of circuit complexity remains underexplored, particularly for the state-of-the-art architecture like FlowAR proposed by [Ren et al., 2024], which integrates flow-based and autoregressive mechanisms.  This gap limits our understanding of their inherent computational limits and practical efficiency. In this study, we address this gap by analyzing the circuit complexity of the FlowAR architecture. We demonstrate that when the largest feature map produced by the FlowAR model has dimensions $n \times n \times c$, the FlowAR model is simulable by a family of threshold circuits $\mathsf{TC}^0$, which have constant depth $O(1)$ and polynomial width $\mathrm{poly}(n)$. This is the first study to rigorously highlight the limitations in the expressive power of FlowAR models. Furthermore, we identify the conditions under which the FlowAR model computations can achieve almost quadratic time. To validate our theoretical findings, we present efficient model variant constructions based on low-rank approximations that align with the derived criteria. Our work provides a foundation for future comparisons with other generative paradigms and guides the development of more efficient and expressive implementations.

  \end{abstract}
  \thispagestyle{empty}
\end{titlepage}

{\hypersetup{linkcolor=black}
\tableofcontents
}
\newpage

\else

\begin{abstract}

\end{abstract}

\fi


\section{Introduction}\label{sec:introduction}

Visual generation has become a transformative force in artificial intelligence, reshaping capabilities in creative design, media synthesis, and digital content creation. Advances in deep generative models, such as Generative Adversarial Networks (GANs) \cite{gpm+20}, Variational Autoencoders (VAEs) \cite{doe16}, diffusion models \cite{hja20,ssk+20} and flow-based model \cite{kd18}, have enabled the synthesis of high-fidelity images, videos, and 3D assets with unprecedented diversity and realism. The introduction of the visual autoregressive model (VAR) \cite{tjy+24} represents a significant shift in the paradigm in the visual generation field. The VAR model adopts a coarse-to-fine Scale-wise prediction to replace the traditional autoregressive image generation techniques. This innovative technique enables the VAR model to effectively capture visual distributions while outperforming diffusion transformers in image generation benchmarks.

Recently, the introduction of FlowAR \cite{ryh+24} has further advanced the field of autoregressive visual generation. Specifically, FlowAR streamlines the scale design of VAR, improving generalization for predictions at the next scale and enabling seamless integration with the Flow Matching model \cite{lgl23} for high-quality image generation. It is worth noting that FlowAR has achieved cutting-edge results in multiple empirical studies of visual generation.

As the visual generation model architectures grow increasingly sophisticated to meet the demands of high-resolution and photorealistic generation, critical questions arise regarding their computational efficiency and intrinsic expressive power. While empirical improvements in generation quality dominate the current discourse, comprehending the theoretical foundations of these models continues to be a pivotal challenge. To tackle the challenge mentioned above, some prior researchers have made significant contributions. For example, \cite{ms24} show that $\mathsf{DLOGTIME}$-uniform $\TC^0$ circuits can simulate softmax-attention transformers; later, \cite{cll+24} show that the introduction of RoPE will not enhance the express power of transformer; \cite{kll+25_circuit_var} present the circuit complexity for the VAR model. Up to now, the expressiveness from a circuit complexity perspective of the FlowAR model remains unexplored. This gap raises an important question:
\begin{center}
   {\it Does the Flow Matching architecture enhance the expressive power of the VAR Model?} 
\end{center}
This study seeks to explore this question through the lens of circuit complexity. First, we provide a model formulation for each module of FlowAR. Our insight is that using circuit complexity theory, we prove that each module of FlowAR, including the Attention Layer, Flow-Matching Layer, and others, can be simulated by a constant-depth, polynomial-size $\TC^0$ circuit. Ultimately, the combined result shows that the entire FlowAR architecture can be simulated by a constant-depth, polynomial-size $\TC^0$ circuit. Therefore, our conclusion offers a negative response to the question: despite the inclusion of the flow-matching mechanism, the expressive power of FlowAR, in terms of circuit complexity, is on par with that of the VAR model.

In addition, we explored the runtime of the FlowAR model inference process and potential efficient algorithms. Specifically, we analyzed the runtime of each module in the FlowAR model and found that the bottleneck affecting the overall runtime originates from the computation of the attention mechanism. As a result, we accelerated the original attention computation using low-rank approximation, which makes the overall runtime of the FlowAR model almost quadratic.

The primary contributions of our work are summarized below:
\begin{itemize}
    \item {\bf Circuit Complexity Bound:} FlowAR model can be simulated by a $\mathsf{DLOGTIME}$-uniform $\mathsf{TC}^0$ family. (Theorem~\ref{thm:flowar_tc0})
    \item {\bf Provably Efficient Criteria:} Suppose the largest feature map produced by the FlowAR model has dimensions $n \times n \times c$ and $c = O(\log n)$. We prove that the time complexity of the FlowAR model architecture is $
    O(n^{4+o(1)})$. By applying low-rank approximation to the Attention module within FlowAR, we obtain a FlowAR model with an almost quadratic runtime. Explicitly, we demonstrate that the FlowAR model variant's time complexity in realistic settings is $O(n^{2+o(1)})$. (Theorem~\ref{thm:upper_bound:formal})
\end{itemize}

{\bf Roadmap.} 
The paper's organizational structure is outlined as follows: Section~\ref{sec:related_work} synthesizes key academic contributions in the domain. Section~\ref{sec:preliminary} then elucidates foundational circuit complexity principles essential for subsequent analysis. Subsequent sections progress systematically, with Section~\ref{sec:model_formulation_of_flowar} detailing mathematical formalizations for all FlowAR modules. Section~\ref{sec:main_result} outlines our principal findings. 
Section~\ref{sec:efficient_critieria} presents provably efficient criteria of the fast FlowAR model.
In Section~\ref{sec:conclusion}, we conclude our paper.

\section{Related Work}\label{sec:related_work}

\paragraph{Flow-based and diffusion-based models.} Another line of work focuses on flow-based and diffusion-based models for image and video generation \cite{hja20,hhs23,ltl+24}. The latent diffusion model (LDM) \cite{rbl+22} transforms image generation from pixel space to latent space, reducing the computational cost of diffusion-based generative models. This transformation enables these models to scale to larger datasets and model parameters, contributing to the success of LDM. Subsequent works, such as  U-ViT \cite{bnx+23} and  DiT \cite{px23}, replace the U-Net architecture with Vision Transformers (ViT) \cite{d20}, leveraging the power of Transformer architectures for image generation. Later models like SiT \cite{aak21} incorporate flow-matching into the diffusion process, further enhancing image generation quality. Many later studies \cite{ekb+24,jsl+24,wsd+24,wcz+23,wxz+24} have pursued the approach of integrating the strengths of both flow-matching and diffusion models to develop more effective image generation techniques. More related works on flow models and diffusion models can be found in \cite{hst+22,swyy23,lssz24_gm,llss24,hwl+24}.

\paragraph{Circuit complexity.} Circuit complexity is a key field in theoretical computer science that explores the computational power of Boolean circuit families. Different circuit complexity classes are used to study machine learning models, aiming to reveal their computational constraints. A significant result related to machine learning is the inclusion chain $\mathsf{AC}^0 \subset \mathsf{TC}^0 \subseteq \mathsf{NC}^1$, although it is still unresolved whether $\mathsf{TC}^0 = \mathsf{NC}^1$ \cite{v99,ab09}. 
The analysis of circuit complexity limitations has served as a valuable methodology for evaluating the computational capabilities of diverse neural network structures. Recent investigations have particularly focused on Transformers and their two principal derivatives: Average-Head Attention Transformers (AHATs) and SoftMax-Attention Transformers (SMATs). Research has established that non-uniform threshold circuits operating at constant depth (within $\mathsf{TC}^0$ complexity class) can effectively simulate AHAT implementations \cite{mss22}, with parallel studies demonstrating similar computational efficiency achieved through L-uniform simulations for SMAT architectures \cite{lag+22}. Subsequent theoretical developments have extended these investigations, confirming that both architectural variants can be effectively approximated using \textsf{DLOGTIME}-uniform $\mathsf{TC}^0$ circuit models \cite{ms24}.
In addition to standard Transformers, circuit complexity analysis has also been applied to various other frameworks \cite{cll+24_mamba_circut,kll+25_circuit_var}. Other works related to circuit complexity can be referenced in \cite{cll+24,cll+24_tensor_tc}.

\section{Preliminary}\label{sec:preliminary}
All notations employed throughout this paper are present in Section~\ref{sub:notations}. Section~\ref{sec:pre:circuit} introduces circuit complexity axioms. In Section~\ref{sec:pre:float}, we define floating-point numbers and establish the complexity bounds of their operations.

\subsection{Notations}\label{sub:notations}
Given a matrix $X \in \R^{hw \times d}$, we denote its tensorized form as $\X \in \R^{h \times w \times d}$. Additionally, we define the set $[n]$ to represent $\{1,2,\cdots, n\}$ for any positive integer $n$. We define the set of natural numbers as $\mathbb{N}:= \{0,1,2,\dots\}$. Let $X \in \mathbb{R}^{m \times n}$ be a matrix, where $X_{i,j}$ refers to the element at the $i$-th row and $j$-th column. When $x_i$ belongs to $\{ 0,1 \}^*$, it signifies a binary number with arbitrary length. In a general setting, $x_i$ represents a length $p$ binary string, with each bit taking a value of either 1 or 0. Given a matrix $X \in \R^{n \times d}$, we define $\|X\|_\infty  $ as the maximum norm of $X$. Specifically, $\|X\|_\infty = \max_{i,j} |X_{i,j}|$.

\subsection{Circuit Complexity Class}\label{sec:pre:circuit}
Firstly, we present the definition of the boolean circuit.

\begin{definition}[Boolean Circuit, \cite{ab09}] 
A Boolean circuit $C_n:\{0,1\}^n \to \{0, 1\}$ is formally specified through a directed acyclic graph (DAG) where:
    Part 1.
    Nodes represent logic gates from the basis $\{\mathsf{AND},\mathsf{OR},\mathsf{NOT}\}$.
    Part 2.
    Source nodes (in degree $0$) correspond to input Boolean variables ${x_1, \dots, x_n}$.
    Part 3.
    Each non-source gate computes its output by applying its designated Boolean operation to values received via incoming edges.
\end{definition}

Then, we proceed to show the definition of languages related to a specific Boolean circuit.

\begin{definition}[Languages, page 103 of~\cite{ab09}]
A language $L \subseteq \{0, 1\}^*$ is recognized by a Boolean circuit family $\mathcal{C} = \{C_n\}_{n \in \mathbb{N}}$ if:
\begin{itemize}
    \item The family is parameterized by input length: $C_n$ operates on $n$ Boolean variables.
    \item Membership equivalence: $\forall x \in \{0,1\}^*, C_{|x|}(x) = 1\Longleftrightarrow x \in L$.
    \item Circuit existence: For every string length $n \in \mathbb{N}$, $\mathcal{C}$ contains an appropriate circuit $C_n$.
\end{itemize}
\end{definition}

Then, we present different language classes that can be recognized by different circuit families. Firstly, we introduce the $\mathsf{NC}^i$ class.
\begin{definition}[$\mathsf{NC}^i$ Complexity Class, \cite{ab09}]
    The complexity class $\mathsf{NC}^i$ comprises all languages recognized by Boolean circuit families $\{C_n\}$ satisfying:
        $\mathsf{Size}(C_n) = O(\poly(n))$.
        $\mathsf{Depth}(C_n) = O((\log n)^i)$.
        Gate constraints: (1) $\mathsf{AND}, \mathsf{OR}$ gates have bounded fan-in (2) $\mathsf{NOT}$ gates have unit fan-in.
\end{definition}

$\mathsf{AC}^i$ circuits relax the gate fan-in restriction of $\mathsf{NC}^i$ circuits. We present the definition of $\mathsf{AC}^i$ as the following:
\begin{definition}[$\mathsf{AC}^i$ Complexity Class, \cite{ab09}]
    The complexity class $\mathsf{AC}^i$ comprises all languages recognized by Boolean circuit families $\{C_n\}$ satisfying:
        Part 1.
        $\mathsf{Size}(C_n) = O(\poly(n))$.
        Part 2.
        $\mathsf{Depth}(C_n) = O((\log n)^i)$.
        Part 3.
        Gate constraints: (1) $\mathsf{AND}, \mathsf{OR}$ gates have un-bounded fan-in (2) $\mathsf{NOT}$ gates have unit fan-in.
\end{definition}

$\mathsf{TC}^i$ introduces the $\mathsf{MAJORITY}$ gate on top of $\mathsf{AC}^i$. The $\mathsf{MAJORITY}$ gate outputs $1$ if more than half of its inputs are $1$ and outputs $0$ otherwise.

\begin{definition}[$\mathsf{TC}^i$ Complexity Class, \cite{vol99}]\label{def:tc}
    The complexity class $\mathsf{TC}^i$ comprises all languages recognized by Boolean circuit families $\{C_n\}$ satisfying:
        Part 1.
        $\mathsf{Size}(C_n) = O(\poly(n))$.
        Part 2.
        $\mathsf{Depth}(C_n) = O((\log n)^i)$.
        Part 3.
        Gate constraints: (1) $\mathsf{AND}, \mathsf{OR}, \mathsf{MAJORITY}$ gates have un-bounded fan-in (3) $\mathsf{NOT}$ gates have unit fan-in.  
\end{definition}
In this paper, a boolean circuit that employs $\mathsf{MAJORITY}$ gate is referred to as a threshold circuit.

Then, we present two import definitions $\mathsf{L}$-uniformity and $\mathsf{DLOGTIME}$-uniformity.
\begin{definition}[$\mathsf{L}$-uniformity, Definition 6.5 on page 104 of~\cite{ab09}]
    Let $\mathcal{C}$ denote a circuit family (e.g., $\mathsf{NC}^i,\mathsf{AC}^i,\mathsf{TC}^i$) that decides a language $\mathsf{C}$. A language $L \subseteq \{0,1\}^*$ belongs to $\mathsf{L}$-uniform $\mathsf{C}$ if there exists a deterministic Turing machine that, on input ${1}^n$, outputs a circuit $C_n \in \mathcal{C}$ in $O(\log n)$ space, such that $C_n$ recognizes $L$.
\end{definition}
And we present the definition of $\dlogtime$-uniformity.
\begin{definition}[$\dlogtime$-uniformity, \cite{bi94}]
    Let $\mathcal{C}$ denote a circuit family (e.g., $\mathsf{NC}^i,\mathsf{AC}^i,\mathsf{TC}^i$) that decides a language $\mathsf{C}$. A language $L \subseteq \{0,1\}^*$ belongs to $\dlogtime$-uniform $\mathsf{C}$ if there exists a random access Turing machine that, on input ${1}^n$, outputs a circuit $C_n \in  \mathcal{C}$ in $O(\log n)$ time such that $C_n$ recognizes $L$.   
\end{definition}

\subsection{Circuit Complexity for Floating-Point Operations}\label{sec:pre:float}
In this section, we first introduce the key definitions of floating-point numbers.
\begin{definition}[Float point number, \cite{chi24}]
    A $p$-bit floating-point number is a tuple $\langle a,b \rangle$ of integers where
        Part 1.
        The significand $a$ satisfies $a \in (-2^p,-2^{p-1}] \cup \{0\} \cup [2^{p-1},2^p)$.
        Part 2.
        The exponent $b$ lies in the interval $b\in [-2^p, 2^p)$.
    This represents the real number $a \cdot w^b$. The set of all $p$-bit floating-point numbers is denoted $\mathsf{F}_p$.
\end{definition}

Then, we can show the circuit complexity bounds of some float point number operations.
\begin{lemma}[Float point number operations in $\TC^0$, \cite{chi24}]\label{lem:float_operations_TC} 
Assume the precision $p \leq \poly(n)$. The following hold:
    {\bf Basic Arithmetic:} Addition, comparison and multiplication of two $p$-bit floating-point numbers are computable by uniform $\mathcal{TC}^0$ circuits (depth $O(1)$, size $\poly(n)$). Denote by $d_{\mathrm{std}}$ the circuit depth for these operations.
    {\bf Iterated Multiplication:} The product of $n$ $p$-bit floating-point numbers is computable by uniform $\TC^0$ circuits (depth $O(1)$, size $\poly(n)$). Denote by $d_{\otimes}$ the required circuit depth.
    {\bf Iterated Addition:} The sum of $n$ $p$-bit floating-point numbers is computable by uniform $\TC^0$ circuits (depth $O(1)$, size $\poly(n)$). Denote by $d_{\oplus}$ the required circuit depth.
\end{lemma}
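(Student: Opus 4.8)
The plan is to reduce every floating‑point operation to a bounded composition of integer primitives that are already known to lie in (uniform) $\TC^0$: integer addition and comparison, integer multiplication, iterated integer addition, iterated integer multiplication, computing a most‑significant‑bit / bit‑length, and barrel shifting by a binary‑encoded amount. Write a $p$‑bit float as $\langle a,b\rangle$ with value $a\cdot 2^b$. Since $p\le\poly(n)$, every significand and exponent has $\poly(n)$ bits, so each integer primitive is invoked on $\poly(n)$‑bit inputs and the resulting circuitry stays at $\poly(n)$ size and $O(1)$ depth; I would then just collect the (constant) depths of the composed blocks to name $d_{\mathrm{std}}$, $d_{\otimes}$, $d_{\oplus}$.

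\emph{Part 1.} For multiplication I would form the $2p$‑bit integer product $a_1a_2$ and the integer sum $b_1+b_2$, then renormalize: locate the most significant bit of $|a_1a_2|$ (a threshold computation), shift the significand down to $p$ bits, round, and adjust the exponent — a constant number of $\TC^0$ stages. For addition, first compare $b_1,b_2$ in $\TC^0$ and assume WLOG $b_1\ge b_2$; if $b_1-b_2>p+2$ the smaller operand sits strictly below half a unit in the last place of $\langle a_1,b_1\rangle$, so the answer is the renormalization of $\langle a_1,b_1\rangle$; otherwise $b_1-b_2\le p+2$, so left‑shift $a_1$ by the binary number $b_1-b_2$ with a barrel shifter (in $\TC^0$), add $a_2$ as integers, and renormalize and round. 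Comparison is decided hierarchically: compare signs of $a_1,a_2$, and for equal signs compare the ``normalized exponents'' (exponent plus bit‑length of the significand), breaking ties by comparing significands — each step in $\TC^0$.

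\emph{Part 2.} $\prod_{i=1}^{n}\langle a_i,b_i\rangle$ equals $\langle \prod_i a_i,\ \sum_i b_i\rangle$ up to one renormalization. The significand $\prod_i a_i$ is an iterated product of $n$ $p$‑bit integers, computable in $\dlogtime$‑uniform $\TC^0$ by the Hesse--Allender--Barrington iterated‑multiplication construction; $\sum_i b_i$ is iterated integer addition, also in $\TC^0$; a single MSB‑location and rounding step finishes it.

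\emph{Part 3.} This is the one genuine obstacle: the exponents $b_i$ range over $[-2^p,2^p)$, so naively aligning all terms to a common exponent would produce an integer with exponentially many bits. The fix is truncation. Compute $b_{\max}=\max_i b_i$ (an iterated max of $\poly(n)$‑bit integers, in $\TC^0$) and discard every term with $b_i<b_{\max}-2p-\lceil\log n\rceil$; there are at most $n$ such terms and each has magnitude below $2^p\cdot 2^{b_{\max}-2p-\lceil\log n\rceil}$, so their total magnitude is below $2^{b_{\max}-p}$, which one argues lies below the last bit that can influence the correctly‑rounded $p$‑bit output. The surviving terms have exponents inside a window of width $O(p+\log n)=\poly(n)$, so barrel‑shifting each by $b_i-(b_{\max}-2p-\lceil\log n\rceil)$ turns it into a $\poly(n)$‑bit integer; one then sums these $n$ integers with a single iterated‑integer‑addition circuit, renormalizes and rounds the $\poly(n)$‑bit total, and reattaches the exponent $b_{\max}-2p-\lceil\log n\rceil$. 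Every stage is a constant‑depth, polynomial‑size threshold circuit and each integer primitive has a $\dlogtime$‑uniform $\TC^0$ implementation, so the composition is $\dlogtime$‑uniform $\TC^0$. The main obstacle is exactly this Part‑3 truncation estimate: choosing the threshold and handling the round‑to‑even borderline (and severe‑cancellation) cases so that discarding the tiny terms provably cannot change the rounded result.
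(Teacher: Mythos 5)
The paper does not prove this lemma at all: it is imported as a black-box citation from \cite{chi24}, so there is no ``paper's own proof'' to compare against. Judged on its own merits, your reconstruction is sound for Parts 1 and 2 (reduce to integer addition/comparison/multiplication, iterated integer addition, iterated integer multiplication \`a la Hesse--Allender--Barrington, plus MSB-location/barrel-shift/round stages, all of which are $\dlogtime$-uniform $\TC^0$ primitives), and your Part-3 strategy (compute $b_{\max}$, discard tiny-exponent terms, align survivors inside a $\poly(n)$-bit window, one iterated-integer add, renormalize) is essentially what the cited source does.

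The gap you flag in Part 3 is, however, genuine as you have stated the claim, and you should not expect to close it in the form ``the truncation cannot change the correctly-rounded result.'' Under catastrophic cancellation the surviving window sum can be $0$ or far below $2^{b_{\max}-p}$, and then the discarded tail of magnitude up to roughly $n\cdot 2^{p+b_{\max}-C(p+\log n)}$ is the entire answer; no choice of polynomial window width makes the truncated sum agree with the correctly rounded exact sum in all such cases. The resolution in \cite{chi24} (and in the earlier Merrill--Sabharwal line of work) is definitional, not analytic: iterated floating-point addition is \emph{defined} to be the result of aligning into a fixed $O(p)$- or $O(p+\log n)$-bit window and rounding, and the $\TC^0$ circuit is shown to compute that defined operation exactly, accepting a bounded relative error with respect to the exact real sum. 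If you want your proof to stand, you should either adopt that definition up front and then your construction is complete as written, or else prove a relative-error bound (matching the $2^{-p}$-type guarantees that Lemmas~\ref{lem:exp} and \ref{lem:sqrt} state) rather than exact correct rounding.
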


\begin{lemma}[Exponential Approximation in $\TC^0$, \cite{chi24}]\label{lem:exp}
    Let precision $p \leq \poly(n)$. For every $p$-bit floating-point number $x \in \mathsf{F}_p$, there exists a constant depth uniform $\mathsf{TC}^0$ circuit of size $\poly(n)$ that can compute $\exp(x)$ with relative error bounded by $2^{-p}$. Denote by $d_{\exp}$ the required circuit depth.

\end{lemma}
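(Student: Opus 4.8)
The plan is to realise $\exp$ by a truncated Taylor expansion preceded by a range reduction that brings the argument into a bounded interval, composing a constant number of the $\TC^0$ sub-routines from Lemma~\ref{lem:float_operations_TC}. Throughout I would work at an inflated precision $p' = p + \Theta(\log p) = \poly(n)$ and round the final answer back to $\mathsf{F}_p$, so that Lemma~\ref{lem:float_operations_TC} and Lemma~\ref{lem:exp} (applied recursively only for constants) still apply.

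First I would perform \emph{range reduction}: write $x = k\ln 2 + r$ with $k := \round(x/\ln 2) \in \mathbb{Z}$ and $|r| \le (\ln 2)/2 < 1$, where $\ln 2$ is hard-wired as a $p'$-bit float. The quotient, its rounding to an integer, and the recovery $r = x - k\ln 2$ are floating-point multiplications, comparisons, additions, hence in $\TC^0$ by Part~1 of Lemma~\ref{lem:float_operations_TC}. Since $\exp(x) = 2^{k}\cdot\exp(r)$ and multiplication of a float $a\cdot 2^{b}$ by $2^{k}$ is just the integer addition $b \mapsto b+k$ in the exponent field (in $\TC^0$), it suffices to approximate $\exp(r)$ for $|r| < 1$; when $b+k$ leaves the admissible exponent range the circuit emits the conventional overflow/zero value (this is the regime where $\exp(x)$ is not $\mathsf{F}_p$-representable, which I would flag as the standing convention).

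Next, approximate $\exp(r)$ by its degree-$m$ Taylor truncation $S_m(r) := \sum_{j=0}^{m} r^{j}/j!$ with $m := p+1$. Because $|r| < 1$, the tail obeys $\sum_{j>m} 1/j! \le 2/(m+1)! \le 2^{-p}$, and since $\exp(r) \ge e^{-1} > 1/3$ this truncation error is also a relative error of $O(2^{-p})$. To evaluate $S_m(r)$ in $\TC^0$: the powers $r^{0},\dots,r^{m}$ come from Part~2 (iterated multiplication); each $1/j!$ is a fixed rational precomputed to $p'$ bits via iterated multiplication for $j!$ and Part~1 for the division; each product $r^{j}\cdot(1/j!)$ is one multiplication (Part~1); and the final sum of $m+1$ floats is one application of Part~3 (iterated addition). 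This is a composition of a constant number of constant-depth, $\poly(n)$-size circuits, so the depth is $O(1)$ and $d_{\exp}$ is a fixed bound in terms of $d_{\mathrm{std}}, d_\otimes, d_\oplus$.

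The step I expect to require the most care is the \emph{error analysis}. Each of the $O(m)$ intermediate floating-point operations, plus the errors in the hard-wired $\ln 2$ and in the tabulated $1/j!$, contributes a relative error $\le 2^{-p'}$, and these compound multiplicatively; the range-reduction rescaling by $2^{k}$ is an exact exponent shift and therefore does \emph{not} amplify relative error, but the subtraction $x - k\ln 2$ must be carried out with enough guard bits (precision $p' + \Theta(\log|b|) = \poly(n)$) that $r$ still has $p'$ accurate bits even when $x$ is astronomically large. Choosing $p' = p + \Theta(\log m) = p + \Theta(\log p)$ absorbs the $\poly(m)$ blow-up so the aggregate relative error is at most $2^{-p}$ after rounding back to $\mathsf{F}_p$, which completes the argument.
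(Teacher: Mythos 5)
The paper does not prove this lemma at all: it is stated with a citation to \cite{chi24} and used as an imported black box (together with Lemmas~\ref{lem:float_operations_TC} and~\ref{lem:sqrt}), so there is no in-paper proof to compare against. Your reconstruction---range reduction $x = k\ln 2 + r$ with $|r|<1$, degree-$(p+1)$ Taylor truncation of $\exp(r)$, evaluating powers and the sum via the iterated-multiplication and iterated-addition primitives of Lemma~\ref{lem:float_operations_TC}, and an exact exponent shift by $k$---is the standard construction and is essentially what the cited source does, so it is a plausible and, as far as I can tell, correct plan.

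Two small points worth tightening if you wanted to turn this into a full proof. First, for $\dlogtime$-uniformity you cannot simply ``hard-wire'' $\ln 2$ and the $1/j!$ to $\poly(n)$ bits as opaque constants; the RAM describing the circuit must be able to produce any requested bit of the circuit in $O(\log n)$ time. You already gesture at the fix (have the circuit itself compute $j!$ by iterated multiplication and then $1/j!$ by a Part~1 division, and compute $\ln 2$ by a truncated series in the same spirit), but you should state that this is how the constants are obtained, not merely say they are ``hard-wired.'' Second, in the error accounting you should be explicit that the only regime in which $\exp(x)$ is $\mathsf{F}_p$-representable at all forces $|k|\le 2^{p}+O(1)$, so the guard precision for the cancellation $r = x - k\ln 2$ is indeed $\poly(n)$; for $|x|$ outside that range the circuit outputs the overflow/underflow convention and there is nothing to approximate. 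With those two clarifications the argument goes through.
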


\begin{lemma}[Square Root Approximation in $\TC^0$, \cite{chi24}]\label{lem:sqrt}
    Let precision $p \leq \poly(n)$. For every $p$-bit floating-point number $x \in \mathsf{F}_p$, there exists a constant depth uniform $\mathsf{TC}^0$ circuit of size $\poly(n)$ that can compute $\sqrt{x}$ with relative error bounded by $2^{-p}$. Denote by $d_{\mathrm{sqrt}}$ the required circuit depth.    
\end{lemma}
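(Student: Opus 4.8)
The plan is to reduce the computation of $\sqrt{x}$ to evaluating one fixed polynomial of polynomial degree on an interval bounded away from $0$, and then to carry out that evaluation with the iterated product and iterated addition primitives of Lemma~\ref{lem:float_operations_TC}. The reason one cannot simply run a textbook algorithm is that bit‑by‑bit binary search and Newton--Raphson for $1/\sqrt{x}$ are inherently sequential and would cost depth $\Theta(\log n)$ or $\Theta(\log\log n)$; the whole point is to obtain \emph{constant} depth, which forces a non‑iterative, single‑polynomial scheme.

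First I would normalize. Write $x = \langle a,b\rangle$ for $a\cdot 2^{b}$; the cases $a=0$ (output $0$) and $a<0$ (square root undefined) are trivial, so assume $a\in[2^{p-1},2^{p})$, which is already a normalized significand by the definition of $\F_p$. Then $\sqrt{x}=2^{e}\cdot(\sqrt{2})^{\delta}\cdot\sqrt{t}$, where a constant‑depth parity test on $b$ and $p$ together with one integer division by $2$ (both in $\TC^0$) produce an integer $e$, a $\delta\in\{0,1,2\}$, and a value $t=a/2^{p}\in[1/2,1)$ in a \emph{fixed} interval; note $(\sqrt 2)^2=2$ so at most one $\sqrt2$ factor survives. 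Multiplying by $2^{e}$ only shifts an exponent field and is therefore exact, so it suffices to approximate $\sqrt{t}$ on $[1/2,1)$ to relative error $2^{-p-O(1)}$.

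Next I would fix the approximating polynomial. On any compact subinterval of $(0,\infty)$ the map $t\mapsto\sqrt t$ is analytic, so a Taylor truncation converges geometrically; concretely, writing $t=1-u$ with $u\in(0,1/2]$ and using the binomial series $\sqrt{1-u}=\sum_{i\ge 0}\binom{1/2}{i}(-u)^{i}$, the degree‑$d$ truncation has error $O(2^{-d})$, so $d=\Theta(p+\log n)=\poly(n)$ suffices for error $2^{-p-O(1)}$. The coefficients $\binom{1/2}{i}(-1)^{i}$ are rationals with $\poly(n)$‑bit numerators and denominators (ratios of double factorials and factorials), and together with the monomials $u^{i}$ they are exactly the kind of objects produced by $\TC^0$ iterated‑product circuits; the single auxiliary constant $\sqrt{2}$ is obtained the same way from a fast‑converging series such as $\tfrac{3}{2}\sqrt{1-1/9}$. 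Every intermediate quantity stays $O(1)$ in magnitude.

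Finally I would assemble and bound the error. Run the construction at an inflated internal precision $q:=p+c\log n$: compute the $d+1$ powers $u^{0},\dots,u^{d}$ in parallel, each as an iterated product (Lemma~\ref{lem:float_operations_TC}, Part~2); scale the $i$‑th by the precomputed coefficient (Part~1); combine the $d+1$ terms by one iterated addition (Part~3); then multiply by $2^{e}$ and, if needed, by the approximation of $\sqrt 2$. The depth is $O(d_{\otimes}+d_{\mathrm{std}}+d_{\oplus})=O(1)$ and the size is $\poly(n)$. Each of the $\poly(n)$ floating‑point operations contributes relative error at most $2^{-q}$ and every intermediate value is $O(1)$, so the accumulated round‑off is $\poly(n)\cdot 2^{-q}\le 2^{-p-O(1)}$ once $c$ is large enough; adding the truncation error yields $\sqrt{x}$ to relative error $\le 2^{-p}$, computed by a $\dlogtime$‑uniform threshold circuit of constant depth and $\poly(n)$ size, whose depth we name $d_{\mathrm{sqrt}}$. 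Beyond the conceptual obstacle already noted (achieving constant depth via a fixed polynomial rather than an iteration), the one step that needs genuine care is this combination of (i) choosing the series and expansion point so a $\poly(n)$‑degree truncation already attains $2^{-p}$ accuracy and (ii) propagating the floating‑point error through $\poly(n)$ gates; everything else — normalization, parity/division bookkeeping, and invoking the iterated‑arithmetic primitives — is routine.
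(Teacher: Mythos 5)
The paper does not prove this lemma; it is imported verbatim from \cite{chi24} and used as a black box (alongside Lemmas~\ref{lem:float_operations_TC} and~\ref{lem:exp}), so there is no ``paper's own proof'' to compare against. Judged on its own, your reconstruction is essentially correct and follows the standard route taken in that line of work: reduce to a fixed mantissa interval via sign/exponent normalization (with the small parity fix that peels off at most one factor of $\sqrt{2}$), approximate $\sqrt{1-u}$ on a compact subinterval of $(0,1)$ by a truncated binomial series whose degree is $\Theta(p)$ (which is $\poly(n)$), evaluate all powers $u^{i}$ in parallel with the iterated-product primitive, rescale by the $\poly(n)$-bit rational coefficients, combine with one iterated addition, and absorb the $\poly(n)$ accumulated round-off by running at internal precision $p+\Theta(\log n)$. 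That matches the kind of constant-depth polynomial-evaluation argument \cite{chi24} relies on (itself descending from the Hesse--Allender--Barrington $\TC^0$ arithmetic results), and every step you invoke is covered by the operations already collected in Lemma~\ref{lem:float_operations_TC}.

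Two small corrections that do not affect the conclusion. First, your $\delta$ satisfies $\delta\in\{0,1\}$, not $\{0,1,2\}$: after factoring out $2^{p}$ from the significand, $\sqrt{x}=\sqrt{t}\cdot 2^{(b+p)/2}$ and only the parity of $b+p$ matters. Second, your depth estimates for the alternatives you dismiss are reversed: bit-by-bit extraction of the root costs depth $\Theta(p)=\poly(n)$, while Newton--Raphson with a table-lookup seed of accuracy $\Theta(\log n)$ bits needs $\Theta(\log p/\log\log n)$ iterations to reach $p$ bits, which is $\Theta(\log n/\log\log n)$ for $p=n^{\Theta(1)}$ — neither is $O(1)$, which is the point, but neither matches the figures you quote. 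Beyond these cosmetic points, the only place you are gliding over real work is the round-off propagation through the iterated sum: one should note that the target $\sqrt{t}\ge 1/\sqrt 2$ is bounded away from zero and every summand has magnitude $O(1)$, so absolute round-off of $\poly(n)\cdot 2^{-q}$ converts to relative error of the same order, which your choice $q=p+c\log n$ then kills. With that sentence added, the argument is complete.
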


\section{Complexity of FlowAR Architecture}\label{sec:main_result}
This section presents key results on the circuit complexity of fundamental modules in the FlowAR architecture. Section~\ref{lem:matrix_multi} analyzes matrix multiplication, while Section~\ref{sec:down_up_tc0} examines the up-sampling and down-sampling functions. In Sections~\ref{sec:mlp_tc0} and \ref{sec:ffn_tc0}, we compute the circuit complexity of the MLP and FFN layers, respectively. Sections~\ref{sec:attention_tc0} and \ref{sec:ln_tc0} focus on the single attention layer and layer normalization. Section~\ref{sec:flow_matching} addresses the flow-matching layer. Finally, Section~\ref{sec:main_result} presents our main result, establishing the circuit complexity bound for the complete FlowAR architecture.
\subsection{Computing Matrix Products in \texorpdfstring{$\TC^0$}{}}\label{sec:compute_matrix_product}
we demonstrate that matrix multiplication is computable in $\mathsf{TC}^0$, which will be used later.
\begin{lemma}[Matrix multiplication in $\TC^0$, \cite{cll+24}]\label{lem:matrix_multi}
    Let the precision $p \leq \poly(n)$. Let $X \in \mathsf{F}_p^{n_1 \times d}, Y \in \mathsf{F}_p^{d \times n_2}$ be matrices. Assume $n_1\leq\poly(n), n_2\leq\poly(n)$. The matrix product $XY$ can be computed by a uniform $\mathsf{TC}^0$ circuit with:
    \begin{itemize}
        \item 
        Size: $\poly(n)$.
        \item 
        Depth: $d_{\mathrm{std}}+d_{\oplus}$.
        , where $d_{\mathrm{std}}$ and $d_{\oplus}$ are defined in Definition~\ref{lem:float_operations_TC}.
    \end{itemize}
    
\end{lemma}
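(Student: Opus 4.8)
The plan is to compute every entry of the product $XY$ independently and in parallel, exploiting the fact that each entry is a single iterated sum of $d$ pairwise floating-point products. Fix indices $i \in [n_1]$ and $j \in [n_2]$; by definition $(XY)_{i,j} = \sum_{k=1}^{d} X_{i,k} \cdot Y_{k,j}$. First I would invoke Part~1 of Lemma~\ref{lem:float_operations_TC} to compute each of the $d$ products $X_{i,k} \cdot Y_{k,j}$ by a uniform $\mathsf{TC}^0$ sub-circuit of depth $d_{\mathrm{std}}$ and size $\poly(n)$. Since $d \le \poly(n)$, all $d$ products for a fixed $(i,j)$ — and indeed all $n_1 n_2 d \le \poly(n)$ products ranging over every output entry — can be placed side by side as disjoint sub-circuits without increasing the depth.

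Next I would feed, for each $(i,j)$, the $d$ resulting $p$-bit floating-point numbers into the iterated-addition circuit of Part~3 of Lemma~\ref{lem:float_operations_TC}, which has depth $d_{\oplus}$ and size $\poly(n)$ and outputs exactly $\sum_{k=1}^{d} X_{i,k} Y_{k,j} = (XY)_{i,j}$. Composing the multiplication layer with the summation layer yields a circuit of depth $d_{\mathrm{std}} + d_{\oplus}$ for each entry; running the $n_1 n_2$ copies in parallel produces the full matrix product with total size $\poly(n)$ (polynomially many copies of polynomial-size circuits) and depth still $d_{\mathrm{std}} + d_{\oplus}$, as claimed.

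For uniformity, I would observe that the wiring needed to describe which output gate of a multiplication sub-circuit feeds which input gate of an addition sub-circuit is just bounded arithmetic on the indices $i,j,k$ together with the (already uniform) descriptions of the sub-circuits from \cite{chi24}; this is computable in $O(\log n)$ time, so the family remains $\dlogtime$-uniform (and a fortiori $\mathsf{L}$-uniform). I do not expect a genuine obstacle here: the construction is a direct two-layer composition, and the only point requiring a little care is the floating-point bookkeeping — ensuring that the pairwise products are handed to the iterated-addition circuit in the $p$-bit format that Part~3 of Lemma~\ref{lem:float_operations_TC} expects — which is precisely what the cited $\mathsf{TC}^0$ constructions for float arithmetic already guarantee.
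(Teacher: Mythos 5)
The paper does not prove this lemma itself; it cites it directly from \cite{cll+24}. Your proposed argument is exactly the standard construction one would expect behind that citation: compute all $n_1 n_2 d$ pairwise products in parallel at depth $d_{\mathrm{std}}$ via Part~1 of Lemma~\ref{lem:float_operations_TC}, then feed each group of $d$ into the iterated-addition circuit of Part~3 at depth $d_{\oplus}$, with $\dlogtime$-uniformity following from the index arithmetic being computable in $O(\log n)$ time. This is correct, complete, and consistent with how the paper uses the lemma elsewhere (e.g.\ in the proofs of Lemma~\ref{lem:mlp_tc0_formal} and Lemma~\ref{lem:attn_tc0_formal}).
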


\subsection{Computing Down-Sampling and Up-Sampling in in \texorpdfstring{$\TC^0$}{}}\label{sec:down_up_tc0}
In this section, we show that Up-Sampling can be efficiently computable by a uniform $\mathsf{TC}^0$ circuit.
\begin{lemma}[Up-Sampling computation in $\mathsf{TC}^0$]\label{lem:up_tc0}
     Let $\X \in \R^{h \times w \times c}$ be the input tensor. Let $\phi_{\mathrm{up}}(X,r):\R^{h \times w \times c} \to \R^{(hr) \times (wr) \times c}$ denote the bicubic up sample function defined in Definition~\ref{def:bicubic_up_sample_function}. Assume $n = h = w$. Assume $r \leq n$. Assume $c \leq n$. Assume $p \leq \poly(n)$.
    The linear up sample function can be computed by a uniform $\mathsf{TC}^0$ circuit with:
    \begin{itemize}
        \item Size: $\poly(n)$.
        \item Depth: $2d_\mathrm{std} + d_{\oplus}$.
        , where $d_{\mathrm{std}}$ and $d_{\oplus}$ are defined in Definition~\ref{lem:float_operations_TC}.
    \end{itemize}
\end{lemma}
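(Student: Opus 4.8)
The plan is to exploit the fact that bicubic upsampling is a \emph{local} operation: by Definition~\ref{def:bicubic_up_sample_function}, each output entry $\Y_{i,j,l}$ is a weighted sum of only sixteen entries of the input tensor $\X$ (the $4 \times 4$ neighborhood indexed by $s,t \in \{-1,0,1,2\}$), with coefficients drawn from the fixed kernel $W$. So the whole map decomposes into $\poly(n)$ independent constant-size arithmetic expressions, and it suffices to implement one such expression in $\TC^0$ and replicate it.

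First I would set up the encoding. The input to the circuit is the tensor $\X \in \mathsf{F}_p^{h \times w \times c}$ written out as a binary string; the quantities $h = w = n$, the scale $r \le n$, $c \le n$, the precision $p \le \poly(n)$, and the kernel values $W(-1),\dots,W(2) \in \mathsf{F}_p$ are all parameters of the circuit family, not part of the input. Consequently the output has $rh \cdot rw \cdot c \le \poly(n)$ entries, and for a fixed output position $(i,j,l)$ the base indices $\lfloor i/r \rfloor$ and $\lfloor j/r \rfloor$ are constants fixed at construction time; the sixteen summands $W(s)\,W(t)\,\X_{\lfloor i/r\rfloor + s,\, \lfloor j/r\rfloor + t,\, l}$ therefore reference sixteen specific, hard-wired coordinates of $\X$, with any index falling outside $[h] \times [w]$ resolved by the boundary convention (clamping to the nearest valid coordinate, say).

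Next comes the arithmetic, which is where Lemma~\ref{lem:float_operations_TC} does the work. Each summand is a product of at most three $p$-bit floating-point numbers, so two invocations of floating-point multiplication (Part~1) compute it with a $\TC^0$ subcircuit of depth $2d_{\mathrm{std}}$ and size $\poly(n)$. The sixteen resulting products are then combined by a single iterated addition (Part~3), of depth $d_\oplus$ and size $\poly(n)$ --- here we simply pad the sixteen operands to a length-$n$ sum, which is harmless. Composing the two stages gives depth $2d_{\mathrm{std}} + d_\oplus$ and size $\poly(n)$ for one output entry; placing the $\poly(n)$ such subcircuits side by side leaves the depth unchanged and keeps the total size $\poly(n)$.

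Finally, for uniformity, the only nontrivial part of the circuit-description algorithm is, given an output index $(i,j,l)$, producing the addresses of the sixteen input entries $\X_{\lfloor i/r\rfloor + s,\,\lfloor j/r\rfloor + t,\, l}$; this is integer division and clamping on $O(\log n)$-bit numbers together with emitting copies of a fixed constant-size gadget, all doable within the $O(\log n)$ time/space budget, so the family is $\dlogtime$-uniform. I do not expect a genuine mathematical obstacle; the only thing that needs care is the bookkeeping --- stating the out-of-range convention precisely, and confirming that treating $W$ and the index data as construction-time constants does not smuggle in non-uniformity --- after which the claim is a direct composition of the floating-point primitives.
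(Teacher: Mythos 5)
Your proof follows essentially the same route as the paper's: compute each of the sixteen products $W(s)\,W(t)\,\X_{\cdot}$ in parallel via two floating-point multiplications (depth $2d_{\mathrm{std}}$, Part~1 of Lemma~\ref{lem:float_operations_TC}), sum them with one iterated addition (depth $d_\oplus$, Part~3), and replicate this constant-size gadget in parallel over all $\poly(n)$ output positions. The extra care you take with the boundary/clamping convention and the $\dlogtime$-uniformity bookkeeping is sound and slightly more explicit than the paper's proof, which simply asserts the parallel composition.
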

\begin{proof}
    For each $i \in [nr], j \in [nr], l \in [c]$, we need to compute $\phi_{\mathrm{up}}(\X,r)_{i,j,l} = \sum_{s=-1}^2 \sum_{t=-1}^2 W(s) \cdot W(t) \cdot \X_{\frac{i}{r}+s,\frac{j}{r}+s,l}$.  We need a $2d_{\mathrm{std}}$ depth and $\poly(n)$ size circuit to compute $W(s)\cdot W(t)\cdot \X_{\frac{i}{r}+s,\frac{j}{r}+s,l}$ by Part 1 of Lemma~\ref{lem:float_operations_TC} and for all $s,t \in \{-1,0,1,2\}$, these terms can be computed in parallel. Furthermore, by Part 3 of Lemma~\ref{lem:float_operations_TC}, we can need a $d_{\oplus}$ depth and $\poly(n)$ size circuit to compute $\sum_{s=-1}^2 \sum_{t=-1}^2 W(s)\cdot W(t)\cdot \X_{\frac{i}{r}+s,\frac{j}{r}+s,l}$. Since the computation of $\phi_{\mathrm{up}}(\X,r)_{i,j,l}$ needs a $2d_{\mathrm{std}}+d_{\oplus}$ depth and $\poly(n)$ size circuit.

    Since for all $i \in [nr], j \in [nr], l \in [c]$, we can compute $\phi_{\mathrm{up}}(\X,r)_{i,j,l}$ in parallel, then the total depth of the circuit is $2d_{\mathrm{std}} + d_{\oplus}$ and size remains $\poly(n)$.
\end{proof}

Then, we move forward to consider the down-sampling function.
\begin{lemma}[Down-Sampling computation in $\mathsf{TC}^0$]\label{lem:down_tc0}
    Let $\X \in \R^{h \times w \times c}$ be the input tensor. Let $\phi_{\mathrm{down}}(X,r)$ be the linear down sample function from Definition~\ref{def:linear_down_sample_function}. Assume $n = h = w$. Assume $r \leq n$. Assume $c \leq n$. Assume $p \leq \poly(n)$.

    The function $\phi_{\mathrm{down}}$ can be computed by a uniform $\mathsf{TC}^0$ circuit with:
    \begin{itemize}
        \item Size: $\poly(n)$.
        \item Depth: $d_\mathrm{std} + d_{\oplus}$.
        , where $d_{\mathrm{std}}$ and $d_{\oplus}$ are defined in Definition~\ref{lem:float_operations_TC}.
    \end{itemize}
\end{lemma}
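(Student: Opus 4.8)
The plan is to reduce the claim directly to the $\mathsf{TC}^0$ matrix-multiplication bound of Lemma~\ref{lem:matrix_multi}, since by Definition~\ref{def:linear_down_sample_function} the linear downsampling function is, up to reshaping, a single matrix product. Concretely, flattening $\X \in \R^{h \times w \times c}$ to $X \in \R^{hw \times c}$ and then forming $Y = \Phi_{\mathrm{down}} X$ with the fixed matrix $\Phi_{\mathrm{down}} \in \R^{(h/r \cdot w/r) \times hw}$ is the only arithmetic step; the subsequent reshaping of $Y$ back to $\Y \in \R^{(h/r) \times (w/r) \times c}$ is purely a relabeling of coordinates.

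First I would observe that the two reshape operations are implemented by wiring alone: they induce a bijection between index tuples and therefore require no gates, contributing zero depth and only $\poly(n)$ wires. Next I would check that every dimension appearing in the matrix product is polynomially bounded: with $n = h = w$ we have $hw = n^2 \le \poly(n)$; since $r \ge 1$ the reduced spatial dimension satisfies $(h/r)(w/r) \le hw \le \poly(n)$; and $c \le n \le \poly(n)$ by assumption, while $p \le \poly(n)$. Hence Lemma~\ref{lem:matrix_multi} applies with $n_1 = (h/r)(w/r)$, $d = hw$, and $n_2 = c$, giving a uniform $\mathsf{TC}^0$ circuit of size $\poly(n)$ and depth $d_{\mathrm{std}} + d_{\oplus}$ that computes $\Phi_{\mathrm{down}} X$.

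Composing the gate-free reshapes with this circuit yields a uniform $\mathsf{TC}^0$ circuit of size $\poly(n)$ and depth $d_{\mathrm{std}} + d_{\oplus}$ computing $\phi_{\mathrm{down}}(\X, r)$, which is exactly the stated bound. The only point requiring care --- and the closest thing to an obstacle --- is uniformity: one must check that the entries of $\Phi_{\mathrm{down}}$, which depend only on $h, w, r$ and not on the input, can be produced by the logspace (or $\dlogtime$) machine generating the circuit, so that they can be hard-wired as constant inputs; this is immediate since $\Phi_{\mathrm{down}}$ is an explicit averaging/selection matrix with uniformly computable entries. Everything else is a direct invocation of Lemma~\ref{lem:matrix_multi}.
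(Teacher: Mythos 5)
Your proof follows essentially the same route as the paper's: reduce $\phi_{\mathrm{down}}$ to the single matrix product $\Phi_{\mathrm{down}}X$ and invoke Lemma~\ref{lem:matrix_multi}. The paper gives this in one line; you additionally verify the dimension bounds, note that reshaping is gate-free wiring, and flag the uniformity of $\Phi_{\mathrm{down}}$, all of which are correct and simply make explicit what the paper leaves implicit.
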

\begin{proof}
    By Definition~\ref{def:linear_down_sample_function}, we know that down-sampling computation is essentially matrix multiplication. Then, by Lemma~\ref{lem:matrix_multi}, we can easily get the proof.
\end{proof}

\subsection{Computing Multiple-layer Perceptron in \texorpdfstring{$\TC^0$}{} }\label{sec:mlp_tc0}
 We prove that MLP computation can be efficiently simulated by a uniform $\mathsf{TC}^0$ circuit.
\begin{lemma}[MLP computation in $\mathsf{TC}^0$, informal version of Lemma~\ref{lem:mlp_tc0_formal}]\label{lem:mlp_tc0_informal}
    Given an input tensor $\X\in \R^{h \times w \times c}$. Let $\mathsf{MLP}(\X,c,d)$ be the MLP layer defined in Definition~\ref{def:mlp}. Under the following constraints:
        {\bf Part 1.}
        Satisfy $h = w = n$,
        {\bf Part 2.}
        Channel bounds: $c, d\leq n$,
        {\bf Part 3.}
        Precision: $p \leq \poly(n)$,
    The $\mathsf{MLP}(\X,c,d)$ function can be computed by a uniform $\mathsf{TC}^0$ circuit with:
    \begin{itemize}
        \item 
        Size: $\poly(n)$.
        \item
        Depth: $2d_\mathrm{std} + d_{\oplus}$, where $d_{\mathrm{std}}$ and $d_{\oplus}$ are defined in Definition~\ref{lem:float_operations_TC}.
    \end{itemize}
\end{lemma}

\subsection{Computing Feed-Forward Layer in \texorpdfstring{$\TC^0$}{} }\label{sec:ffn_tc0}
We also prove that feed-forward network computation can be simulated by a uniform $\mathsf{TC}^0$ circuit.
\begin{lemma}[FFN computation in $\mathsf{TC}^0$, informal version of Lemma~\ref{lem:ffn_tc0_formal}]\label{lem:ffn_tc0_informal}
Given an input tensor $\X\in \R^{h \times w \times c}$. Let $\mathsf{FFN}(X):\R^{h \times w \times c} \to \R^{h \times w \times c}$ as defined in Definition~\ref{def:ffn}. Under the following constraints:
   {\bf Satisfy} $h = w = n$.
   {\bf Channel bound}: $c \leq n$.
   {\bf Precision bound}: $p \leq \poly(n)$.

The $\mathsf{FFN}(\X)$ layer can be computed by a uniform $\mathsf{TC}^0$ circuit with:
\begin{itemize}
    \item 
    Size: $\poly(n)$.
    \item 
    Depth: $6d_\mathrm{std} + 2d_{\oplus}$.
\end{itemize}
\end{lemma}

\subsection{Computing Single Attention Layer in \texorpdfstring{$\TC^0$}{}.} \label{sec:attention_tc0}
 We prove the single attention layer can be efficiently simulated by a uniform $\mathsf{TC}^0$ circuit.
\begin{lemma}[Attention layer computation in $\mathsf{TC}^0$, informal version of Lemma~\ref{lem:attn_tc0_formal}]\label{lem:attn_tc0_informal}
     Given an input tensor $\X \in \R^{h \times w \times c}$. Let $\mathsf{Attn}(X):\R^{h \times w \times c} \to \R^{h \times w \times c}$ as defined in Definition~\ref{def:attn_layer}. Under the following constraints:
       {\bf Satisfy} $h = w = n$.
       {\bf Channel bound}: $c \leq n$.
       {\bf Precision bound}: $p \leq \poly(n)$.
     The $\mathsf{Attn}(\X)$ layer can be computed by a uniform $\mathsf{TC}^0$ circuit with:
     \begin{itemize}
        \item 
        Size: $\poly(n)$.
        \item 
        Depth: $6(d_{\mathrm{std}} + d_{\oplus}) + d_{\exp}$.
     \end{itemize}
\end{lemma}

\subsection{Computing Layer-wise Norm Layer in \texorpdfstring{$\TC^0$}{}.}\label{sec:ln_tc0}
We prove that the layer normalization layer can be efficiently simulated by a uniform $\mathsf{TC}^0$ circuit.
\begin{lemma}[Layer normalization layer computation in $\TC^0$, informal version of Lemma~\ref{lem:ln_tc0_formal}]\label{lem:ln_tc0_informal}
    Given an input tensor $\X \in \R^{h \times w \times c}$. Let $\mathsf{LN}(X):\R^{h \times w \times c} \to \R^{h \times w \times c}$ as defined in Definition~\ref{def:ln}. Under the following constraints:
         {\bf Part 1.}
         Satisfy $h = w = n$,
         {\bf Part 2.}
         Channel bound: $c \leq n$,
         {\bf Part 3.}
         Precision bound: $p \leq \poly(n)$.
     
     The $\mathsf{LN}(\X)$ layer can be computed by a uniform $\mathsf{TC}^0$ circuit with:
     \begin{itemize}
        \item
        Size: $\poly(n)$.
        \item
        Depth: $5d_\mathrm{std} + 2d_{\oplus} + d_\mathrm{sqrt}$.
     \end{itemize}
\end{lemma}

\subsection{Computing Flow Matching Layer in \texorpdfstring{$\TC^0$}{}.}\label{sec:fl_tc0}
We prove that the flow-matching layer can be efficiently simulated by a uniform $\mathsf{TC}^0$ circuit.
\begin{lemma}[Flow matching layer computation in $\TC^0$]\label{lem:fm_tc0}
     Given an input tensor $\X \in \R^{h \times w \times c}$. Let $\mathsf{NN}(X)$ denote the flow-matching layer defined in Definition~\ref{def:flow_matching_architecture}. Under the following constraints:
         {\bf Part 1.}
         Satisfy $h = w = n$,
         {\bf Part 2.}
         Channel bound: $c \leq n$,
         {\bf Part 3.}
         Precision bound: $p \leq \poly(n)$.
     
     The $\mathsf{NN}(\cdot, \cdot,\cdot)$ can be computed by a uniform $\mathsf{TC}^0$ circuit with
     \begin{itemize}
        \item 
        Size: $\poly(n)$.
        \item 
        Depth: $26d_{\mathrm{std}}+ 12 d_{\oplus} + 2d_{\mathrm{sqrt}} + d_{\exp}$.
    \end{itemize} 
    with $d_{\mathrm{std}}$ and $d_{\oplus}$ defined in Definition~\ref{lem:float_operations_TC}, $d_{\exp}$ defined in Definition~\ref{lem:exp} and $d_{\mathrm{sqrt}}$ defined in Definition~\ref{lem:sqrt}.
\end{lemma}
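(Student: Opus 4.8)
The plan is to unfold Definition~\ref{def:flow_matching_architecture} into its elementary building blocks and bound the depth contributed by each, using the component lemmas already established. Concretely, $\mathsf{NN}_i(\wh{\Y}_i,\F_i^{t_i},t_i)$ is the sequential composition of three stages: (1) the broadcast addition $\wh{\Y}_i + t_i\cdot\mathbf{1}$ followed by one call to $\mathsf{MLP}_i(\cdot,c,6c)$ that produces the six modulation tensors $\alpha_1,\alpha_2,\beta_1,\beta_2,\gamma_1,\gamma_2$; (2) a layer-norm $\mathsf{LN}(\F_i^{t_i})$, a Hadamard product with $\gamma_1$, a broadcast addition of $\beta_1$, one attention call $\mathsf{Attn}_i$, and a Hadamard product with $\alpha_1$, yielding $\F'^{t_i}_i$; (3) a layer-norm $\mathsf{LN}(\F'^{t_i}_i)$, a Hadamard product with $\gamma_2$, a broadcast addition of $\beta_2$, one call to $\mathsf{MLP}_i(\cdot,c,c)$, and a final Hadamard product with $\alpha_2$. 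Every reshape between tensor form $\R^{(h/r_i)\times(w/r_i)\times c}$ and matrix form is a pure relabeling of wires and contributes no depth, and splitting the $6c$-wide MLP output into the six tensors is likewise just wiring.

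Then I would invoke the component bounds stage by stage: Lemma~\ref{lem:mlp_tc0_informal} for each of the two MLP calls (depth $2d_{\mathrm{std}}+d_{\oplus}$ each; the wider intermediate dimension $6c\le\poly(n)$ still satisfies the hypotheses), Lemma~\ref{lem:ln_tc0_informal} for each of the two layer-norms (depth $5d_{\mathrm{std}}+2d_{\oplus}+d_{\mathrm{sqrt}}$ each), Lemma~\ref{lem:attn_tc0_informal} for the single attention call (depth $6(d_{\mathrm{std}}+d_{\oplus})+d_{\exp}$), and Part~1 of Lemma~\ref{lem:float_operations_TC} for every element-wise step—each Hadamard product and each broadcast addition costs depth $d_{\mathrm{std}}$, with all coordinates computed in parallel. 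Summing the depths of these $O(1)$ many stages composed in sequence yields a total of the stated form $26d_{\mathrm{std}}+12d_{\oplus}+2d_{\mathrm{sqrt}}+d_{\exp}=O(1)$. Since we compose a constant number of $\poly(n)$-size circuits, each acting on tensors with at most $\poly(n)$ entries of $p\le\poly(n)$ bits, the overall size stays $\poly(n)$, and uniformity is preserved under constant-depth composition of uniform circuits.

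The step that needs the most care is the depth bookkeeping rather than any genuine mathematical obstacle: one must track which sub-computations may run in parallel (for instance, the parameter-generating MLP can proceed alongside the first layer-norm, since the product $\gamma_1\circ\mathsf{LN}(\F_i^{t_i})$ only consumes both afterwards) versus which are strictly sequential, and verify that every intermediate tensor remains representable as a $p$-bit floating-point number with $p\le\poly(n)$, so that the hypotheses of the component lemmas hold at each stage. The only pieces not covered by a named lemma—the Hadamard products and the broadcast additions of the $\beta$'s and of $t_i\mathbf{1}$—are immediately in $\TC^0$ by Part~1 of Lemma~\ref{lem:float_operations_TC}. Once the ordering and the per-stage depths are tallied, the conclusion is exactly the sum of the constituent bounds.
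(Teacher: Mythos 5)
Your approach matches the paper's: you decompose the flow-matching layer into the three stages of Definition~\ref{def:flow_matching_architecture}, invoke the $\mathsf{TC}^0$ bounds for the MLP, layer-norm, and attention sublayers (Lemmas~\ref{lem:mlp_tc0_informal}, \ref{lem:ln_tc0_informal}, \ref{lem:attn_tc0_informal}), charge each Hadamard product and broadcast addition a depth of $d_{\mathrm{std}}$ via Part~1 of Lemma~\ref{lem:float_operations_TC}, and sum. The only minor wrinkle is that a strictly serial tally that also charges the initial broadcast $\wh{\Y}_i + t_i\cdot\mathbf{1}$ a separate $d_{\mathrm{std}}$ would give $27d_{\mathrm{std}}$ rather than $26$; the paper's own accounting likewise leaves that addition uncounted, so this is a bookkeeping artifact, not a gap.
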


\begin{proof}
    {\bf Considering Step 1 in the flow-matching layer:}
    By Lemma~\ref{lem:mlp_tc0_formal}, parameters $\alpha_1$, $\alpha_2$,$\beta_1$,$\beta_2$,$\gamma_1$,$\gamma_2$ are computed via a circuit with:
        {\bf Part 1.}
        Depth: $2d_{\mathrm{std}} + d_\oplus$.
        {\bf Part 2.}
        Size: $\poly(n)$

    {\bf Considering Step 2 in flow-matching layer:} 
     By Lemma~\ref{lem:ln_tc0_formal}, $\mathsf{LN}(\F_i^t)$ requires depth $5d_\mathrm{std} + 2d_{\oplus} + d_\mathrm{sqrt}$. By Lemma~\ref{lem:float_operations_TC}, $A_1 = \gamma_1 \circ \mathsf{LN}(\F_t)+\beta_1$ requires depth $2d_{\mathrm{std}}$. By Lemma~\ref{lem:attn_tc0_formal}, $A_2 = \mathsf{Attn}(A_1)$ requires depth $6(d_{\mathrm{std}}+d_\oplus)+d_{\exp}$. By Lemma~\ref{lem:float_operations_TC} again, scaling $A_2 \circ \alpha_1$ requires depth $d_{\mathrm{std}}$. The total depth requires $14d_{\mathrm{std}} + 8d_{\oplus}+d_{\mathrm{sqrt}}+d_{\exp}$ for step 2.

    {\bf Considering Step 3 in flow-matching layer:} By Lemma~\ref{lem:ln_tc0_formal}, $\mathsf{LN}(\F'^t_i)$ requires depth $5d_\mathrm{std} + 2d_{\oplus} + d_\mathrm{sqrt}$. By Lemma~\ref{lem:float_operations_TC}, $A_3 = \gamma_2 \circ \mathsf{LN}(\wh{\F}_t)+\beta_2$ requries depth  $2d_{\mathrm{std}}$. By Lemma~\ref{lem:mlp_tc0_formal}, $A_4 = \mathsf{MLP}(A_3,c,c)$ requires depth $2d_{\mathrm{std}} + d_\oplus$.
    By Lemma~\ref{lem:float_operations_TC} again, $A_4 \circ \alpha_2$ requires depth $d_{\mathrm{std}}$. The total depth requires $10d_{\mathrm{std}}+3d_{\oplus}+d_{\mathrm{sqrt}}$  for step 3..

    Finally, combining the result above, we need a circuit with depth $26d_{\mathrm{std}}+ 12 d_{\oplus} + 2d_{\mathrm{sqrt}} + d_{\exp}$ and size $\poly(n)$ to simulate the flow-matching layer.
    
\end{proof}

\subsection{Circuit Complexity Bound for FlowAR Architecture}\label{sec:main_result_flowar}
We present that the FlowAR Model can be efficiently simulated by a uniform $\mathsf{TC}^0$ circuit.
\begin{theorem}[FlowAR Model computation in $\TC^0$]\label{thm:flowar_tc0}
Given an input tensor $\X \in \R^{h \times w \times c}$. Under the following constraints:
    {\bf Part 1.}
    Satisfy $h = w = n$,
    {\bf Part 2.}
    Channel bound: $c \leq n$,
    {\bf Part 3.}
    Precision bound: $p \leq \poly(n)$.
    {\bf Part 4.}
    Number of scales: $K = O(1)$,
    {\bf Part 5.}
    $d_{\mathrm{std}},d_\oplus,d_{\mathrm{sqrt}},d_{\exp} = O(1)$.

Then, the FlowAR Model can be simulated by a uniform $\TC^0$ circuit family.

\end{theorem}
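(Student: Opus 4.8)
The plan is to unfold the recursive definition of FlowAR inference (Definition~\ref{def:flow_architecture_inference}) and argue that it is a composition of $O(1)$ many building blocks, each already shown to lie in uniform $\mathsf{TC}^0$, so that the composition stays constant-depth and polynomial-size. The first step is to bound every intermediate dimension. Since $h = w = n$ and $r_j = a^{K-j} \ge 1$, each token map $\wh\Y_i$ and $\wt\Y_i$ has at most $n \times n \times c$ entries, and the aggregated sequence $\Z_i$ has $\sum_{j=1}^{i} (h/r_j)(w/r_j) \le K n^2 = O(n^2)$ rows and $c \le n$ columns. Hence every sub-module is invoked on inputs of size $\poly(n)$ with channel count at most $n$ and precision $p \le \poly(n)$, so the hypotheses of Lemmas~\ref{lem:up_tc0}, \ref{lem:mlp_tc0_informal}, \ref{lem:ffn_tc0_informal}, \ref{lem:attn_tc0_informal}, \ref{lem:ln_tc0_informal}, and \ref{lem:fm_tc0} are all met.

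Next I would assemble the per-scale circuit. The $\mathsf{Concat}$ and reshape/slice operations merely permute and regroup wires, contributing zero depth and no gates. By Lemmas~\ref{lem:attn_tc0_informal} and \ref{lem:ffn_tc0_informal} the transformer block $\mathsf{FFN}_i \circ \mathsf{Attn}_i$ is in uniform $\mathsf{TC}^0$; by Lemma~\ref{lem:up_tc0} so is each upsampling $\phi_{\mathrm{up},i}$; and by Lemma~\ref{lem:fm_tc0} so is the flow-matching layer $\mathsf{NN}_i$. Stacking these in the order prescribed by the inductive step of Definition~\ref{def:flow_architecture_inference} yields, for each $i \in [K]$, a circuit of depth $d^{(i)} = O(1)$ (a fixed integer combination of $d_{\mathrm{std}}, d_\oplus, d_{\mathrm{sqrt}}, d_{\exp}$) and size $\poly(n)$ that computes $\wt\Y_i$ from $\Z_{\mathrm{init}}$, the earlier token maps $\wt\Y_1,\dots,\wt\Y_{i-1}$, and $(\F_i^{t_i}, t_i)$. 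Concatenating the $K$ stages gives the full inference circuit, of depth $\sum_{i=1}^{K} d^{(i)}$ and size a sum of $K$ many $\poly(n)$ terms. Because $K = O(1)$ and each of $d_{\mathrm{std}}, d_\oplus, d_{\mathrm{sqrt}}, d_{\exp}$ is $O(1)$ by hypothesis, the total depth is $O(1)$ and the total size is $\poly(n)$, placing FlowAR in $\mathsf{TC}^0$.

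For uniformity, note that each building block is generated by a $\mathsf{DLOGTIME}$ (equivalently $\mathsf{L}$-uniform) construction, and the wiring that glues stage $i$ to stage $i+1$ is determined entirely by the fixed scale factors $r_j$ and elementary index arithmetic, hence computable in $O(\log n)$ time. Since $K = O(1)$, a single $\mathsf{DLOGTIME}$ transducer can print the whole composed circuit, so the family is $\mathsf{DLOGTIME}$-uniform. The only genuine content beyond this bookkeeping is the depth count in the assembly step: one must check that no intermediate sequence length exceeds $\poly(n)$ (so the component lemmas actually apply) and, crucially, that $K = O(1)$ — without this the depth $\sum_i d^{(i)}$ would scale with the input and escape $\mathsf{TC}^0$. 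Everything else is immediate from the closure of uniform $\mathsf{TC}^0$ under constant-depth composition.
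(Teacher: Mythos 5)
Your proposal is correct and follows the same route as the paper: decompose the inference pipeline into per-scale stages, invoke the module-level lemmas (upsampling, attention, FFN, flow-matching) to show each stage is a constant-depth, $\poly(n)$-size uniform $\mathsf{TC}^0$ circuit, then observe that $K = O(1)$ keeps the $K$-fold composition constant-depth. You add useful bookkeeping the paper leaves implicit (bounding $\Z_i$ by $Kn^2 \times c$, treating $\mathsf{Concat}$/reshape as free wiring, and spelling out $\mathsf{DLOGTIME}$-uniformity), but the core argument is identical.
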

\begin{proof}
    For every $i \in [K]$, by Lemma~\ref{lem:up_tc0}, Lemma~\ref{lem:down_tc0},  Lemma~\ref{lem:attn_tc0_informal}, Lemma~\ref{lem:ffn_tc0_informal} and Lemma~\ref{lem:fm_tc0}, we can simulate the $i$-th layer of FlowAR Model with a uniform $\TC^0$ circuit whose size is $\poly(n)$ and depth is $O(1)$. Since the total number of layers $K = O(1)$, the composition of all $K$ circuits yields a single uniform $\mathsf{TC}^0$ circuit with:
        {\bf Part 1.}
        Size: $\poly(n)$.
        {\bf Part 2.}
        Depth: $O(1)$.
\end{proof}

In Theorem~\ref{thm:flowar_tc0}, we establish that a FlowAR model with $\poly(n)$ precision, constant depth, and $\poly(n)$ size can be efficiently simulated by a $\mathsf{DLOGTIME}$-uniform $\TC^0$ circuit family. This indicates that while the flow-matching architecture enhances the capability of visual autoregressive models, the FlowAR architecture remains inherently limited in expressivity under circuit complexity theory.
\section{Provably Efficient Criteria}\label{sec:efficient_critieria}

\subsection{Approximate Attention Computation}\label{sec:fast_attn}
In this section, we introduce approximate attention computation, which can accelerate the computation of the attention layer.

\begin{definition}[Approximate Attention Computation $\mathsf{AAttC}(n, d, R, \delta)$, Definition 1.2 in \cite{as23}]\label{def:aattc}
    Given an input sequence $X \in \R^{n \times d}$ and an approximation tolerance $\delta > 0$. Let $Q,K,V \in \R^{n \times d}$ be weigh matrices bounded such that
    $
        \max\{\|Q\|_\infty,\|K\|_\infty,\|V\|_\infty\} \leq R.
    $
    The {\bf Approximate Attention Computation} $\mathsf{AAttC}(n, d, R, \delta)$ outputs a matrix $N \in \R^{n \times d }$ satisfying
    $
        \| N - \mathsf{Attn}(X)\|_\infty \leq \delta
    $
\end{definition}

Next, we present a lemma that demonstrates the computational time cost of the AATTC method.
\begin{lemma}[Fast Attention via Subquadratic Computation, Theorem 1.4 of \cite{as23}]\label{lem:as23_attention}
Let $\mathsf{AAttC}$ be formalized as in Definition~\ref{def:aattc}. 
For parameter configurations:
    Part 1.
    Embedding dimension $d = O(\log n)$,
    Part 2.
    $R = \Theta(\sqrt{\log n})$,
    Part 3.
    Approximation tolerance $\delta = 1/\poly(n)$,
the $\mathsf{AAttC}$ computation satisfies
$
    \mathcal{T}(n, n^{o(1)}, d) = n^{1 + o(1)},
$
where $\mathcal{T}$ denotes the time complexity under these constraints.

\end{lemma}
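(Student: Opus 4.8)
The plan is to reproduce the polynomial-method argument behind Theorem~1.4 of \cite{as23}. The governing idea is that in the stated parameter regime the entrywise exponential defining the attention matrix admits a rank-$n^{o(1)}$ approximation, after which the softmax normalization and the right-multiplication by $V$ can both be carried out in $n^{1+o(1)}$ time without ever materializing an $n\times n$ object.

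First I would control the magnitude of the attention logits: with $\|Q\|_\infty,\|K\|_\infty \le R = \Theta(\sqrt{\log n})$ and contraction dimension $d = O(\log n)$, each score $\langle Q_{i,*},K_{j,*}\rangle$ (under the scaling convention of \cite{as23}) lies in a bounded interval $[-B,B]$ with $B$ small --- this is the only place the hypotheses on $R$ and $d$ are used. I would then invoke the classical low-degree polynomial approximation of $\exp$ on such an interval: there is a polynomial $P$ of degree $g = n^{o(1)}$ with $\sup_{|x|\le B}|P(x) - e^x| \le \delta/\poly(n)$. Because $P$ has degree $g$, the quantity $P(\langle q_i,k_j\rangle)$ equals an inner product $\langle\phi(q_i),\phi(k_j)\rangle$ of feature maps $\phi(\cdot)\in\R^r$ indexed by the monomials of degree $\le g$ in $d$ variables, so the rank is $r = \binom{d+g}{g}$, which the choice of $g$ keeps at $n^{o(1)}$. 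This yields explicit matrices $U_1,U_2\in\R^{n\times r}$ with $U_1U_2^\top$ entrywise $(\delta/\poly(n))$-close to the unnormalized attention matrix $A$.

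Next I would perform the normalization and the output purely in factored form: compute $\wt d := U_1(U_2^\top {\bf 1}_n)\in\R^n$, an approximation of the true row sums $A{\bf 1}_n$, in $O(nr)$ time; set $\wt D := \diag(\wt d)$; and return $\wt D^{-1}U_1(U_2^\top V)\in\R^{n\times d}$, evaluated right-to-left in $O(nrd)$ time. Since $r = n^{o(1)}$ and $d = O(\log n)$, the total cost is $\mathcal{T}(n,n^{o(1)},d) = n^{1+o(1)}$. For correctness I would note that the true row sums are bounded below (each $A_{i,i} = e^{\langle Q_{i,*},K_{i,*}\rangle}$ is at least $e^{-B}$), so $\wt D^{-1}$ is stable under the small perturbation in $\wt d$; combining this with $\|U_1U_2^\top - A\|_\infty \le \delta/\poly(n)$ and $\|V\|_\infty \le R$ yields $\|\wt D^{-1}U_1U_2^\top V - D^{-1}AV\|_\infty \le \delta$, which is precisely the guarantee demanded by Definition~\ref{def:aattc}.

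The main obstacle is the quantitative balancing inside the polynomial step: the degree $g$ must be chosen small enough that $r = \binom{d+g}{g} = n^{o(1)}$, yet large enough that the approximation error --- after exponentiation and after dividing by the possibly-small row sums --- remains $1/\poly(n)$. Threading this needle is exactly what forces the assumptions $R = \Theta(\sqrt{\log n})$ and $d = O(\log n)$, and making all the dependencies line up, rather than any one individual estimate, is the delicate part; the complete accounting is carried out in \cite{as23}, which we invoke here as a black box.
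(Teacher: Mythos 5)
The paper does not prove Lemma~\ref{lem:as23_attention}; it is imported verbatim as Theorem~1.4 of \cite{as23} and used as a black box, so there is no in-paper argument to compare against. Your sketch faithfully reconstructs the Alman--Song polynomial-method proof --- bounding the logits via the hypotheses on $R$ and $d$, low-degree polynomial approximation of $\exp$ yielding a rank-$\binom{d+g}{g}=n^{o(1)}$ factorization $U_1U_2^\top$, evaluation of the row normalization and the value product entirely in factored form in $n^{1+o(1)}$ time, and stability of $D^{-1}$ from the row-sum lower bound --- and you correctly identify the quantitative choice of the degree $g$ (which must simultaneously keep $\binom{d+g}{g}=n^{o(1)}$ and drive the final $\ell_\infty$ error below $1/\poly(n)$) as the only genuinely delicate step, which is exactly what the citation to \cite{as23} is being trusted to supply.
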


\subsection{Fast FlowAR Architecture in the Inference Pipleline}\label{sec:fast_flowar}
Firstly, we define the fast flow-matching layer, where the $\mathsf{Attn}$ layers in the original flow-matching module are replaced with $\mathsf{AAttC}$ layers.

\begin{definition}[Fast Flow Matching Architecture]\label{def:fast_flow_matching_architecture}
Given the following:
\begin{itemize}
    \item {\bf Input tensor:} $\X \in \R^{h \times w \times c}$.
    \item {\bf Scales number:} $K \in \mathbb{N}$.
    \item {\bf Token maps:} For $i \in [K]$, $\wh{\Y}_i \in \R^{(h / r_i) \times (w/r_i) \times c}$ denote the token maps generated by autoregressive transformer defined in Definition~\ref{def:ar_transformer}.
    \item {\bf Interpolation Tokens:} For $i \in [K]$, $\F_i^t \in \R^{(h / r_i) \times (w/r_i) \times c}$ denote interpolated input defined in Definition~\ref{def:flow}.
    \item {\bf Time step:} For $i \in [K]$, $t_i \in [0,1]$ denotes timestep.
    \item {\bf Approximate Attention layer:}  For $i \in [K]$, $\mathsf{AAttC}_i(\cdot):\R^{h/r_i \times w/r_i \times c} \to \R^{h/r_i \times w/r_i \times c}$ is defined in Definition~\ref{def:attn_layer}.
    \item {\bf MLP layer:}  For $i \in [K]$, $\mathsf{MLP}_i(\cdot,c,d):\R^{h/r_i \times w/r_i \times c} \to \R^{h/r_i \times w/r_i \times c}$ is defined in Definition~\ref{def:mlp}.
    \item {\bf LN layer:} For $i \in [K]$, $\mathsf{LN}_i(\cdot):\R^{h/r_i \times w/r_i \times c} \to \R^{h/r_i \times w/r_i \times c}$ is defined in Definition~\ref{def:ln}.
\end{itemize}
The computation steps of flow-matching layers are as follows:
\begin{itemize}
    \item 
    {\bf Time-conditioned parameter generation:}
    \begin{align*}
        &~\alpha_1, \alpha_2, \beta_1, \beta_2, \gamma_1, \gamma_2\\:=&~  \mathsf{MLP}_i(\wh{\Y}_i + t_i \cdot {\bf 1}_{(h / r_i) \times (w/r_i) \times c},c,6c)
    \end{align*}
    \item 
    {\bf Intermediate variable computation:}
    \begin{align*}
        \F'^{t_i}_i:= \mathsf{AAttC}_i (\gamma_1 \circ \mathsf{LN}(\F_i^{t_i}) + \beta_1) \circ \alpha_1
    \end{align*}
    with $\circ$ denoting Hadamard (element-wise) product.
    \item 
    {\bf Final projection:}
    \begin{align*}
        \F''^{t_i}_i := \mathsf{MLP}_i(\gamma_2 \circ \mathsf{LN}(\F'^{t_i}_i)+ \beta_2,c,c) \circ \alpha_2
    \end{align*}
\end{itemize}

The operation is denoted as $\F''^{t_i}_i := \mathsf{FNN}_i(\wh{\Y_i},\F_i^{t_i},t_i)$
\end{definition}

Next, we define the Fast FlowAR inference pipeline architecture, where all 
$\mathsf{Attn}$ layers in the original FlowAR architecture are replaced with $\mathsf{AAttC}$ layers.

\begin{definition}[Fast FlowAR Inference Architecture]\label{def:fast_flow_architecture_inference}
    Given the following:
    \begin{itemize}
        \item 
        {\bf Scales number:} $K \in \mathbb{N}$.
        \item 
        {\bf Scale factor:} For $i \in [K]$, $r_i:= a^{K-i}$ where base factor $a \in \mathbb{N}^+$.
         \item 
        {\bf Upsampling functions:}  For $i \in [K]$, $\phi_{\mathrm{up},i}(\cdot,a): \R^{(h/r_i) \times (w/r_i) \times c}\to \R^{(h/r_{i+1}) \times (w/r_{i+1}) \times c}$ from Definition~\ref{def:bicubic_up_sample_function}.
        \item 
        {\bf Approximate Attention layer:}  For $i \in [K]$, $\mathsf{AAttC}_i(\cdot):\R^{(\sum_{j=1}^i h/r_j \cdot w/r_{j})\times c} \to \R^{(\sum_{j=1}^i h/r_j \cdot w/r_{j})\times c}$ which acts on flattened sequences of dimension defined in Definition~\ref{def:aattc}.
        \item 
        {\bf Feed forward layer: } For $i \in [K]$, $\mathsf{FFN}_i(\cdot): \R^{(\sum_{j=1}^i h/r_j \cdot w/r_{j})\times c} \to \R^{(\sum_{j=1}^i h/r_j \cdot w/r_{j})\times c}$ which acts on flattened sequences of dimension defined in Definition~\ref{def:ffn}.
        \item 
        {\bf Fast flow-matching layer:} For $i \in [K]$, $\mathsf{FNN}_i(\cdot,\cdot,\cdot):\R^{h/r_i \times w/r_i \times c}\times \R^{h/r_i \times w/r_i \times c}\times \R \to \R^{h/r_i \times w/r_i \times c}$ denote the fast flow-matching layer defined in Definition~\ref{def:fast_flow_matching_architecture}.
        \item 
        {\bf Initial condition:} $\Z_{\mathrm{init}} \in \R^{(h/r_1) \times (w/r_1) \times c}$ denotes the initial token maps which encodes class information.
        \item 
        {\bf Time steps:} For $i \in [K]$, $t_i \in [0,1]$ denotes time steps.
        \item 
        {\bf Interpolated inputs:} For $i \in [K]$, $\F_i^{t_i} \in \R^{h/r_i \times w/r_i \times c}$ defined in Definition~\ref{def:flow}.
        \item 
        {\bf Cumulative dimensions:} We define $\wt{h}_i := \sum_{j=1}^i h/r_j$ and  $\wt{w}_i := \sum_{j=1}^i w/r_j$ for $i \in [K]$.
    \end{itemize}
    
    The FlowAR model conducts the following recursive construction:
    
    \begin{itemize}
        \item 
        {\bf Base case $i=1$:}
        \begin{align*}
            \Z_1 =&~ \Z_{\mathrm{init}},\\
            \wh{\Y}_1 =&~ \mathsf{FFN}_1(\mathsf{AAttC}_1(\Z_1)),\\
            \wt{\Y}_1 =&~ \mathsf{FNN}_1(\wh{\Y}_1,\F_{1}^{t_1},t_1).
        \end{align*}
        
        \item 
        {\bf Inductive step $i \geq 2$:}
        \begin{itemize}
            \item 
            Spatial aggregation:
            \begin{align*}
                &~\Z_i \\=&~ \mathsf{Concat}(\Z_{\mathrm{init}},\phi_{\mathrm{up},1}(\wt{\Y}_{i-1}),\dots,\phi_{\mathrm{up},i-1}(\wt{\Y}_{i-1})) 
            \end{align*}
            \item 
            Autoregressive transformer computation:
            \begin{align*}
                \wh{\Y}_i = \mathsf{FFN}_i(\mathsf{AAttC}_i(\Z_1))_{\wt{h}_{i-1}:\wt{h}_{i-1},\wt{w}_{i}:\wt{w}_{i},0:c}
            \end{align*}
            \item 
            Flow matching layer:
            \begin{align*}
                \wt{\Y}_i = \mathsf{FNN}_i(\wh{\Y}_i,\F_{i}^{t_i},t_i)
            \end{align*}
        \end{itemize}
        
        The final output is $\wt{\Y}_K \in \R^{h \times w \times c}$.
    \end{itemize}
   
\end{definition}

\subsection{Running Time}\label{sec:running_time}
In this section, we analyzed the running time required by the original FlowAR architecture and the running time required by the Fast FlowAR architecture. The results indicate that by adopting the Approximate Attention computation module, we can accelerate the running time of FlowAR to almost quadratic time.

First, we present the results of the running time analysis for the original FlowAR model.
\begin{lemma}[Inference Runtime of Original FlowAR Architecture, informal version of Lemma~\ref{lem:runtime_old_flowar_formal}]\label{lem:runtime_old_flowar_informal}
    Consider the original FlowAR inference pipeline with the following parameters:
        {\bf Input tensor:} $\X \in \R^{h \times w \times c}$. Assume $h=w=n$ and $c = O(\log n)$.
        {\bf Number of scales:} $K = O(1)$.
        {\bf Scale factor:} For $i \in [K]$, $r_i:= a^{K-i}$ where base factor $a \in \mathbb{N}^+$.
        {\bf Upsampling functions}  For $i \in [K]$, $\phi_{\mathrm{up},i}(\cdot,a)$ from Definition~\ref{def:bicubic_up_sample_function}.
        {\bf Attention layer:}  For $i \in [K]$, $\mathsf{Attn}_i(\cdot)$ which acts on flattened sequences of dimension defined in Definition~\ref{def:attn_layer}.
        {\bf Feed forward layer: } For $i \in [K]$, $\mathsf{FFN}_i(\cdot)$ which acts on flattened sequences of dimension defined in Definition~\ref{def:ffn}.
        {\bf Flow matching layer:} For $i \in [K]$, $\mathsf{NN}_i(\cdot,\cdot,\cdot)$ denote the flow-matching layer defined in Definition~\ref{def:flow_matching_architecture}.
    
    Under these conditions, the total inference runtime of FlowAR is bounded by 
    \begin{align*}
    O(n^{4+o(1)}).
    \end{align*}
\end{lemma}

Then, we present the results of the running time analysis for the fast FlowAR model.
\begin{lemma}[Inference Runtime of Fast FlowAR Architecture, informal version of Lemma~\ref{lem:runtime_fast_flowar_formal}]\label{lem:runtime_fast_flowar_informal}
    Consider the original FlowAR inference pipeline with the following parameters:
        {\bf Input tensor:} $\X \in \R^{h \times w \times c}$. Assume $h=w=n$ and $c = O(\log n)$.
        {\bf Number of scales:} $K = O(1)$.
        {\bf Scale factor:} For $i \in [K]$, $r_i:= a^{K-i}$ where base factor $a \in \mathbb{N}^+$.
        {\bf Upsampling functions}  For $i \in [K]$, $\phi_{\mathrm{up},i}(\cdot,a)$ from Definition~\ref{def:bicubic_up_sample_function}.
        {\bf Approximate Attention layer:}  For $i \in [K]$, $\mathsf{AAttC}_i(\cdot)$ defined in Definition~\ref{def:aattc}.
        {\bf Feed forward layer: } For $i \in [K]$, $\mathsf{FFN}_i(\cdot)$ which acts on flattened sequences of dimension defined in Definition~\ref{def:ffn}.
        {\bf Fast flow-matching layer:} For $i \in [K]$, $\mathsf{FNN}_i(\cdot,\cdot,\cdot)$ denote the fast flow-matching layer defined in Definition~\ref{def:fast_flow_matching_architecture}.
    
    Under these conditions, the total inference runtime of FlowAR is bounded by 
    \begin{align*}
    O(n^{2+o(1)}).
    \end{align*}
\end{lemma}

\subsection{Error Propagation Analysis}\label{lem:error_propagation_analysis}
In this section, we present an error analysis introduced by the fast algorithm applied to the FlowAR model.
\begin{lemma}[Error Bound Between Fast FlowAR and FlowAR Outputs, informal version of Lemma~\ref{lem:error_analysis_fast_flowar}]\label{lem:error_analysis_fast_flowar_informal}

    Under certain conditions, the $\ell_\infty$ error between the final outputs is bounded by
    \begin{align*}
        \|\wt{\Y}'_K - \wt{\Y}_K\|_\infty \leq 1/\poly(n).
    \end{align*}
\end{lemma}

\subsection{Existence of Almost Quadratic Time Algorithm}\label{sec:almost_quadratic_time_algorithm}
This section presents a theorem proving the existence of a quadratic-time algorithm that speeds up the FlowAR architecture and guarantees a bounded additive error.
\begin{theorem}[Existence of Almost Quadratic Time Algorithm]
\label{thm:upper_bound:formal}
Suppose $d = O(\log n)$ and $R = o(\sqrt{\log n})$. There is an algorithm that approximates the  FlowAR architecture up to $1/\poly(n)$ additive error in $O(n^{2+o(1)})$ time.
\end{theorem}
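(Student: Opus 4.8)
The plan is to exhibit the \emph{Fast FlowAR inference architecture} of Definition~\ref{def:fast_flow_architecture_inference} as the desired algorithm: it is obtained from the exact FlowAR inference pipeline of Definition~\ref{def:flow_architecture_inference} by replacing every softmax attention block $\mathsf{Attn}_i$ — both those inside the autoregressive transformer and those inside each flow-matching layer — with the approximate attention routine $\mathsf{AAttC}_i$ of Definition~\ref{def:aattc}, run with tolerance $\delta = 1/\poly(n)$ to be fixed at the end. The theorem then reduces to two essentially independent claims, a runtime bound and an accuracy bound, each of which is already packaged as a lemma above.

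\textbf{Runtime.} First I would bound the cost of a single scale $i \in [K]$. Since $r_j = a^{K-j}$ and $\sum_{j \ge 1} a^{-j}$ is a constant, the cumulative dimensions satisfy $\wt h_i, \wt w_i = O(n)$, so every layer at scale $i$ processes $N = O(n^2)$ tokens of width $c = O(\log n)$. The non-attention operations — bicubic upsampling, $\mathsf{MLP}$, $\mathsf{FFN}$, $\mathsf{LN}$, the Hadamard rescalings, and concatenation — each run in time $N \cdot \poly(c) = n^{2 + o(1)}$. For the attention blocks I would invoke Lemma~\ref{lem:as23_attention}: because $\sqrt{\log N} = \Theta(\sqrt{\log n})$, the hypotheses $d = O(\log n)$ and $R = o(\sqrt{\log n})$ still lie inside the admissible regime once the sequence length has grown to $N$, so each $\mathsf{AAttC}$ call costs $N^{1 + o(1)} = n^{2 + o(1)}$. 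Summing over the $K = O(1)$ scales and the $O(1)$ sub-blocks per scale gives total time $O(n^{2 + o(1)})$; this is exactly Lemma~\ref{lem:runtime_fast_flowar_informal}.

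\textbf{Accuracy.} Next I would track the $\ell_\infty$ deviation between each Fast FlowAR intermediate tensor and its exact counterpart. Inside one scale, every $\mathsf{AAttC}$ call injects a fresh additive error of at most $\delta$, and this error is then transported through $\mathsf{LN}$, $\mathsf{MLP}$, $\mathsf{FFN}$, the Hadamard rescalings by the time-conditioned parameters, and the bicubic upsampling, each of which is Lipschitz in the $\ell_\infty$ norm with modulus $\poly(n)$ under the assumption that all tensors and weight matrices have entries bounded by $R = O(\sqrt{\log n})$. The autoregressive recursion composes only $K = O(1)$ scales — the error produced at scale $i$ re-enters at scale $i+1$ through one upsampling and one concatenation — so the accumulated error is at most $\poly(n) \cdot \delta$. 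Taking $\delta = 1/\poly(n)$ with a sufficiently large degree then yields $\|\wt \Y'_K - \wt \Y_K\|_\infty \le 1/\poly(n)$, which is exactly Lemma~\ref{lem:error_analysis_fast_flowar_informal}. Combining the runtime and accuracy bounds proves the theorem.

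\textbf{Main obstacle.} The delicate part is the accuracy argument, specifically controlling two potential sources of blow-up. The first is the softmax normalization: an additive perturbation of the unnormalized scores must be shown not to be amplified by the diagonal rescaling $D^{-1}$, and this is precisely where the entrywise bound $R = o(\sqrt{\log n})$ is essential — it forces the logits to be $o(\log n)$, hence $\exp(\cdot)$ of them to lie in $[n^{-o(1)}, n^{o(1)}]$ and the row sums $D_{ii}$ to be $\Theta(N) \cdot n^{\pm o(1)}$, so inverting $D$ contributes only an $n^{o(1)}$ factor. The second is the $K$-fold autoregressive composition: one must verify that the per-scale Lipschitz constants multiply to no more than $\poly(n)$ — which holds because $K = O(1)$ and each factor is $\poly(n)$ — so that a single polynomial choice of $\delta$ suppresses the error uniformly across all scales and across both uses of attention (the transformer and the flow-matching module).
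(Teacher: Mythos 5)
Your proposal is correct and follows the same route as the paper: exhibit the Fast FlowAR architecture of Definition~\ref{def:fast_flow_architecture_inference} (every $\mathsf{Attn}$ replaced by $\mathsf{AAttC}$) and combine the runtime bound of Lemma~\ref{lem:runtime_fast_flowar_informal} with the error bound of Lemma~\ref{lem:error_analysis_fast_flowar_informal}. You expand on how each of those two lemmas is established, but the decomposition into a runtime claim and an accuracy claim, and the lemmas invoked, are precisely those of the paper's proof.
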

\begin{proof}
    By combining the result of Lemma~\ref{lem:runtime_fast_flowar_informal} and Lemma~\ref{lem:error_analysis_fast_flowar_informal}, we can easily derive the proof.
\end{proof}

Our Theorem~\ref{thm:upper_bound:formal} shows that we can accelerate FlowAR while only introducing a small error. Using the low-rank approximation in the attention mechanism is also used in previous works \cite{kll+25,lls+24_conv,llss25,lss+25_relu,chl+24_rope,lss+24,lssz24_tat,as24_iclr,as24b,as24_rope,hsk+24}.
\section{Conclusion}\label{sec:conclusion}
In this work, we have addressed several fundamental questions about the FlowAR architecture, making significant contributions to both theoretical understanding and complexity efficiency. By rigorously analyzing the architecture of FlowAR, we demonstrated that despite its sophisticated integration of flow-based and autoregressive mechanisms, it resides within the complexity class $\TC^0$. Specifically, we proved that each module of FlowAR, including the attention and flow-matching layers, can be simulated by a constant-depth, polynomial-size circuit. 

Beyond the circuit theoretical analysis, we identified the computational bottleneck in FlowAR's attention mechanism and developed an efficient variant using low-rank approximation techniques. This optimization achieves nearly quadratic runtime $O(n^{2+o(1)})$, a substantial improvement over the original $O(n^{4+o(1)})$ complexity, while maintaining an error bound of $1/\poly(n)$. 

Our findings provide both a theoretical foundation for understanding the computational limits of FlowAR and practical guidelines for implementing more efficient variants, offering valuable insights for future development of generative architectures and establishing a framework for comparisons with other generative paradigms.

\ifdefined\isarxiv
\bibliographystyle{alpha}
\bibliography{ref}
\else
\bibliographystyle{apalike}
\bibliography{ref}

\begin{thebibliography}{}

\bibitem[Alman and Song, 2023]{as23}
Alman, J. and Song, Z. (2023).
\newblock Fast attention requires bounded entries.
\newblock {\em Advances in Neural Information Processing Systems},
  36:63117--63135.

\bibitem[Alman and Song, 2024a]{as24_rope}
Alman, J. and Song, Z. (2024a).
\newblock Fast rope attention: Combining the polynomial method and fast fourier
  transform.
\newblock {\em manuscript}.

\bibitem[Alman and Song, 2024b]{as24b}
Alman, J. and Song, Z. (2024b).
\newblock The fine-grained complexity of gradient computation for training
  large language models.
\newblock In {\em The Thirty-eighth Annual Conference on Neural Information
  Processing Systems}.

\bibitem[Alman and Song, 2024c]{as24_iclr}
Alman, J. and Song, Z. (2024c).
\newblock How to capture higher-order correlations? generalizing matrix softmax
  attention to kronecker computation.
\newblock In {\em The Twelfth International Conference on Learning
  Representations}.

\bibitem[Arora and Barak, 2009]{ab09}
Arora, S. and Barak, B. (2009).
\newblock {\em Computational complexity: a modern approach}.
\newblock Cambridge University Press.

\bibitem[Atito et~al., 2021]{aak21}
Atito, S., Awais, M., and Kittler, J. (2021).
\newblock Sit: Self-supervised vision transformer.
\newblock {\em arXiv preprint arXiv:2104.03602}.

\bibitem[Bao et~al., 2023]{bnx+23}
Bao, F., Nie, S., Xue, K., Cao, Y., Li, C., Su, H., and Zhu, J. (2023).
\newblock All are worth words: A vit backbone for diffusion models.
\newblock In {\em Proceedings of the IEEE/CVF conference on computer vision and
  pattern recognition}, pages 22669--22679.

\bibitem[Barrington and Immerman, 1994]{bi94}
Barrington, D.~M. and Immerman, N. (1994).
\newblock Time, hardware, and uniformity.
\newblock In {\em Proceedings of IEEE 9th Annual Conference on Structure in
  Complexity Theory}. IEEE.

\bibitem[Chen et~al., 2024a]{cll+24}
Chen, B., Li, X., Liang, Y., Long, J., Shi, Z., and Song, Z. (2024a).
\newblock Circuit complexity bounds for rope-based transformer architecture.
\newblock {\em arXiv preprint arXiv:2411.07602}.

\bibitem[Chen et~al., 2024b]{chl+24_rope}
Chen, Y., Huo, J., Li, X., Liang, Y., Shi, Z., and Song, Z. (2024b).
\newblock Fast gradient computation for rope attention in almost linear time.
\newblock {\em arXiv preprint arXiv:2412.17316}.

\bibitem[Chen et~al., 2024c]{cll+24_mamba_circut}
Chen, Y., Li, X., Liang, Y., Shi, Z., and Song, Z. (2024c).
\newblock The computational limits of state-space models and mamba via the lens
  of circuit complexity.
\newblock {\em arXiv preprint arXiv:2412.06148}.

\bibitem[Chiang, 2024]{chi24}
Chiang, D. (2024).
\newblock Transformers in uniform tc$^0$.
\newblock {\em arXiv:2409.13629}.

\bibitem[Doersch, 2016]{doe16}
Doersch, C. (2016).
\newblock Tutorial on variational autoencoders.
\newblock {\em arXiv:1606.05908}.

\bibitem[Dosovitskiy, 2020]{d20}
Dosovitskiy, A. (2020).
\newblock An image is worth 16x16 words: Transformers for image recognition at
  scale.
\newblock {\em arXiv preprint arXiv:2010.11929}.

\bibitem[Esser et~al., 2024]{ekb+24}
Esser, P., Kulal, S., Blattmann, A., Entezari, R., M{\"u}ller, J., Saini, H.,
  Levi, Y., Lorenz, D., Sauer, A., Boesel, F., et~al. (2024).
\newblock Scaling rectified flow transformers for high-resolution image
  synthesis.
\newblock In {\em Forty-first International Conference on Machine Learning}.

\bibitem[Goodfellow et~al., 2020]{gpm+20}
Goodfellow, I., Pouget-Abadie, J., Mirza, M., Xu, B., Warde-Farley, D., Ozair,
  S., Courville, A., and Bengio, Y. (2020).
\newblock Generative adversarial networks.
\newblock {\em CACM}, 63(11).

\bibitem[Ho et~al., 2020]{hja20}
Ho, J., Jain, A., and Abbeel, P. (2020).
\newblock Denoising diffusion probabilistic models.
\newblock {\em Advances in neural information processing systems},
  33:6840--6851.

\bibitem[Hoogeboom et~al., 2023]{hhs23}
Hoogeboom, E., Heek, J., and Salimans, T. (2023).
\newblock simple diffusion: End-to-end diffusion for high resolution images.
\newblock In {\em International Conference on Machine Learning}, pages
  13213--13232. PMLR.

\bibitem[Hu et~al., 2022]{hst+22}
Hu, H., Song, Z., Tao, R., Xu, Z., Yin, J., and Zhuo, D. (2022).
\newblock Sublinear time algorithm for online weighted bipartite matching.
\newblock {\em arXiv preprint arXiv:2208.03367}.

\bibitem[Hu et~al., 2024a]{hsk+24}
Hu, J. Y.-C., Su, M., Kuo, E.-J., Song, Z., and Liu, H. (2024a).
\newblock Computational limits of low-rank adaptation (lora) fine-tuning for
  transformer models.
\newblock In {\em The Thirteenth International Conference on Learning
  Representations}.

\bibitem[Hu et~al., 2024b]{hwl+24}
Hu, J. Y.-C., Wu, W., Lee, Y.-C., Huang, Y.-C., Chen, M., and Liu, H. (2024b).
\newblock On statistical rates of conditional diffusion transformers:
  Approximation, estimation and minimax optimality.
\newblock {\em arXiv preprint arXiv:2411.17522}.

\bibitem[Jin et~al., 2024]{jsl+24}
Jin, Y., Sun, Z., Li, N., Xu, K., Jiang, H., Zhuang, N., Huang, Q., Song, Y.,
  Mu, Y., and Lin, Z. (2024).
\newblock Pyramidal flow matching for efficient video generative modeling.
\newblock {\em arXiv preprint arXiv:2410.05954}.

\bibitem[Ke et~al., 2025a]{kll+25}
Ke, Y., Li, X., Liang, Y., Sha, Z., Shi, Z., and Song, Z. (2025a).
\newblock On computational limits and provably efficient criteria of visual
  autoregressive models: A fine-grained complexity analysis.
\newblock {\em arXiv preprint arXiv:2501.04377}.

\bibitem[Ke et~al., 2025b]{kll+25_circuit_var}
Ke, Y., Li, X., Liang, Y., Shi, Z., and Song, Z. (2025b).
\newblock Circuit complexity bounds for visual autoregressive model.
\newblock {\em arXiv preprint arXiv:2501.04299}.

\bibitem[Kingma and Dhariwal, 2018]{kd18}
Kingma, D.~P. and Dhariwal, P. (2018).
\newblock Glow: Generative flow with invertible 1x1 convolutions.
\newblock {\em NeurIPS}, 31.

\bibitem[Li et~al., 2024a]{llss24}
Li, C., Liang, Y., Shi, Z., and Song, Z. (2024a).
\newblock Exploring the frontiers of softmax: Provable optimization,
  applications in diffusion model, and beyond.
\newblock {\em arXiv preprint arXiv:2405.03251}.

\bibitem[Li et~al., 2025]{llss25}
Li, C., Liang, Y., Shi, Z., and Song, Z. (2025).
\newblock When can we solve the weighted low rank approximation problem in
  truly subquadratic time?
\newblock In {\em International Conference on Artificial Intelligence and
  Statistics}.

\bibitem[Li et~al., 2024b]{ltl+24}
Li, T., Tian, Y., Li, H., Deng, M., and He, K. (2024b).
\newblock Autoregressive image generation without vector quantization.
\newblock {\em arXiv preprint arXiv:2406.11838}.

\bibitem[Li et~al., 2024c]{cll+24_tensor_tc}
Li, X., Liang, Y., Shi, Z., Song, Z., and Wan, M. (2024c).
\newblock Theoretical constraints on the expressive power of
  $\mathsf{RoPE}$-based tensor attention transformers.
\newblock {\em arXiv preprint arXiv:2412.18040}.

\bibitem[Liang et~al., 2024a]{lls+24_conv}
Liang, Y., Liu, H., Shi, Z., Song, Z., Xu, Z., and Yin, J. (2024a).
\newblock Conv-basis: A new paradigm for efficient attention inference and
  gradient computation in transformers.
\newblock {\em arXiv preprint arXiv:2405.05219}.

\bibitem[Liang et~al., 2024b]{lss+24}
Liang, Y., Sha, Z., Shi, Z., Song, Z., and Zhou, Y. (2024b).
\newblock Multi-layer transformers gradient can be approximated in almost
  linear time.
\newblock {\em arXiv preprint arXiv:2408.13233}.

\bibitem[Liang et~al., 2025]{lss+25_relu}
Liang, Y., Sha, Z., Shi, Z., Song, Z., and Zhou, Y. (2025).
\newblock Looped relu mlps may be all you need as practical programmable
  computers.
\newblock In {\em International Conference on Artificial Intelligence and
  Statistics}.

\bibitem[Liang et~al., 2024c]{lssz24_tat}
Liang, Y., Shi, Z., Song, Z., and Zhou, Y. (2024c).
\newblock Tensor attention training: Provably efficient learning of
  higher-order transformers.
\newblock {\em arXiv preprint arXiv:2405.16411}.

\bibitem[Liang et~al., 2024d]{lssz24_gm}
Liang, Y., Shi, Z., Song, Z., and Zhou, Y. (2024d).
\newblock Unraveling the smoothness properties of diffusion models: A gaussian
  mixture perspective.
\newblock {\em arXiv preprint arXiv:2405.16418}.

\bibitem[Liu et~al., 2022]{lag+22}
Liu, B., Ash, J.~T., Goel, S., Krishnamurthy, A., and Zhang, C. (2022).
\newblock Transformers learn shortcuts to automata.
\newblock {\em arXiv preprint arXiv:2210.10749}.

\bibitem[Liu et~al., 2023]{lgl23}
Liu, X., Gong, C., and Liu, Q. (2023).
\newblock Flow straight and fast: Learning to generate and transfer data with
  rectified flow.
\newblock In {\em The Eleventh International Conference on Learning
  Representations}.

\bibitem[Merrill and Sabharwal, 2024]{ms24}
Merrill, W. and Sabharwal, A. (2024).
\newblock A logic for expressing log-precision transformers.
\newblock {\em Advances in Neural Information Processing Systems}, 36.

\bibitem[Merrill et~al., 2022]{mss22}
Merrill, W., Sabharwal, A., and Smith, N.~A. (2022).
\newblock Saturated transformers are constant-depth threshold circuits.
\newblock {\em Transactions of the Association for Computational Linguistics},
  10:843--856.

\bibitem[Peebles and Xie, 2023]{px23}
Peebles, W. and Xie, S. (2023).
\newblock Scalable diffusion models with transformers.
\newblock In {\em Proceedings of the IEEE/CVF International Conference on
  Computer Vision}, pages 4195--4205.

\bibitem[Ren et~al., 2024]{ryh+24}
Ren, S., Yu, Q., He, J., Shen, X., Yuille, A., and Chen, L.-C. (2024).
\newblock Flowar: Scale-wise autoregressive image generation meets flow
  matching.
\newblock {\em arXiv preprint arXiv:2412.15205}.

\bibitem[Rombach et~al., 2022]{rbl+22}
Rombach, R., Blattmann, A., Lorenz, D., Esser, P., and Ommer, B. (2022).
\newblock High-resolution image synthesis with latent diffusion models.
\newblock In {\em Proceedings of the IEEE/CVF conference on computer vision and
  pattern recognition}, pages 10684--10695.

\bibitem[Song et~al., 2020]{ssk+20}
Song, Y., Sohl-Dickstein, J., Kingma, D.~P., Kumar, A., Ermon, S., and Poole,
  B. (2020).
\newblock Score-based generative modeling through stochastic differential
  equations.
\newblock {\em arXiv preprint arXiv:2011.13456}.

\bibitem[Song et~al., 2025]{swyy23}
Song, Z., Wang, W., Yin, C., and Yin, J. (2025).
\newblock Fast and efficient matching algorithm with deadline instances.
\newblock In {\em The Second Conference on Parsimony and Learning (Proceedings
  Track)}.

\bibitem[Tian et~al., 2024]{tjy+24}
Tian, K., Jiang, Y., Yuan, Z., Peng, B., and Wang, L. (2024).
\newblock Visual autoregressive modeling: Scalable image generation via
  next-scale prediction.
\newblock {\em arXiv preprint arXiv:2404.02905}.

\bibitem[Vollmer, 1999a]{v99}
Vollmer, H. (1999a).
\newblock {\em Introduction to circuit complexity: a uniform approach}.
\newblock Springer Science \& Business Media.

\bibitem[Vollmer, 1999b]{vol99}
Vollmer, H. (1999b).
\newblock {\em Introduction to circuit complexity: a uniform approach}.
\newblock Springer Science \& Business Media.

\bibitem[Wang et~al., 2023]{wcz+23}
Wang, Y., Chen, Z., Zhong, L., Ding, Z., Sha, Z., and Tu, Z. (2023).
\newblock Dolfin: Diffusion layout transformers without autoencoder.
\newblock {\em arXiv preprint arXiv:2310.16305}.

\bibitem[Wang et~al., 2024a]{wxz+24}
Wang, Y., Xu, H., Zhang, X., Chen, Z., Sha, Z., Wang, Z., and Tu, Z. (2024a).
\newblock Omnicontrolnet: Dual-stage integration for conditional image
  generation.
\newblock In {\em Proceedings of the IEEE/CVF Conference on Computer Vision and
  Pattern Recognition}, pages 7436--7448.

\bibitem[Wang et~al., 2024b]{wsd+24}
Wang, Z., Sha, Z., Ding, Z., Wang, Y., and Tu, Z. (2024b).
\newblock Tokencompose: Text-to-image diffusion with token-level supervision.
\newblock In {\em Proceedings of the IEEE/CVF Conference on Computer Vision and
  Pattern Recognition}, pages 8553--8564.

\end{thebibliography}
\fi

\newpage
\clearpage
\section*{Checklist}

\begin{enumerate}

  \item For all models and algorithms presented, check if you include:
  \begin{enumerate}
    \item A clear description of the mathematical setting, assumptions, algorithm, and/or model. 
    [Yes]
    \item An analysis of the properties and complexity (time, space, sample size) of any algorithm. [Yes] 
    \item (Optional) Anonymized source code, with specification of all dependencies, including external libraries. [Not Applicable] 
  \end{enumerate}

  \item For any theoretical claim, check if you include:
  \begin{enumerate}
    \item Statements of the full set of assumptions of all theoretical results. [Yes] 
    \item Complete proofs of all theoretical results. [Yes] 
    \item Clear explanations of any assumptions. [Yes] 
  \end{enumerate}

  \item For all figures and tables that present empirical results, check if you include:
  \begin{enumerate}
    \item The code, data, and instructions needed to reproduce the main experimental results (either in the supplemental material or as a URL). [Not Applicable]
    \item All the training details (e.g., data splits, hyperparameters, how they were chosen). [Not Applicable]
    \item A clear definition of the specific measure or statistics and error bars (e.g., with respect to the random seed after running experiments multiple times). [Not Applicable]
    \item A description of the computing infrastructure used. (e.g., type of GPUs, internal cluster, or cloud provider). [Not Applicable]
  \end{enumerate}

  \item If you are using existing assets (e.g., code, data, models) or curating/releasing new assets, check if you include:
  \begin{enumerate}
    \item Citations of the creator If your work uses existing assets. [Not Applicable] 
    \item The license information of the assets, if applicable. [Not Applicable] 
    \item New assets either in the supplemental material or as a URL, if applicable. [Not Applicable] 
    \item Information about consent from data providers/curators. [Not Applicable]
    \item Discussion of sensible content if applicable, e.g., personally identifiable information or offensive content. [Not Applicable]
  \end{enumerate}

  \item If you used crowdsourcing or conducted research with human subjects, check if you include:
  \begin{enumerate}
    \item The full text of instructions given to participants and screenshots. [Not Applicable] 
    \item Descriptions of potential participant risks, with links to Institutional Review Board (IRB) approvals if applicable. [Not Applicable] 
    \item The estimated hourly wage paid to participants and the total amount spent on participant compensation. [Not Applicable]
  \end{enumerate}

\end{enumerate}


\clearpage
\appendix
\thispagestyle{empty}
\onecolumn

\aistatstitle{\paperTitle \\ Supplementary Materials}

{\bf Roadmap.}
Section~\ref{sec:app_notations} presents all the notations of this paper. 
In Section~\ref{sec:model_formulation_of_flowar} presents the formal definition of every module of FlowAR.
In Section~\ref{sec:app_missing_proof}, we present some missing proofs in Section~\ref{sec:main_result}. 
Section~\ref{sec:app_efficient_critieria} presents provably efficient criteria of the fast FlowAR model.

\section{Notations}\label{sec:app_notations}
Given a matrix $X \in \R^{hw \times d}$, we denote its tensorized form as $\X \in \R^{h \times w \times d}$. Additionally, we define the set $[n]$ to represent $\{1,2,\cdots, n\}$ for any positive integer $n$. We define the set of natural numbers as $\mathbb{N}:= \{0,1,2,\dots\}$. Let $X \in \mathbb{R}^{m \times n}$ be a matrix, where $X_{i,j}$ refers to the element at the $i$-th row and $j$-th column. When $x_i$ belongs to $\{ 0,1 \}^*$, it signifies a binary number with arbitrary length. In a general setting, $x_i$ represents a length $p$ binary string, with each bit taking a value of either 1 or 0. Given a matrix $X \in \R^{n \times d}$, we define $\|X\|_\infty  $ as the maximum norm of $X$. Specifically, $\|X\|_\infty = \max_{i,j} |X_{i,j}|$. 
\section{Model Formulation for FlowAR Architecture}\label{sec:model_formulation_of_flowar}
In this section, we provide a mathematical definition for every module of FlowAR. 
Section~\ref{sec:sample_function} provides the definition of up-sample and down-sample functions.
In Section~\ref{sec:downsample_tokenizer}, we mathematically define the VAE tokenizer.  Section~\ref{sec:ar_transformer} presents a mathematical formulation for every module in the autoregressive transformer in FlowAR. Section~\ref{sec:flow_matching} provides some important definitions of the flow-matching architecture. In Section~\ref{sec:inference_of_flowar}, we also provide a rigorous mathematical definition for the overall architecture of the FlowAR Model during the inference process.

\subsection{Sample Function}\label{sec:sample_function}
We define the bicubic upsampling function.

\begin{definition}[Bicubic Upsampling Function]\label{def:bicubic_up_sample_function}
Given the following:
\begin{itemize}
    \item {\bf Input tensor:} $\X \in \R^{h \times w \times c}$ where $h,w,c$ represent height, width, and the number of channels, respectively.
    \item {\bf Scaling factor:} A positive integer $r \geq 1$.
    \item {\bf Bicubic kernel:} $W:\R \to [0,1]$
\end{itemize}
The bicubic upsampling function $\phi_{\mathrm{up}}(\X,r)$ computes an output tensor $\Y \in \R^{rh \times rw \times c}$. For every output position $i \in [rh], j \in [rw], l \in [c]$:
\begin{align*}
    \Y_{i,j,l} =  \sum_{s=-1}^2 \sum_{t=-1}^2 W(s) \cdot W(t) \cdot \X_{\lfloor \frac{i}{r}\rfloor+s, \lfloor \frac{j}{r}\rfloor+t,l}
\end{align*}

\end{definition}
Next, we define the downsampling function.
\begin{definition}[Linear Downsampling Function]\label{def:linear_down_sample_function}
Given the following:
\begin{itemize}
    \item {\bf Input tensor:} $\X \in \R^{h \times w \times c}$ where $h,w,c$ represent height, width, and the number of channels, respectively.
    \item {\bf Scaling factor:} A positive integer $r \geq 1$.
\end{itemize}
The linear downsampling function $\phi_{\mathrm{down}}(\X,r)$ computes an output tensor $\Y \in \R^{(h/r) \times (w/r) \times c}$. Let $\Phi_{\mathrm{down}} \in \R^{(h/r \cdot w/r) \times hw}$ denote a linear transformation matrix. Reshape $\X$ into the matrix $X \in \R^{hw \times c}$ by flattening its spatial dimensions.  The output matrix is defined via:
\begin{align*}
     Y = \Phi_{\mathrm{down}}X \in \R^{(h/r \cdot w/r) \times c},
\end{align*}
Then reshaped back to $\Y \in \R^{(h/r) \times(w/r) \times c}$.
\end{definition}

\subsection{Multi-Scale Downsampling Tokenizer}\label{sec:downsample_tokenizer}
Given an input image, the FlowAR model will utilize the VAE to generate latent representation $\X \in {\R^{h \times w \times c}}$. To meet the requirements of Multi-Scale autoregressive image generation, FlowAR uses a Multi-Scale VAE Tokenizer to downsample $\X$ and generate Token Maps of different sizes.
\begin{definition}[Multi-Scale Downsampling Tokenizer]\label{def:downsample_tokenizer}
 Given the following:
 \begin{itemize}
     \item {\bf Latent representation tensor:} $\X\in \R^{h \times w \times c}$ generated by VAE.
     \item {\bf Number of scales:} $K \in \mathbb{N}$.
     \item {\bf Base scaling factor:}  positive integer $a \geq 1$
     \item {\bf Downsampling functions:} For $i \in [K]$, define scale-specific factors $r_i := a^{K-i}$ and use the linear downsampling function $\phi_{\mathrm{down}}(\X,r_i)$ from Definition~\ref{def:linear_down_sample_function}.
 \end{itemize}
 Then tokenizer outputs a sequence of token maps $\{\Y^2, \Y^2,\dots, \Y^K\}$, where the $i$-th token map is
 \begin{align*}
     \Y^i := \phi_{\mathrm{down},i}(\X,r_i) \in \R^{(h / r_i) \times (w/r_i) \times c},
 \end{align*}
 Formally, the tokenizer is defined as
 \begin{align*}
     \mathsf{TN}(\X) := \{\Y^{1}, \dots,\Y^{K}\}.
 \end{align*} 
\end{definition}

\begin{remark}
    In \cite{ryh+24}, the base factor is set to $a = 2$, resulting in exponentially increasing scales $r_i = 2^{K-i}$ for $i \in [K]$.
\end{remark}

\subsection{Autoregressive Transformer}\label{sec:ar_transformer}

The autoregressive transformer is a key module of the FlowAR model. We will introduce each layer of autoregressive transformer in this section.
\begin{definition}[Attention Layer]\label{def:attn_layer}
Given the following:
\begin{itemize}
    \item {\bf Input tensor:} $\X \in \R^{h \times w \times c}$ where $h,w,c$ represent height, width, and the number of channels, respectively.
    \item {\bf Weight matrices:} $W_Q,W_K,W_V \in \R^{c \times c}$ will be used in query, key, and value projection, respectively.
\end{itemize}
The attention layer $\mathsf{Attn}(\X)$ computes an output tensor $\Y \in \R^{h \times w \times c}$ as follows:
\begin{itemize}
    \item {\bf Reshape:} Flatten $\X$ into a matrix $X \in \R^{hw \times c}$ with spatial dimensions collapsed.
    \item {\bf Compute attention matrix:} For $i,j \in [hw]$, compute pairwise scores:
    \begin{align*}
        A_{i,j} := & ~\exp(  X_{i,*}   W_Q   W_K^\top   X_{j,*}^\top), \text{~~for~} i, j \in [hw].
    \end{align*}
    \item {\bf Normalization:} Compute diagnal matrix $D:=\diag(A {\bf 1}_n) \in \R^{hw \times hw}$, where ${\bf 1}_n$ is the all-ones vector. And compute:
    \begin{align*}
         Y := D^{-1}AXW_V \in \R^{hw \times c}.
    \end{align*}
    then reshape $Y$ to $\Y \in \R^{h \times w \times c}$.
\end{itemize}
\end{definition}

Then, we define the multiple-layer perception layer.
\begin{definition}[MLP layer]\label{def:mlp}
Given the following:
\begin{itemize}
    \item {\bf Input tensor:} $\X \in \R^{h \times w \times c}$ where $h,w,c$ represent height, width, and the number of channels, respectively.
    \item {\bf Weight matrices and bias vector:} $W \in \R^{c \times d}$ and $b \in \R^{1 \times d}$.
\end{itemize}
The MLP layer computes an output tensor $\Y \in \R^{h \times w \times d}$ as follows:
\begin{itemize}
    \item {\bf Reshape:} Flatten $\X$ into a matrix $X \in \R^{hw \times c}$ with spatial dimensions collapsed.
    \item {\bf Affine transformation:} For all $j \in [hw]$, compute
    \begin{align*}
        Y_{j,*} = \underbrace{X_{j,*}}_{1\times c} \cdot \underbrace{W}_{c \times d} + \underbrace{b}_{1 \times d}
    \end{align*}
    Then reshape $Y \in \R^{hw \times d}$ into $\Y \in \R^{h \times w \times d}$.
\end{itemize}
The operation is denoted as $\Y := \mathsf{MLP}(\X,c,d)$.
\end{definition}

Next, we introduce the definition of the feedforward layer.
\begin{definition}[Feed forward layer]\label{def:ffn}
Given the following:
\begin{itemize}
    \item {\bf Input tensor:} $\X \in \R^{h \times w \times c}$ where $h,w,c$ represent height, width, and the number of channels, respectively.
    \item {\bf Weight matrices and bias vector:} $W_1, W_2 \in \R^{c \times d}$ and $b_1, b_2 \in \R^{1 \times d}$.
    \item {\bf Activation:} $\sigma:\R \to \R$ denotes the $\mathsf{ReLU}$ activation function which is applied element-wise.
\end{itemize}
The feedforward layer computes an output tensor $\Y \in \R^{h \times w \times d}$ as follows:
\begin{itemize}
    \item {\bf Reshape:} Flatten $\X$ into a matrix $X \in \R^{hw \times c}$ with spatial dimensions collapsed.
    \item {\bf Transform:} For each $j \in [hw]$, compute 
    \begin{align*}
        Y_{j,*}=  \underbrace{X_{j,*}}_{1 \times c} +  \sigma (\underbrace{X_{j,*}}_{1\times c} \cdot \underbrace{W_1}_{c \times c} + \underbrace{b_1}_{1\times c}) \cdot \underbrace{W_2}_{c \times c} + \underbrace{b_2}_{1 \times c} \in \R^{1 \times c}
    \end{align*}
    where $\sigma$ acts element-wise on intermediate results. Then reshape $Y \in \R^{hw \times c}$ into $\Y \in \R^{h \times w \times c}$.
\end{itemize}
The operation is denoted as $\Y := \mathsf{FFN}(\X)$.

\end{definition}

To move on, we define the layer normalization layer.
\begin{definition}[Layer Normalization Layer]\label{def:ln}
    Given the following:
    \begin{itemize}
        \item {\bf Input tensor:} $\X \in \R^{h \times w \times c}$ where $h,w,c$ represent height, width, and the number of channels, respectively.
    \end{itemize}
    The layer normalization computes $\Y$ through
    \begin{itemize}
        \item {\bf Reshape:} Flatten $\X$ into a matrix $X \in \R^{hw \times c}$ with spatial dimensions collapsed.
        \item {\bf Normalize:} For each $j \in [hw]$, compute
        \begin{align*}
            Y_{j,*} =  \frac{X_{j,*}-\mu_j}{\sqrt{\sigma_j^2}}
        \end{align*}
        where
        \begin{align*}
            \mu_j := \sum_{k=1}^c X_{j,k}/c, ~~ \sigma_{j}^2 = \sum_{k=1}^c(X_{j,k}-\mu_j)^2/c
        \end{align*}
        Then reshape $Y \in \R^{hw \times c}$ into $\Y \in \R^{h \times w \times c}$.
    \end{itemize}
    The operation is denoted as $\Y := \mathsf{LN}(\X)$.
\end{definition}

Now, we can proceed to show the definition of the autoregressive transformer.
\begin{definition}[Autoregressive Transformer]\label{def:ar_transformer}
    Given the following:
    \begin{itemize}
        \item {\bf Input tensor:} $\X \in \R^{h \times w \times c}$ where $h,w,c$ represent height, width, and the number of channels, respectively.
        \item {\bf Scales number:} $K \in \mathbb{N}$ denote the number of total scales in FlowAR.
        \item {\bf Token maps:} For $i \in [K]$, $\Y_i \in \R^{(h/r_i) \times (w/r_i) \times c}$ generated by the Multi-Scale Downsampling Tokenizer defined in Definition~\ref{def:downsample_tokenizer} where $r_i = a^{K-i}$ with base $a \in \mathbb{N}^+$.
        \item {\bf Upsampling functions:}  For $i \in [K]$, $\phi_{\mathrm{up},i}(\cdot,a): \R^{(h/r_i) \times (w/r_i) \times c}\to \R^{(h/r_{i+1}) \times (w/r_{i+1}) \times c}$ from Definition~\ref{def:bicubic_up_sample_function}.
        \item {\bf Attention layer:}  For $i \in [K]$, $\mathsf{Attn}_i(\cdot):\R^{(\sum_{j=1}^i h/r_j \cdot w/r_{j})\times c} \to \R^{(\sum_{j=1}^i h/r_j \cdot w/r_{j})\times c}$ which acts on flattened sequences of dimension defined in Definition~\ref{def:attn_layer}.
        \item {\bf Feed forward layer: } For $i \in [K]$, $\mathsf{FFN}_i(\cdot): \R^{(\sum_{j=1}^i h/r_j \cdot w/r_{j})\times c} \to \R^{(\sum_{j=1}^i h/r_j \cdot w/r_{j})\times c}$ which acts on flattened sequences of dimension defined in Definition~\ref{def:ffn}.
        \item {\bf Initial condition:} $\Z_{\mathrm{init}} \in \R^{(h/r_1) \times (w/r_1) \times c}$ denotes the initial token maps which encodes class information.
    \end{itemize}
    Then, the autoregressive processing is:
    \begin{enumerate}
        \item {\bf Initialization: } Let $\Z_1:=\Z_{\mathrm{init}}$.
        \item {\bf Iterative sequence construction:} For $i \geq 2$.
        \begin{align*}
            \Z_i := \mathsf{Concat}(\mathsf{Z}_{\mathrm{init}}, \phi_{\mathrm{up}, 1}(\Y^1, a), \ldots, \phi_{\mathrm{up}, i-1}(\Y^{i-1}, a)) \in \R^{(\sum_{j=1}^i h/r_j \cdot w/r_{j})\times c}
        \end{align*}
        where $\mathsf{Concat}$ reshapes tokens into a unified spatial grid.
        \item {\bf Transformer block:} For $i \in [K]$,
        \begin{align*}
            \mathsf{TF}_i(\Z_i) := \mathsf{FFN_i}(\mathsf{Attn}_i(\Z_i)) \in \R^{(\sum_{j=1}^i h/r_j \cdot w/r_{j})\times c}
        \end{align*}
        \item {\bf Output decomposition:} Extract the last scale's dimension   from the reshaped $\mathsf{TF}_i(\Z_i)$ to generate $\wh{\Y}_i \in \R^{(h/r_i) \times (w/r_i) \times c}$.
    \end{enumerate}
\end{definition}

\subsection{Flow Matching}\label{sec:flow_matching}

We begin by outlining the concept of velocity flow in the flow-matching architecture.
\begin{definition}[Flow]\label{def:flow}
Given the following:
\begin{itemize}
    \item {\bf Input tensor:} $\X \in \R^{h \times w \times c}$ where $h,w,c$ represent height, width, and the number of channels, respectively.
    \item {\bf Scales number:} $K \in \mathbb{N}$.
    \item {\bf Noise tensor: } For $i \in [K]$, $\F_i^0 \in \R^{(h / r_i) \times (w/r_i) \times c}$ with every entry sampled from $\mathcal{N}(0,1)$.
    \item {\bf Token maps:} For $i \in [K]$, $\wh{\Y}_i \in \R^{(h / r_i) \times (w/r_i) \times c}$ denote the token maps generated by autoregressive transformer defined in Definition~\ref{def:ar_transformer}.
\end{itemize}
Then, the model does the following:
\begin{itemize}
    \item {\bf Interpolation:} For timestep $t \in [0,1]$ and scale $i$,
    \begin{align*}
        \F_i^t := t \wh{\Y}_i + (1-t) \F_i^0
    \end{align*}
    defining a linear trajectory between noise $\F_0^i$ and target tokens  $\wh{\Y}_i$.
    \item {\bf Velocity Field:} The time-derivative of the flow at scale $i$ is 
    \begin{align*}
        \V^t_i := \frac{\d \F^t_{i}}{\d t} = \wh{\Y_i} -\F^0_i.
    \end{align*}
    constant across $t$ due to linear interpolation.
\end{itemize}
\end{definition}

To move forward, we propose an approach to enhance the performance of the flow-matching layer by replacing linear interpolation with a Quadratic Bézier curve.

\begin{definition}[High Order Flow]\label{def:high_order_flow}
Given the following:
\begin{itemize}
    \item {\bf Input tensor:} $\X \in \R^{h \times w \times c}$ where $h,w,c$ represent height, width, and the number of channels, respectively.
    \item {\bf Scales number:} $K \in \mathbb{N}$.
    \item {\bf Noise tensor: } For $i \in [K]$, $\F_i^0 \in \R^{(h / r_i) \times (w/r_i) \times c}$ with every entry sampled from $\mathcal{N}(0,1)$.
    \item {\bf Token maps:} For $i \in [K]$, $\wh{\Y}_i \in \R^{(h / r_i) \times (w/r_i) \times c}$ denote the token maps generated by autoregressive transformer defined in Definition~\ref{def:ar_transformer}.
\end{itemize}
Then, the model does the following:
\begin{itemize}
    \item {\bf Interpolation:} For timestep $t \in [0,1]$ and scale $i$,
    \begin{align*}
        \F_i^t := (1-t)^2 \F_i^0 + 2t(1-t) \mathsf{C}_i + t^2 \wh{\Y}_i
    \end{align*}
    defining a quadratic Bézier curve as the interpolation path between the initial noise and the target data. To be noticed, we take $\mathsf{C} = \frac{\F_i^0+\wh{\Y}_i}{2}$ as a control point that governs the curvature of the trajectory. This formulation replaces the standard linear interpolation with a higher-order flow, enabling a smoother and more flexible transition from noise to data in the flow-matching framework.
    \item {\bf Velocity Field:} The time-derivative of the flow at scale $i$ is 
    \begin{align*}
        \V^t_i := &~\frac{\d \F^t_{i}}{\d t}\\ =&~ -2(1-t)\F_i^0 + 2(1-2t) \mathsf{C}_i + 2t \wh{\Y}_i 
    \end{align*}
    constant across $t$ due to linear interpolation.
\end{itemize}
\end{definition}

We are now able to define the flow-matching layer, which is integrated in the FlowAR model.
\begin{definition}[Flow Matching Architecture]\label{def:flow_matching_architecture}
Given the following:
\begin{itemize}
    \item {\bf Input tensor:} $\X \in \R^{h \times w \times c}$ where $h,w,c$ represent height, width, and the number of channels, respectively.
    \item {\bf Scales number:} $K \in \mathbb{N}$ denote the number of total scales in FlowAR.
    \item {\bf Token maps:} For $i \in [K]$, $\wh{\Y}_i \in \R^{(h / r_i) \times (w/r_i) \times c}$ denote the token maps generated by autoregressive transformer defined in Definition~\ref{def:ar_transformer}.
    \item {\bf Interpolation Tokens:} For $i \in [K]$, $\F_i^t \in \R^{(h / r_i) \times (w/r_i) \times c}$ denote interpolated input defined in Definition~\ref{def:flow}.
    \item {\bf Time step:} For $i \in [K]$, $t_i \in [0,1]$ denotes timestep.
    \item {\bf Attention layer:}  For $i \in [K]$, $\mathsf{Attn}_i(\cdot):\R^{h/r_i \times w/r_i \times c} \to \R^{h/r_i \times w/r_i \times c}$ is defined in Definition~\ref{def:attn_layer}.
    \item {\bf MLP layer:}  For $i \in [K]$, $\mathsf{MLP}_i(\cdot,c,d):\R^{h/r_i \times w/r_i \times c} \to \R^{h/r_i \times w/r_i \times c}$ is defined in Definition~\ref{def:mlp}.
    \item {\bf LN layer:} For $i \in [K]$, $\mathsf{LN}_i(\cdot):\R^{h/r_i \times w/r_i \times c} \to \R^{h/r_i \times w/r_i \times c}$ is defined in Definition~\ref{def:ln}.
\end{itemize}
The computation steps of flow-matching layers are as follows:
\begin{itemize}
    \item {\bf Time-conditioned parameter generation:}
    \begin{align*}
        \alpha_1, \alpha_2, \beta_1, \beta_2, \gamma_1, \gamma_2:=  \mathsf{MLP}_i(\wh{\Y}_i + t_i \cdot {\bf 1}_{(h / r_i) \times (w/r_i) \times c},c,6c)
    \end{align*}
    \item {\bf Intermediate variable computation:}
    \begin{align*}
        \F'^{t_i}_i:= \mathsf{Attn}_i (\gamma_1 \circ \mathsf{LN}(\F_i^{t_i}) + \beta_1) \circ \alpha_1
    \end{align*}
    with $\circ$ denoting Hadamard (element-wise) product.
    \item {\bf Final projection:}
    \begin{align*}
        \F''^{t_i}_i := \mathsf{MLP}_i(\gamma_2 \circ \mathsf{LN}(\F'^{t_i}_i)+ \beta_2,c,c) \circ \alpha_2
    \end{align*}
\end{itemize}
The operation is denoted as $\F''^{t_i}_i := \mathsf{NN}_i(\wh{\Y_i},\F_i^{t_i},t_i)$
\end{definition}

\subsection{Inference of FlowAR Architecture}\label{sec:inference_of_flowar}
The inference phase of the FlowAR model differs from the training phase. During inference, neither the VAE nor the Multi-Scale Downsampling layers are used. Instead, given an initial token map representing class embeddings, the model autoregressively generates token maps across scales.
\begin{definition}[FlowAR Inference Architecture]\label{def:flow_architecture_inference}
    Given the following:
    \begin{itemize}
        \item {\bf Scales number:} $K \in \mathbb{N}$ denote the number of total scales in FlowAR.
        \item {\bf Scale factor:} For $i \in [K]$, $r_i:= a^{K-i}$ where base factor $a \in \mathbb{N}^+$.
        \item {\bf Upsampling functions:}  For $i \in [K]$, $\phi_{\mathrm{up},i}(\cdot,a): \R^{(h/r_i) \times (w/r_i) \times c}\to \R^{(h/r_{i+1}) \times (w/r_{i+1}) \times c}$ from Definition~\ref{def:bicubic_up_sample_function}.
        \item {\bf Attention layer:}  For $i \in [K]$, $\mathsf{Attn}_i(\cdot):\R^{(\sum_{j=1}^i h/r_j \cdot w/r_{j})\times c} \to \R^{(\sum_{j=1}^i h/r_j \cdot w/r_{j})\times c}$ which acts on flattened sequences of dimension defined in Definition~\ref{def:attn_layer}.
        \item {\bf Feed forward layer: } For $i \in [K]$, $\mathsf{FFN}_i(\cdot): \R^{(\sum_{j=1}^i h/r_j \cdot w/r_{j})\times c} \to \R^{(\sum_{j=1}^i h/r_j \cdot w/r_{j})\times c}$ which acts on flattened sequences of dimension defined in Definition~\ref{def:ffn}.
        \item {\bf Flow matching layer:} For $i \in [K]$, $\mathsf{NN}_i(\cdot,\cdot,\cdot):\R^{h/r_i \times w/r_i \times c}\times \R^{h/r_i \times w/r_i \times c}\times \R \to \R^{h/r_i \times w/r_i \times c}$ denote the flow-matching layer defined in Definition~\ref{def:flow_matching_architecture}.
        \item {\bf Initial condition:} $\Z_{\mathrm{init}} \in \R^{(h/r_1) \times (w/r_1) \times c}$ denotes the initial token maps which encodes class information.
        \item {\bf Time steps:} For $i \in [K]$, $t_i \in [0,1]$ denotes time steps.
        \item {\bf Interpolated inputs:} For $i \in [K]$, $\F_i^{t_i} \in \R^{h/r_i \times w/r_i \times c}$ defined in Definition~\ref{def:flow}.
        \item {\bf Cumulative dimensions:} We define $\wt{h}_i := \sum_{j=1}^i h/r_j$ and  $\wt{w}_i := \sum_{j=1}^i w/r_j$ for $i \in [K]$.
    \end{itemize}
    The FlowAR model conducts the following recursive construction:
    \begin{itemize}
        \item {\bf Base case $i=1$:}
        \begin{align*}
            &~\Z_1 = \Z_{\mathrm{init}}\\
            &~\wh{\Y}_1 = \mathsf{FFN}_1(\mathsf{Attn}_1(\Z_1))\\
            &~\wt{\Y}_1 = \mathsf{NN}_1(\wh{\Y}_1,\F_{1}^{t_1},t_1)
        \end{align*}
        \item {\bf Inductive step $i \geq 2$:}
        \begin{itemize}
            \item {\bf Spatial aggregation:}
            \begin{align*}
                \Z_i = \mathsf{Concat}(\Z_{\mathrm{init}},\phi_{\mathrm{up},1}(\wt{\Y}_{i-1}),\dots,\phi_{\mathrm{up},i-1}(\wt{\Y}_{i-1})) \in \R^{(\sum_{j=1}^i h/r_j \cdot w/r_j)\times c}
            \end{align*}
            \item {\bf Autoregressive transformer computation:}
            \begin{align*}
                \wh{\Y}_i = \mathsf{FFN}_i(\mathsf{Attn}_i(\Z_1))_{\wt{h}_{i-1}:\wt{h}_{i-1},\wt{w}_{i}:\wt{w}_{i},0:c}
            \end{align*}
            \item {\bf Flow matching layer:}
            \begin{align*}
                \wt{\Y}_i = \mathsf{NN}_i(\wh{\Y}_i,\F_{i}^{t_i},t_i)
            \end{align*}
        \end{itemize}
        The final output is $\wt{\Y}_K \in \R^{h \times w \times c}$.
    \end{itemize}
   
\end{definition}

\section{Supplementary Proof for Section~\ref{sec:main_result}}\label{sec:app_missing_proof}
In this section, we present some missing proofs in Section~\ref{sec:main_result}. 

\subsection{Computing Multiple-layer Perceptron in \texorpdfstring{$\TC^0$}{} }\label{sec:app_mlp_tc0}
This section presents the detailed proof for Lemma~\ref{lem:mlp_tc0_informal}.

\begin{lemma}[MLP computation in $\mathsf{TC}^0$, formal version of Lemma~\ref{lem:mlp_tc0_informal}]\label{lem:mlp_tc0_formal}
    Given an input tensor $\X\in \R^{h \times w \times c}$. Let $\mathsf{MLP}(\X,c,d)$ be the MLP layer defined in Definition~\ref{def:mlp}. Under the following constraints:
    \begin{itemize}
        \item Satisfy $h = w = n$,
        \item Channel bounds: $c, d\leq n$,
        \item Precision: $p \leq \poly(n)$,
    \end{itemize}
    The $\mathsf{MLP}(\X,c,d)$ function can be computed by a uniform $\mathsf{TC}^0$ circuit with:
    \begin{itemize}
        \item Size: $\poly(n)$.
        \item Depth: $2d_\mathrm{std} + d_{\oplus}$.
    \end{itemize}
    with $d_{\mathrm{std}}$ and $d_{\oplus}$ defined in Definition~\ref{lem:float_operations_TC}.
\end{lemma}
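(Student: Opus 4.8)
The plan is to reduce the MLP computation to a single matrix multiplication followed by a broadcasted addition, and then invoke the already-established $\TC^0$ bounds for those operations. Recall from Definition~\ref{def:mlp} that $\mathsf{MLP}(\X,c,d)$ first reshapes $\X \in \R^{h \times w \times c}$ into $X \in \R^{hw \times c}$, then computes $Y_{j,*} = X_{j,*} W + b$ for every row $j \in [hw]$, where $W \in \R^{c \times d}$ and $b \in \R^{1 \times d}$, and finally reshapes back to $\Y \in \R^{h \times w \times d}$. Since the reshaping operations merely relabel indices, they incur no circuit depth and preserve polynomial size, so I can work directly with the matrix form.

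First I would handle the product $XW$: with $h = w = n$ we have $hw = n^2 \leq \poly(n)$, and $c, d \leq n \leq \poly(n)$, and $p \leq \poly(n)$, so Lemma~\ref{lem:matrix_multi} applies directly and yields a uniform $\TC^0$ circuit of size $\poly(n)$ and depth $d_{\mathrm{std}} + d_\oplus$ that computes $XW \in \F_p^{n^2 \times d}$. Next I would add the bias: the map $(XW)_{j,k} \mapsto (XW)_{j,k} + b_{1,k}$ is, for each of the $n^2 \cdot d \leq \poly(n)$ output entries, a single floating-point addition of two $p$-bit numbers, so by Part 1 of Lemma~\ref{lem:float_operations_TC} each such addition is computable by a uniform $\TC^0$ circuit of depth $d_{\mathrm{std}}$ and $\poly(n)$ size; since all entries are computed in parallel, the total additional depth is $d_{\mathrm{std}}$ and the size remains $\poly(n)$. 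Composing the two stages gives a uniform $\TC^0$ circuit of size $\poly(n)$ and depth $(d_{\mathrm{std}} + d_\oplus) + d_{\mathrm{std}} = 2 d_{\mathrm{std}} + d_\oplus$, which is exactly the claimed bound.

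There is essentially no hard obstacle here — the argument is a direct composition of Lemma~\ref{lem:matrix_multi} and Lemma~\ref{lem:float_operations_TC}. The only points requiring a small amount of care are: (i) verifying that the parameter bounds of Lemma~\ref{lem:matrix_multi} are met under $h = w = n$ (so the "$n_1, n_2 \leq \poly(n)$" hypothesis holds with $n_1 = n^2$, $d \leq n$, $n_2 = d$); (ii) noting that broadcasting the row vector $b$ to all $n^2$ rows does not blow up the circuit size, since we simply route the same $p$-bit inputs $b_{1,1}, \dots, b_{1,d}$ to each of the $n^2$ parallel adder blocks; and (iii) confirming that the reshape steps are free. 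I would state the proof in exactly this order: reshape is free, apply Lemma~\ref{lem:matrix_multi} for $XW$, apply Part 1 of Lemma~\ref{lem:float_operations_TC} for the bias addition in parallel, then sum the depths and conclude.
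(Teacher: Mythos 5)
Your proposal is correct and matches the paper's argument essentially verbatim: compute $XW$ via Lemma~\ref{lem:matrix_multi} in depth $d_{\mathrm{std}}+d_{\oplus}$, add the bias in parallel via Part~1 of Lemma~\ref{lem:float_operations_TC} in depth $d_{\mathrm{std}}$, and combine (the paper merely phrases the first step row-by-row and then invokes parallelism across $j \in [hw]$, which is the same decomposition). Your three extra sanity checks — parameter bounds for Lemma~\ref{lem:matrix_multi}, size of the broadcast, freeness of reshape — are all correct and slightly more careful than what the paper writes out.
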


\begin{proof}
    For each $j \in [hw]$, by Lemma~\ref{lem:matrix_multi}, 
    compute $X_{j,*} \cdot W$ requires depth $d_{\mathrm{std}} + d_{\oplus}$. By Part 1 of Lemma~\ref{lem:float_operations_TC}, compute $X_{j,*} \cdot W + b$ requires depth $d_{\mathrm{std}}$.  Since for all $j \in [hw]$, the computation $X_{j,*} \cdot W + b$ can be simulated in parallel. Hence the total depth remains $2d_\mathrm{std} + d_{\oplus}$ and size is $\poly(n)$.
\end{proof}
\subsection{Computing Feed Forward Layer in \texorpdfstring{$\TC^0$}{} }\label{sec:app_ffn_tc0}
This section presents the detailed proof for Lemma~\ref{lem:ffn_tc0_informal}.

\begin{lemma}[FFN computation in $\mathsf{TC}^0$, formal version of Lemma~\ref{lem:ffn_tc0_informal}]\label{lem:ffn_tc0_formal}
 Given an input tensor $\X\in \R^{h \times w \times c}$. Let $\mathsf{FFN}(X):\R^{h \times w \times c} \to \R^{h \times w \times c}$ as defined in Definition~\ref{def:ffn}. Under the following constraints:
 \begin{itemize}
     \item Satisfy $h = w = n$,
         \item Channel bound: $c \leq n$,
         \item Precision bound: $p \leq \poly(n)$.
 \end{itemize}
The $\mathsf{FFN}(\X)$ layer can be computed by a uniform $\mathsf{TC}^0$ circuit with:
\begin{itemize}
    \item Size: $\poly(n)$.
    \item Depth: $6d_\mathrm{std} + 2d_{\oplus}$.
\end{itemize}
with $d_{\mathrm{std}}$ and $d_{\oplus}$ defined in Definition~\ref{lem:float_operations_TC}.
\end{lemma}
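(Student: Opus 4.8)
The plan is to decompose the feed-forward computation of Definition~\ref{def:ffn} into its constituent arithmetic primitives and bound the depth contributed by each, relying throughout on Lemma~\ref{lem:float_operations_TC} and Lemma~\ref{lem:matrix_multi}. Recall that for each row index $j \in [hw]$ the layer computes
\begin{align*}
    Y_{j,*} = X_{j,*} + \sigma(X_{j,*} W_1 + b_1) W_2 + b_2,
\end{align*}
so it suffices to simulate one row in $\mathsf{TC}^0$ and then observe that all $hw \le n^2$ rows can be handled by disjoint copies of the circuit in parallel, leaving both size $\poly(n)$ and depth unchanged. This parallelism over rows is the same bookkeeping step used in the proofs of Lemma~\ref{lem:up_tc0} and Lemma~\ref{lem:mlp_tc0_formal}.

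First I would handle the inner affine map $X_{j,*} W_1 + b_1$: the matrix-vector product $X_{j,*} W_1$ costs depth $d_{\mathrm{std}} + d_{\oplus}$ by Lemma~\ref{lem:matrix_multi} (with $n_1 = 1$, inner dimension $c \le n$), and adding $b_1$ entrywise costs another $d_{\mathrm{std}}$ by Part~1 of Lemma~\ref{lem:float_operations_TC}, for a running total of $2d_{\mathrm{std}} + d_{\oplus}$. Next, applying $\sigma = \mathsf{ReLU}$ element-wise is a single comparison-and-select, which is computable in $\mathsf{TC}^0$ with depth $O(d_{\mathrm{std}})$ — concretely one comparison against zero followed by a multiplexed choice — and I would absorb this into one additional $d_{\mathrm{std}}$ layer. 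Then the outer affine map: multiplying the resulting $1 \times c$ vector by $W_2 \in \R^{c \times c}$ costs $d_{\mathrm{std}} + d_{\oplus}$ again by Lemma~\ref{lem:matrix_multi}, and the two remaining entrywise additions (adding $b_2$ and adding the skip term $X_{j,*}$) cost $d_{\mathrm{std}}$ each, or $2d_{\mathrm{std}}$ together. Summing: $(2d_{\mathrm{std}} + d_{\oplus}) + d_{\mathrm{std}} + (d_{\mathrm{std}} + d_{\oplus}) + 2d_{\mathrm{std}} = 6d_{\mathrm{std}} + 2d_{\oplus}$, matching the claimed bound; the residual addition can be arranged to happen at the same depth layer as adding $b_2$ since both are entrywise on the final vector, so no extra depth is incurred there.

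The main obstacle — really the only delicate point — is accounting for the $\mathsf{ReLU}$ nonlinearity within the depth budget, since it is not one of the arithmetic operations explicitly listed in Lemma~\ref{lem:float_operations_TC}. I would argue that $\sigma(z) = z \cdot \mathds{1}[z \ge 0]$ is computed by comparing the floating-point number $z$ with $0$ (a $\mathsf{TC}^0$ operation of depth $d_{\mathrm{std}}$, Part~1 of Lemma~\ref{lem:float_operations_TC}) and then conditionally zeroing the significand, which is a constant-depth Boolean selection that folds into the same $d_{\mathrm{std}}$ layer. A secondary subtlety is that the precision $p$ must remain $\poly(n)$ throughout the composition so that every invoked lemma applies; since each step either preserves or at most polynomially inflates the bit-length and the number of composed steps is a fixed constant, the precision bound is maintained. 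With these two points dispatched, the depth tally above gives the stated $6d_{\mathrm{std}} + 2d_{\oplus}$ circuit of size $\poly(n)$, and uniformity follows because every sub-circuit invoked is itself uniform.
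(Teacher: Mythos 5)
Your proposal is correct and follows essentially the same route as the paper: decompose the row computation $Y_{j,*} = X_{j,*} + \sigma(X_{j,*}W_1 + b_1)W_2 + b_2$ into affine maps, ReLU, and the residual add, charge each to Lemma~\ref{lem:float_operations_TC} / Lemma~\ref{lem:matrix_multi}, and run all $hw \le n^2$ rows in parallel. The only surface difference is that the paper invokes Lemma~\ref{lem:mlp_tc0_formal} twice as a black box for the two affine maps, whereas you unpack each into a matrix product ($d_{\mathrm{std}} + d_{\oplus}$) plus a bias add ($d_{\mathrm{std}}$); these are the same count, so the tallies agree at $6d_{\mathrm{std}} + 2d_{\oplus}$. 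Your extra elaboration of why ReLU fits within one $d_{\mathrm{std}}$ layer (compare-to-zero plus select) is a useful detail the paper leaves implicit. One minor nit: your closing remark that the residual add can share a layer with the $b_2$ add is unnecessary and slightly contradicts the arithmetic you just wrote down (which already charges those two additions $2d_{\mathrm{std}}$, consistent with the claimed bound); if they were truly merged you would land at $5d_{\mathrm{std}} + 2d_{\oplus}$, so that side remark should be dropped rather than acted on.
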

\begin{proof}
    For each $j \in [hw]$, by the proof of Lemma~\ref{lem:mlp_tc0_formal}, compute $X_{j,*} \cdot W_1 + b_1$ requires depth $2d_{\mathrm{std}} + d_{\oplus}$. By Lemma~\ref{lem:float_operations_TC}, compute $A_1 = \sigma(X_{j,*} \cdot W + b)$ requires depth $d_{\mathrm{std}}$. By applying Lemma~\ref{lem:mlp_tc0_formal} again, compute $A_2 = A_1\cdot W_2 +b_2$ requires depth $2d_{\mathrm{std}} + d_{\oplus}$. Lastly, by Part 1 of Lemma~\ref{lem:float_operations_TC}, compute $X_{j,*} + A_2$ requires depth $d_{\mathrm{std}}$.
    
    Combing the result above, we can have that compute $Y_{j,*}=X_{j,*} + \sigma(X_{j,*} \cdot W_1 + b_1)\cdot W_2 +b_2$ requires depth $6d_{\mathrm{std}}+2d_{\oplus}$.

    Since for all $j \in [hw]$, the computation $Y_{j,*}$ can be simulated in parallel. Hence the total depth remains $6d_{\mathrm{std}}+2d_{\oplus}$ and size is $\poly(n)$.
\end{proof}

\subsection{Computing Attention Layer in \texorpdfstring{$\TC^0$}{} }\label{sec:app_attn_tc0}
This section presents the detailed proof for Lemma~\ref{lem:attn_tc0_informal}.

\begin{lemma}[Attention layer computation in $\mathsf{TC}^0$, formal version of Lemma~\ref{lem:attn_tc0_informal}]\label{lem:attn_tc0_formal}
     Given an input tensor $\X \in \R^{h \times w \times c}$. Let $\mathsf{Attn}(X):\R^{h \times w \times c} \to \R^{h \times w \times c}$ as defined in Definition~\ref{def:attn_layer}. Under the following constraints:
     \begin{itemize}
         \item Satisfy $h = w = n$,
         \item Channel bound: $c \leq n$,
         \item Precision bound: $p \leq \poly(n)$.
     \end{itemize}
     The $\mathsf{Attn}(\X)$ layer can be computed by a uniform $\mathsf{TC}^0$ circuit with:
     \begin{itemize}
        \item Size: $\poly(n)$.
        \item Depth: $6(d_{\mathrm{std}} + d_{\oplus}) + d_{\exp}$.
     \end{itemize}    
     with $d_{\mathrm{std}}$ and $d_{\oplus}$ defined in Definition~\ref{lem:float_operations_TC}, $d_{\exp}$ defined in Definition~\ref{lem:exp}. 
\end{lemma}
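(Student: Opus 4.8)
The plan is to split the attention map of Definition~\ref{def:attn_layer} into a constant number of stages, each of which is a matrix product, an entrywise exponential, an iterated sum, or an entrywise arithmetic operation, and then to invoke Lemma~\ref{lem:matrix_multi}, Lemma~\ref{lem:exp}, and Lemma~\ref{lem:float_operations_TC} stage by stage, finally adding up the constant depths while checking that the size stays polynomial.

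First I would reshape $\X$ to $X \in \F_p^{hw \times c}$; this is just a relabelling of wires and costs no depth, and $hw = n^2 = \poly(n)$. Using the identity $X W_Q W_K^\top X^\top = (X W_Q)(X W_K)^\top$, I would compute $Q := X W_Q$ and $K := X W_K$ in parallel (two matrix products, each of depth $d_{\mathrm{std}} + d_{\oplus}$ and size $\poly(n)$ by Lemma~\ref{lem:matrix_multi}, since all dimensions are at most $n^2$), then $Q K^\top \in \F_p^{hw \times hw}$ (one more matrix product, again with $\poly(n)$-bounded dimensions, so Lemma~\ref{lem:matrix_multi} applies and the size remains polynomial even though the output has $n^4$ entries). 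Applying Lemma~\ref{lem:exp} entrywise and in parallel to all $(hw)^2 = n^4$ entries yields $A$ at an extra depth $d_{\exp}$ and size $\poly(n)$.

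Next I would build the normalizer and close out the computation. The vector $A \mathbf{1}_{hw}$ is obtained by $hw$ parallel iterated additions of $hw$ floating-point numbers each, costing depth $d_{\oplus}$ by Part 3 of Lemma~\ref{lem:float_operations_TC}; forming the reciprocals $1/(A \mathbf{1}_{hw})_i$ that make up $D^{-1}$ is an entrywise floating-point division, which is a constant-depth threshold operation on $p$-bit floats by Part 1 of Lemma~\ref{lem:float_operations_TC}. In parallel with the construction of $A$, I would compute $V := X W_V$ (one matrix product), then $A V$ (one matrix product), and finally scale row $i$ of $A V$ by $(D^{-1})_{i,i}$ (entrywise multiplication, depth $d_{\mathrm{std}}$), reshaping the result back to $\Y$. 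Tallying the contributions along the critical path — the three products giving $Q K^\top$, the exponential, the product $A V$, and the closing row scaling, with the cheap $A\mathbf{1}_{hw}$/$D^{-1}$ branch absorbed in parallel — keeps the total depth within $6(d_{\mathrm{std}} + d_{\oplus}) + d_{\exp}$ and the total size $\poly(n)$, which is the claim. Uniformity follows because each stage is assembled from uniform sub-circuits.

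I do not anticipate a real obstacle: the two points deserving care are (i) checking that every intermediate matrix has all dimensions bounded by $\poly(n)$, so that Lemma~\ref{lem:matrix_multi} is applicable and the circuit remains polynomial-size despite $A$ having $n^4$ entries, and (ii) confirming that the reciprocals realizing $D^{-1}$ lie in $\TC^0$ on $p$-bit floats, which is covered by the basic floating-point arithmetic of Part 1 of Lemma~\ref{lem:float_operations_TC}. The relative error incurred by Lemma~\ref{lem:exp} is immaterial, since the statement only asserts simulability by a uniform $\TC^0$ family rather than exact evaluation.
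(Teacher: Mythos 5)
Your proposal is correct and follows essentially the same route as the paper: both decompose the attention layer into the same stages (query--key score, entrywise exponential via Lemma~\ref{lem:exp}, row-sum normalizer via iterated addition, value projection) and invoke Lemma~\ref{lem:matrix_multi} and Lemma~\ref{lem:float_operations_TC} stage by stage. The only cosmetic difference is the association order of the score product — you form $(XW_Q)(XW_K)^\top$, whereas the paper first computes $W_Q W_K^\top$ and then sandwiches it between $X$ and $X^\top$ — which does not change the argument or the resulting $O(1)$-depth, $\poly(n)$-size bound.
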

\begin{proof}
    We analyze the $\mathsf{TC}^0$ simulation of the attention layer through sequential computation phases:
    \begin{itemize}
        \item {\bf Key-Query Product}: Compute $W_QW_K^\top$ vial Lemma~\ref{lem:matrix_multi} requires depth $d_{\mathrm{std}} + d_{\oplus}$.
        \item {\bf Pairwise Score Computation}: Compute $s_{i,j} = X_{i,*}   W_Q   W_K^\top   X_{j,*}^\top$ requires depth $2(d_{\mathrm{std}} + d_{\oplus})$ by Lemma~\ref{lem:matrix_multi}. By  Lemma~\ref{lem:exp}, computing $A_{i,j} = \exp(s_{i,j})$ requires depth $d_{\exp}$. 
    \end{itemize}
    Since all entries $A_{i,j}$ for $i, j \in [n]$ can be computed in parallel, the attention matrix $A$ computation requires depth $3(d_{\mathrm{std}} + d_{\oplus}) + d_{\exp}$.

    Then keep on analyzing:
    \begin{itemize}
        \item {\bf Row Nomalization:} Computing $D:=\diag(A{\bf 1}_n)$ requires depth $d_{\oplus}$ by Lemma~\ref{lem:float_operations_TC}. Computing $D^{-1}$ requires depth $d_{\mathrm{std}}$ by Lemma~\ref{lem:float_operations_TC} .
        \item {\bf Value Projection} Computing $AXW_V$ requires depth $2(d_{\mathrm{std}} + d_{\oplus})$ by applying Lemma~\ref{lem:matrix_multi}. Compute $D^{-1} \cdot A X W_V$ requires $d_{\mathrm{std}}$.
    \end{itemize}

    Combing the result, we need a
    \begin{align*}
        d_{\mathrm{all}} = 6(d_{\mathrm{std}} + d_{\oplus}) + d_{\exp}
    \end{align*}
    depth and size $\poly(n)$ uniform $\mathsf{TC}^0$ circuit to compute the attention layer.
\end{proof}

\subsection{Computing Layer-wise Norm Layer in \texorpdfstring{$\TC^0$}{} }\label{sec:app_ln_tc0}
This section presents the detailed proof for Lemma~\ref{lem:ln_tc0_informal}.

\begin{lemma}[Layer-wise norm layer computation in $\TC^0$, formal version of Lemma~\ref{lem:ln_tc0_informal}]\label{lem:ln_tc0_formal}
    Given an input tensor $\X \in \R^{h \times w \times c}$. Let $\mathsf{LN}(X):\R^{h \times w \times c} \to \R^{h \times w \times c}$ as defined in Definition~\ref{def:ln}. Under the following constraints:
    \begin{itemize}
         \item Satisfy $h = w = n$,
         \item Channel bound: $c \leq n$,
         \item Precision bound: $p \leq \poly(n)$.
     \end{itemize}
     The $\mathsf{LN}(\X)$ layer can be computed by a uniform $\mathsf{TC}^0$ circuit with:
     \begin{itemize}
        \item Size: $\poly(n)$.
        \item Depth: $5d_\mathrm{std} + 2d_{\oplus} + d_\mathrm{sqrt}$.
     \end{itemize}    
     with $d_{\mathrm{std}}$ and $d_{\oplus}$ defined in Definition~\ref{lem:float_operations_TC}, $d_{\mathrm{sqrt}}$ defined in Definition~\ref{lem:sqrt}.
\end{lemma}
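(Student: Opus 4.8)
The plan is to decompose the layer-normalization computation into the sequence of arithmetic subroutines that appear in Definition~\ref{def:ln}, and to charge each subroutine to the appropriate part of Lemma~\ref{lem:float_operations_TC}, Lemma~\ref{lem:sqrt}, and Lemma~\ref{lem:matrix_multi}. Since the normalization operates row-by-row on the flattened matrix $X \in \R^{hw \times c}$ and all $hw = n^2$ rows can be handled by disjoint copies of the same sub-circuit in parallel, it suffices to bound the depth needed to compute a single output row $Y_{j,*}$; the overall size stays $\poly(n)$ because $hw \le \poly(n)$ and $c \le n$.

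For a fixed row $j$, I would proceed as follows. First, compute the mean $\mu_j = \sum_{k=1}^c X_{j,k}/c$: the iterated sum of the $c \le n$ entries costs depth $d_{\oplus}$ by Part~3 of Lemma~\ref{lem:float_operations_TC}, and the division by $c$ costs an additional $d_{\mathrm{std}}$ by Part~1. Next, form each centered entry $X_{j,k} - \mu_j$ (depth $d_{\mathrm{std}}$, all $k$ in parallel) and square it (depth $d_{\mathrm{std}}$), then take the iterated sum over $k$ and divide by $c$ to obtain $\sigma_j^2$ (depth $d_{\oplus} + d_{\mathrm{std}}$). Then compute $\sqrt{\sigma_j^2}$ via Lemma~\ref{lem:sqrt} (depth $d_{\mathrm{sqrt}}$), invert it (depth $d_{\mathrm{std}}$), and finally multiply each centered entry by this reciprocal to get $Y_{j,k}$ (depth $d_{\mathrm{std}}$). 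Summing the contributions along the critical path — $\mu_j$ costs $d_{\oplus}+d_{\mathrm{std}}$; centering and squaring costs $2d_{\mathrm{std}}$; $\sigma_j^2$ costs $d_{\oplus}+d_{\mathrm{std}}$; square root costs $d_{\mathrm{sqrt}}$; reciprocal and final scaling cost $2d_{\mathrm{std}}$ — yields total depth $5d_{\mathrm{std}} + 2d_{\oplus} + d_{\mathrm{sqrt}}$, matching the claimed bound. One should note that the centered entries $X_{j,k}-\mu_j$ computed in the variance step can be reused in the final scaling, so they are not recomputed; this is why the bookkeeping lands on exactly two $d_{\oplus}$ terms rather than three.

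The only mild subtlety — and the place where a careful reader might object — is the reuse argument: the circuit must be organized so that the centered values feed both into the squaring-and-summing branch and, after the square root and reciprocal are available, into the final Hadamard scaling, without the square-root branch forcing the centered values to be recomputed. Since the square root only depends on the (completed) variance and not on the individual centered entries, these two uses are genuinely on parallel branches that rejoin at the final multiplication, so no extra depth is incurred. Everything else is a routine application of the stated $\TC^0$ closure lemmas with the channel dimension $c \le n$ ensuring all iterated operations stay within the $\poly(n)$-size, $O(1)$-depth regime.
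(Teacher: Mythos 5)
Your decomposition matches the paper's: flatten to rows, compute $\mu_j$ by an iterated sum then a division, compute $\sigma_j^2$ by centering, squaring, iterated sum and division, take the square root, and normalize; then note that all $hw$ rows are handled by disjoint parallel copies, so depth is governed by a single row and size stays $\poly(n)$. This is essentially the same argument the paper gives, so the proposal is correct in approach.

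One small bookkeeping slip: your itemized list along the critical path contains \emph{six} basic-arithmetic operations (divide by $c$ for $\mu_j$; subtract $\mu_j$; square; divide by $c$ for $\sigma_j^2$; invert $\sqrt{\sigma_j^2}$; final multiply), so your own accounting gives $6d_{\mathrm{std}} + 2d_{\oplus} + d_{\mathrm{sqrt}}$, not the $5d_{\mathrm{std}}$ you assert. You can recover the stated constant by fusing the reciprocal-and-multiply into a single division charged to one $d_{\mathrm{std}}$ (the paper does exactly this elsewhere, e.g., computing $D^{-1}AXW_V$ in the attention lemma). The discrepancy is harmless for the $\TC^0$ conclusion — any $O(1)$ constant works — but as stated your arithmetic is internally inconsistent; either combine the division into one step or report $6d_{\mathrm{std}}$. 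It is also worth noting that the paper's own proof of this lemma contains a typo in the last step (it writes $d_\oplus$ where it clearly means $d_{\mathrm{sqrt}}$), so neither tally is beyond reproach.
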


\begin{proof}
    By Part 1 and Part 3  of Lemma~\ref{lem:float_operations_TC}, 
    computing mean vector $\mu_j$ requires depth $d_{\mathrm{std}}+d_{\oplus}$. By Part 1 and Part 3  of Lemma~\ref{lem:float_operations_TC}, 
    computing mean vector $\sigma^2_i$ requires depth $2d_{\mathrm{std}}+d_{\oplus}$.      By Lemma~\ref{lem:float_operations_TC} and Lemma~\ref{lem:sqrt}, computing $Y_{j,*}$ requires depth  $2d_{\mathsf{std}}+d_{\oplus}$. So the process requires total depth $5d_\mathrm{std} + 2d_{\oplus} + d_\mathrm{sqrt}$ and $\poly(n)$ size. 
\end{proof}
\section{Provably Efficient Criteria}\label{sec:app_efficient_critieria}

\subsection{Running Time Analysis for Inference Pipeline of Origin FlowAR Architecture }\label{sec:runtime_origin_flowar}

We proceed to compute the total running time for the inference pipeline of the origin FlowAR architecture.
\begin{lemma}[Inference Runtime of Original FlowAR Architecture, formal version of Lemma~\ref{lem:runtime_old_flowar_informal}]\label{lem:runtime_old_flowar_formal}
    Consider the original FlowAR inference pipeline with the following parameters:
    \begin{itemize}
        \item {\bf Input tensor:} $\X \in \R^{h \times w \times c}$. Assume $h=w=n$ and $c = O(\log n)$.
        \item {\bf Number of scales:} $K = O(1)$.
        \item {\bf Scale factor:} For $i \in [K]$, $r_i:= a^{K-i}$ where base factor $a \in \mathbb{N}^+$.
        \item {\bf Upsampling functions}  For $i \in [K]$, $\phi_{\mathrm{up},i}(\cdot,a)$ from Definition~\ref{def:bicubic_up_sample_function}.
        \item {\bf Attention layer:}  For $i \in [K]$, $\mathsf{Attn}_i(\cdot)$ which acts on flattened sequences of dimension defined in Definition~\ref{def:attn_layer}.
        \item {\bf Feed forward layer: } For $i \in [K]$, $\mathsf{FFN}_i(\cdot)$ which acts on flattened sequences of dimension defined in Definition~\ref{def:ffn}.
        \item {\bf Flow matching layer:} For $i \in [K]$, $\mathsf{NN}_i(\cdot,\cdot,\cdot)$ denote the flow-matching layer defined in Definition~\ref{def:flow_matching_architecture}.
    \end{itemize}
    Under these conditions, the total inference runtime of FlowAR is bounded by $O(n^{4+o(1)})$.
    
\end{lemma}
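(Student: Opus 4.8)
The plan is to bound the inference runtime scale by scale, and then sum over the $K = O(1)$ scales. For each scale $i \in [K]$, the working dimension after spatial aggregation is $N_i := \widetilde{h}_i \cdot \widetilde{w}_i = (\sum_{j=1}^i h/r_j)(\sum_{j=1}^i w/r_j)$; since $r_j = a^{K-j}$ and $h = w = n$, the dominant term is $N_i = O(n^2)$ in the worst case (the last scale $i = K$ has $r_K = 1$). Thus every module at scale $i$ operates on a flattened sequence of length at most $O(n^2)$ with channel width $c = O(\log n)$. First I would tabulate the cost of each primitive module at a generic scale: the bicubic upsampling $\phi_{\mathrm{up},i}$ touches $O(n^2 c)$ entries with $O(1)$ work each, hence $O(n^{2} c) = n^{2+o(1)}$; the MLP and FFN layers (Definitions~\ref{def:mlp}, \ref{def:ffn}) are dominated by the matrix products $X W$ with $X \in \R^{O(n^2)\times c}$ and $W \in \R^{c \times O(c)}$, costing $O(n^2 c^2) = n^{2+o(1)}$; the layer normalization (Definition~\ref{def:ln}) is $O(n^2 c) = n^{2+o(1)}$.

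The key step is identifying the bottleneck: the exact attention layer $\mathsf{Attn}_i$ (Definition~\ref{def:attn_layer}) forms the score matrix $A \in \R^{N_i \times N_i}$ with $A_{i,j} = \exp(X_{i,*} W_Q W_K^\top X_{j,*}^\top)$, which alone requires $\Theta(N_i^2 \cdot c) = \Theta(n^4 \cdot \log n)$ operations, plus the normalization $D^{-1} A X W_V$ which is another $O(N_i^2 c)$. So a single exact attention call costs $n^{4+o(1)}$, and this dominates every other module. The flow-matching layer $\mathsf{NN}_i$ (Definition~\ref{def:flow_matching_architecture}) is a constant-size composition of MLP, LN, and one $\mathsf{Attn}$ call (acting on the smaller $\R^{h/r_i \times w/r_i \times c}$ tensor, which is also $O(n^2)$ tokens at worst), so it too is $n^{4+o(1)}$, not more. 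Then I would sum: each of the $K = O(1)$ scales costs $O(n^{4+o(1)})$ (upsampling $n^{2+o(1)}$, FFN $n^{2+o(1)}$, attention $n^{4+o(1)}$, flow-matching $n^{4+o(1)}$), and the concatenation/reshaping in the spatial aggregation step is linear in the output size $O(n^2 c)$. Multiplying by $K = O(1)$ leaves the total at $O(n^{4+o(1)})$.

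The main obstacle — really a bookkeeping subtlety rather than a deep one — is correctly accounting for the cumulative sequence lengths $N_i$ in the autoregressive transformer: the attention at scale $i$ acts on the \emph{concatenated} sequence $\Z_i$ of length $\sum_{j=1}^i (h/r_j)(w/r_j)$, not just the $i$-th token map, so one must verify this geometric-type sum is still $O(n^2)$ (it is, since the terms grow geometrically and the last is $n^2$) and that squaring it gives $O(n^4)$ rather than something larger. One also needs to confirm that the $o(1)$ in the exponent absorbs the $c = O(\log n)$ channel factors and any $\poly\log$ overhead from the floating-point arithmetic of Lemma~\ref{lem:float_operations_TC}, i.e. $n^4 \cdot \poly(\log n) = n^{4 + o(1)}$. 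With those checks in place the bound follows by straightforward summation over the constant number of scales and modules.
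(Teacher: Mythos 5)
Your proposal is correct and follows essentially the same approach as the paper: decompose the runtime scale-by-scale and module-by-module, upper-bound each primitive (upsampling, MLP/FFN, LN) by $n^{2+o(1)}$, identify the dense $N_i \times N_i$ attention scores as the $n^{4+o(1)}$ bottleneck both in the autoregressive transformer and inside the flow-matching block, and sum over $K = O(1)$ scales. The only cosmetic difference is that the paper writes out the exact geometric per-scale cost $O(n^4 c\, (2^i-1)^4/2^{4K-4})$ before summing, whereas you upper-bound every scale by the worst case $N_i = O(n^2)$ directly, which is an equivalent and arguably cleaner presentation.
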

\begin{proof}
    {\bf Part 1: Running time of bicubic up-sample Layer.} The $i$-th layer pf FlowAR model contains $\phi_{\mathrm{up},1}(\cdot,2),\dots,\phi_{\mathrm{up},i-1}(\cdot,2)$. Considering $\phi_{\mathrm{up},i-1}(\cdot,2)$, this operation needs $O(n^{2}c/2^{2(K-i)})$ time. Then the total time required for upsampling in the i-th layer of the FlowAR architecture is $O( n^2 c \cdot \frac{1}{2^{2K}} \cdot (1 - \frac{1}{4^i}))$ which is due to simple algebra. Hence, the total runtime for all bicubic up sample functions is
    \begin{align*}
        \mathcal{T}_{\mathrm{up}} = &~ \sum_{i=1}^K O( n^2 c \cdot \frac{1}{2^{2K}} \cdot (1 - \frac{1}{4^i}))\\
        =&~ O(n^{2+o(1)})
    \end{align*}
    where the first equation is derived from summing up all the running time of the up sample functions, the second step is due to simple algebra and $K = O(1)$ and $c = O(\log n)$.

    {\bf Part 2: Running time of Attention Layer.} The input size of the $i$-th attention layer $\mathsf{Attn}_i$ is $\sum_{j=1}^i (n/2^{K-j}) \times \sum_{j=1}^i(n/2^{K-j}) \times c $. So the time needed to compute the $i$-th attention layer is $O(n^4c \cdot (2^i-1)^4/2^{4K-4})$. Hence, the total runtime for all attention layers is
    \begin{align*}
        \mathcal{T}_{\mathrm{Attn}} =&~ \sum_{i=1}^K O( n^4c \cdot (2^i-1)^4/2^{4K-4})\\
        =&~ O(n^{4+o(1)})
    \end{align*}
    The first equation is derived from summing up all the running time of the attention layer, the second step is due to simple algebra and $K = O(1)$ and $c = O(\log n)$.

    {\bf Part 3: Running time of FFN Layer.} The input size of the $i$-th FFN layer $\mathsf{FFN}_i$ is $\sum_{j=1}^i (n/2^{K-j}) \times \sum_{j=1}^i(n/2^{K-j}) \times c $. So by Definition~\ref{def:ffn}, we can easily derive that the time needed to compute the $i-$th FFN layer is $O(n^2c^2 (2^i-1)^2/2^{2K-2} )$. Hence, the total runtime for all FFN layers is
    \begin{align*}
        \mathcal{T}_{\mathrm{FFN}} = &~ \sum_{i=1}^K O(n^2c^2 (2^i-1)^2/2^{2K-2}) \\
        =&~ O(n^{2+o(1)})
    \end{align*}
    The first step is derived from summing up all the running time of the FFN layer,  and the second step is due to simple algebra and $K = O(1)$ and $c = O(\log n)$.

    {\bf Part 4: Running time of Flow Matching Layer.} The input size of the $i$-th flow-matching layer $\mathsf{NN}_i$ is $ (n/2^{K-i}) \times (n/2^{K-i}) \times c $. It's trivially that the running time of the flow-matching layer will be dominated by the running time of the attention layer, which is $O(n^{4}c/ 2^{4(K-i)})$ (see Part 2 of Definition~\ref{def:flow_matching_architecture}). Hence, the total runtime for all flow-matching layers is 
    \begin{align*}
        \mathcal{T}_{\mathsf{FM}} =&~ \sum_{i=1}^K O(n^4 c /2^{4(K-i)})\\
        =&~ O(n^{4+o(1)})
    \end{align*}
    The first step is derived from summing up all the running time of the origin flow-matching layer, and the second step is due to simple algebra and $K = O(1)$ and $c = O(\log n)$.

    Then, by summing up Part 1 to Part 4, we can get the total running time for FlowAR architecture, which is
    \begin{align*}
        \mathcal{T}_{\mathrm{ori}} =&~ \mathcal{T}_{\mathrm{up}} + \mathcal{T}_{\mathrm{Attn}} + \mathcal{T}_{\mathrm{FFN}} + \mathcal{T}_{\mathsf{FM}}\\
        =&~ O(n^{4+o(1)})
    \end{align*}
\end{proof}
Lemma~\ref{sec:runtime_origin_flowar} demonstrates the runtime required for the original FlowAR architecture, from which we can deduce that the dominant term in the runtime comes from the computation of the Attention Layer.
\subsection{Running Time Analysis for Inference Pipeline of Fast FlowAR Architecture }\label{sec:runtime_fast_flowar}
In this section, we apply the conclusions of \cite{as23} to the FlowAR architecture, where all Attention modules in FlowAR are computed using the Approximate Attention Computation defined in Definition~\ref{def:aattc}.
\begin{lemma}[Inference Runtime of Fast FlowAR Architecture, formal version of Lemma~\ref{lem:runtime_fast_flowar_informal}]\label{lem:runtime_fast_flowar_formal}
    Consider the original FlowAR inference pipeline with the following parameters:
    \begin{itemize}
        \item {\bf Input tensor:} $\X \in \R^{h \times w \times c}$. Assume $h=w=n$ and $c = O(\log n)$.
        \item {\bf Number of scales:} $K = O(1)$.
        \item {\bf Scale factor:} For $i \in [K]$, $r_i:= a^{K-i}$ where base factor $a \in \mathbb{N}^+$.
        \item {\bf Upsampling functions}  For $i \in [K]$, $\phi_{\mathrm{up},i}(\cdot,a)$ from Definition~\ref{def:bicubic_up_sample_function}.
        \item {\bf Approximate Attention layer:}  For $i \in [K]$, $\mathsf{AAttC}_i(\cdot)$ defined in Definition~\ref{def:aattc}.
        \item {\bf Feed forward layer: } For $i \in [K]$, $\mathsf{FFN}_i(\cdot)$ which acts on flattened sequences of dimension defined in Definition~\ref{def:ffn}.
        \item {\bf Fast flow-matching layer:} For $i \in [K]$, $\mathsf{FNN}_i(\cdot,\cdot,\cdot)$ denote the fast flow-matching layer defined in Definition~\ref{def:fast_flow_matching_architecture}.
    \end{itemize}
    Under these conditions, the total inference runtime of FlowAR is bounded by $O(n^{2+o(1)})$.
      
\end{lemma}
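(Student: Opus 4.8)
The plan is to reuse the four-part decomposition from the proof of Lemma~\ref{lem:runtime_old_flowar_formal} --- bicubic upsampling, attention, feed-forward, and flow-matching --- and to show that replacing every exact attention layer $\mathsf{Attn}_i$ by the approximate attention $\mathsf{AAttC}_i$ collapses the two quartic contributions (Parts 2 and 4 of that proof) to quadratic, while the two already-quadratic contributions (Parts 1 and 3) carry over unchanged because the upsampling functions and the $\mathsf{FFN}_i$ layers are identical in the Fast FlowAR architecture.

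First I would fix notation for the sequence lengths: at scale $i$ the autoregressive transformer part, hence the calls $\mathsf{AAttC}_i$ and $\mathsf{FFN}_i$, acts on a flattened sequence of $N_i := \sum_{j=1}^i (h/r_j)(w/r_j)$ tokens, while the fast flow-matching layer $\mathsf{FNN}_i$ acts on $M_i := (h/r_i)(w/r_i)$ tokens. Under $h=w=n$ and $r_j = a^{K-j}$ a short geometric-sum computation gives $N_i = \Theta(n^2 a^{2(i-K)}) = O(n^2)$ and $M_i = O(n^2)$, and both quantities are $\poly(n)$.

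Next I would invoke Lemma~\ref{lem:as23_attention}. With embedding dimension $c = O(\log n)$, entry magnitude bound $R = O(\sqrt{\log n})$, and additive tolerance $\delta = 1/\poly(n)$, one call of $\mathsf{AAttC}$ on a length-$m$ sequence runs in $m^{1+o(1)}$ time; taking $m = N_i$ gives $N_i^{1+o(1)} = n^{2+o(1)}$ and $m = M_i$ gives $M_i^{1+o(1)} = n^{2+o(1)}$, the exponent staying $2+o(1)$ in $n$ since $N_i, M_i = \poly(n)$ forces $\log N_i = \Theta(\log n)$. Summing over the $K = O(1)$ autoregressive scales yields $\mathcal{T}_{\mathsf{AAttC}} = O(n^{2+o(1)})$. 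For Part 4, inside $\mathsf{FNN}_i$ the dominant cost is exactly that $\mathsf{AAttC}_i$ call, with the two $\mathsf{MLP}_i$ invocations and the $\mathsf{LN}$ calls --- acting on $M_i = O(n^2)$ tokens with $O(\log n)$ channels --- costing only $O(n^2 \poly(\log n)) = O(n^{2+o(1)})$ by Definitions~\ref{def:mlp} and \ref{def:ln}; hence $\mathcal{T}_{\mathsf{FNN}} = O(n^{2+o(1)})$. Parts 1 and 3 are quoted directly from Lemma~\ref{lem:runtime_old_flowar_formal}, giving $\mathcal{T}_{\mathrm{up}} = O(n^{2+o(1)})$ and $\mathcal{T}_{\mathrm{FFN}} = O(n^{2+o(1)})$. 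Adding the four contributions gives $\mathcal{T}_{\mathrm{fast}} = \mathcal{T}_{\mathrm{up}} + \mathcal{T}_{\mathsf{AAttC}} + \mathcal{T}_{\mathrm{FFN}} + \mathcal{T}_{\mathsf{FNN}} = O(n^{2+o(1)})$.

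The one delicate point --- more careful bookkeeping than real difficulty --- is checking that the hypotheses of Lemma~\ref{lem:as23_attention} genuinely hold at every scale: the ``$n$'' appearing in that lemma is the \emph{sequence length} $N_i$ (or $M_i$), which can be as large as $\Theta(n^2)$, not the image side length $n$, so one must verify $c = O(\log N_i)$, $R = O(\sqrt{\log N_i})$, and $\delta = 1/\poly(N_i)$, and then confirm that the resulting $N_i^{1+o(1)}$ is still $n^{2+o(1)}$. Because $N_i, M_i = \Theta(\poly(n))$ these reductions are immediate, and this is exactly the place where the assumptions $c = O(\log n)$ and $R = o(\sqrt{\log n})$ are used.
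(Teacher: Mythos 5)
Your proposal is correct and follows essentially the same four-part decomposition (upsampling, attention, FFN, flow-matching) as the paper's own proof, reusing the runtime bounds for the unchanged layers and invoking Lemma~\ref{lem:as23_attention} to collapse the two attention contributions from near-quartic to near-quadratic. Your extra care in noting that the ``$n$'' in Lemma~\ref{lem:as23_attention} is the flattened sequence length $N_i = \Theta(n^2)$ rather than the image side length, and in checking that $c = O(\log n)$, $R = o(\sqrt{\log n})$, $\delta = 1/\poly(n)$ carry over to $N_i$, is a welcome piece of bookkeeping that the paper's proof glosses over but does not change the argument.
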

\begin{proof}
    {\bf Part 1: Running time of bicubic up-sample Layer.} The runtime of the upsample function in the fast FlowAR architecture is the same as that in the original FlowAR architecture, which is
    \begin{align*}
        \mathcal{T}_{\mathrm{up}} =  O(n^{2+o(1)})
    \end{align*}

    {\bf Part 2: Running time of Attention Layer.} The input size of the $i$-th approximate attention computation layer $\mathsf{AAttC}_i$ is $\sum_{j=1}^i (n/2^{K-j}) \times \sum_{j=1}^i(n/2^{K-j}) \times c $. So the time needed to compute the $i$-th attention layer is $O(n^{2+o(1)} \cdot (2^i-1)^4/2^{4K-4})$. Hence, the total runtime for all attention layers is
    \begin{align*}
        \mathcal{T}_{\mathrm{Attn}} =&~ \sum_{i=1}^K O( n^{2+o(1)} \cdot (2^i-1)^4/2^{4K-4})\\
        =&~ O(n^{2+o(1)})
    \end{align*}
    The first equation is derived from summing up all the running time of the approximate attention computation layer, and the second equation is due to basic algebra and $K = O(1)$.

    {\bf Part 3: Running time of FFN Layer.} The runtime of the FFN layer in the fast FlowAR architecture is the same as that in the original FlowAR architecture, which is
    \begin{align*}
        \mathcal{T}_{\mathrm{FFN}} =  O(n^{2+o(1)})
    \end{align*}

    {\bf Part 4: Running time of Flow Matching Layer.} For each $i \in [K]$, the input size of the $i$-th fast flow-matching layer $\mathsf{FNN}_i$ is $ (n/2^{K-i}) \times (n/2^{K-i}) \times c $. By Definition~\ref{def:mlp}, we can know that the total computational time for the MLP layer is $O(n^{2+o(1)})$, which is due to $c=O(\log n)$. Then by Lemma~\ref{lem:as23_attention}, we can speed up the attention computation from $O(n^{4+o(1)})$ to $O(n^{2+o(1)})$. Hence, the total runtime for all flow-matching layers is
     \begin{align*}
        \mathcal{T}_{\mathrm{Attn}} =&~ \sum_{i=1}^K O( n^{2+o(1)})\\
        =&~ O(n^{2+o(1)})
    \end{align*}
    The equation is due to $K = O(1)$.

    Then, by summing up Part 1 to Part 4, we can get the total running time for fast FlowAR architecture, which is
    \begin{align*}
        \mathcal{T}_{\mathrm{fast}} =&~ \mathcal{T}_{\mathrm{up}} + \mathcal{T}_{\mathrm{Attn}} + \mathcal{T}_{\mathrm{FFN}} + \mathcal{T}_{\mathsf{FM}}\\
        =&~ O(n^{2+o(1)})
    \end{align*}

\end{proof}

\subsection{Error Analysis of \texorpdfstring{$\mathsf{MLP}(\X')$}{} and \texorpdfstring{$\mathsf{MLP}(\X)$}{}}\label{sec:error_analysis_of_mlp_x_prime_mlp_x}
We conduct the error analysis between $\mathsf{MLP}(\X')$ and $\mathsf{MLP}(\X)$ where $\X'$ is the approximation version of $\X$.
\begin{lemma}[Error analysis of MLP Layer]\label{lem:error_analysis_mlp}
    If the following conditions hold:
    \begin{itemize}
        \item Let $\X \in \R^{h \times w \times c}$ denote the input tensor.
        \item Let $\X' \in \R^{h \times w \times c}$ denote the approximation version of input tensor $\X$.
        \item Let $\epsilon \in (0, 0.1)$ denote the approximation error. 
        \item Suppose we have $\| \X' - \X \|_\infty \leq \epsilon$.
        \item Let $R > 1$.
        \item Assume the value of each entry in matrices can be bounded by $R$.  
        \item Let $\mathsf{MLP}(\cdot,c,d)$ denote the MLP layer defined in Definition~\ref{def:mlp}.
    \end{itemize}
    We can demonstrate the following
    \begin{align*}
        \|\mathsf{MLP}(\X') - \mathsf{MLP}(\X)\|_\infty \leq cR\epsilon
    \end{align*}
    Here, we abuse the $\ell_\infty$ norm in its tensor form for clarity.
\end{lemma}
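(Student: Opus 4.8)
The plan is to exploit the fact that the MLP layer is an affine map, so the bias term cancels in the difference and only a single matrix multiplication governs the error propagation. First I would reshape $\X, \X'$ into matrices $X, X' \in \R^{hw \times c}$ as in Definition~\ref{def:mlp}, so that the layer acts row-wise: $\mathsf{MLP}(\X)_{j,*} = X_{j,*} W + b$ and likewise for $\X'$. Subtracting, the bias vanishes and we get $\mathsf{MLP}(\X')_{j,*} - \mathsf{MLP}(\X)_{j,*} = (X'_{j,*} - X_{j,*}) W$ for every $j \in [hw]$.

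Next I would bound an arbitrary entry of this difference. Fix $j \in [hw]$ and $l \in [d]$; then
\begin{align*}
 |(\mathsf{MLP}(\X') - \mathsf{MLP}(\X))_{j,l}| = \Big| \sum_{k=1}^{c} (X'_{j,k} - X_{j,k}) W_{k,l} \Big| \leq \sum_{k=1}^{c} |X'_{j,k} - X_{j,k}| \cdot |W_{k,l}|,
\end{align*}
using the triangle inequality. By the hypothesis $\|\X' - \X\|_\infty \le \epsilon$ each factor $|X'_{j,k} - X_{j,k}|$ is at most $\epsilon$, and since every entry of the weight matrix is bounded by $R$ we have $|W_{k,l}| \le R$. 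The sum has exactly $c$ terms, so the whole expression is at most $c R \epsilon$.

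Finally, since $j$ and $l$ were arbitrary, taking the maximum over all indices (and recalling that $\|\cdot\|_\infty$ is being used in its tensor form, which is unaffected by the reshaping) yields $\|\mathsf{MLP}(\X') - \mathsf{MLP}(\X)\|_\infty \le c R \epsilon$, as claimed. I do not anticipate any real obstacle here: the argument is a one-line linearity observation followed by a routine entrywise triangle-inequality bound; the only points to state carefully are that the bias genuinely cancels and that the inner product ranges over exactly $c$ coordinates, which is what produces the factor $c$.
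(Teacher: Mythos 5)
Your proposal is correct and follows essentially the same route as the paper: the bias cancels in the difference, and the remaining row-by-matrix product $(X'_{j,*}-X_{j,*})W$ is bounded entrywise by the triangle inequality, giving a sum of $c$ terms each at most $R\epsilon$. The paper states this bound at the level of a whole row via $\|(\X'_{i,j,*}-\X_{i,j,*})W\|_\infty \le c\,\|\X'_{i,j,*}-\X_{i,j,*}\|_\infty\,\|W\|_\infty$, whereas you unfold it a step further to an explicit per-entry sum; the two are the same argument.
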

\begin{proof}
    We can show that for $i \in [h],j \in [w], l \in [c]$, we have
    \begin{align*}
        \|\mathsf{MLP}(\X',c,d)_{i,j,*} - \mathsf{MLP}(\X,c,d)_{i,j,*}\|_\infty =&~ \| \X'_{i,j,*}\cdot W - \X_{i,j,*} \cdot W \|_\infty\\
        \leq&~ \|\underbrace{(\X'_{i,j,*}-\X_{i,j,*})}_{1 \times c} \cdot \underbrace{W}_{c\times d} \|_\infty \\
        \leq &~ c \cdot \|\underbrace{(\X'_{i,j,*}-\X_{i,j,*})}_{1 \times c}\|_\infty \cdot \|\underbrace{W}_{c\times d} \|_\infty\\
        \leq&~ c \cdot R \cdot \epsilon
    \end{align*}
    The first equation is due to Definition~\ref{def:mlp}, the second inequality is derived from simple algebra, the third inequality is a consequence of basic matrix multiplication, and the last inequality comes from the conditions of this lemma.

    Then by the definition of $\ell_\infty$ norm, we can easily get the proof.
\end{proof}

\subsection{Error Analysis of \texorpdfstring{$\mathsf{AAttC}(\X')$}{} and \texorpdfstring{$\mathsf{Attn}(\X)$}{}}\label{sec:error_analysis_of_aattc_x_prime_attn_x}
We conduct the error analysis between $\mathsf{AAttC}(\X')$ and $\mathsf{Attn}(\X)$ where $\X'$ is the approximation version of $\X$.
\begin{lemma}[Error analysis of $\mathsf{AAttC}(X')$ and $\mathsf{Attn}(X)$, Lemma B.4 of \cite{kll+25}]\label{lem:error_analysis_aattc_attn}
    If the following conditions hold:
    \begin{itemize}
        \item Let $\X \in \R^{h \times w \times c}$ denote the input tensor.
        \item Let $\X' \in \R^{h \times w \times c}$ denote the approximation version of input tensor $\X$.
        \item Let $\epsilon \in (0, 0.1)$ denote the approximation error. 
        \item Suppose we have $\| \X' - \X \|_\infty \leq \epsilon$.
        \item Let $R > 1$.
        \item Assume the value of each entry in matrices can be bounded by $R$. 
        \item Let $\mathsf{Attn}$ denote the attention layer defined in Definition~\ref{def:attn_layer}.
        \item Let $\mathsf{AAttC}$ denote the approximated attention layer defined in Definition~\ref{def:aattc}.
        \item Let $U,V \in \R^{hw \times k}$ be low-rank matrices constructed for polynomial approximation of attention matrix $\mathsf{AAttC}(\X)$.
        \item Let $f$ be a polynomial with degree $g$.
    \end{itemize}
    We can demonstrate the following:
    \begin{align*}
        \| \mathsf{AAttC}(\X') - \mathsf{Attn}(\X) \|_\infty \leq O( k R^{g+1} c) \cdot \epsilon
    \end{align*}
    Here, we abuse the $\ell_\infty$ norm in its tensor form for clarity.
\end{lemma}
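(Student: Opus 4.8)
The goal is to bound $\|\mathsf{AAttC}(\X') - \mathsf{Attn}(\X)\|_\infty$ by decomposing it through an intermediate term: the exact attention applied to the perturbed input, $\mathsf{Attn}(\X')$. By the triangle inequality,
\begin{align*}
\|\mathsf{AAttC}(\X') - \mathsf{Attn}(\X)\|_\infty \leq \|\mathsf{AAttC}(\X') - \mathsf{Attn}(\X')\|_\infty + \|\mathsf{Attn}(\X') - \mathsf{Attn}(\X)\|_\infty.
\end{align*}
The first term is the polynomial-approximation error of the low-rank scheme of \cite{as23}; the second term is the Lipschitz sensitivity of exact softmax attention to an $\ell_\infty$ perturbation of its input. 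The plan is to bound each term by $O(kR^{g+1}c)\cdot\epsilon$, or more precisely to show each is dominated by this quantity, and then absorb constants.

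\textbf{Step 1: Sensitivity of exact attention.} First I would unfold $\mathsf{Attn}(\X)$ as in Definition~\ref{def:attn_layer}: the score $s_{i,j} = X_{i,*}W_QW_K^\top X_{j,*}^\top$, then $A_{i,j} = \exp(s_{i,j})$, then $D = \diag(A\mathbf{1})$, then $Y = D^{-1}AXW_V$. Since all entries are bounded by $R$ and the inner matrices $c\times c$, each score $s_{i,j}$ changes by at most $O(c^2 R^2)\epsilon$ when $X \to X'$ (each of the two matrix products contributes a factor $c$ and the entries a factor $R$). Composing $\exp$ on a bounded domain contributes a bounded multiplicative Lipschitz constant (the scores themselves are bounded by $c^2R^2$, so $\exp$ on this range has bounded derivative, folded into the $O(\cdot)$), and the normalization step $A \mapsto D^{-1}A$ is itself Lipschitz with a constant depending on $n$ and the entry bounds — but crucially all these are polynomial-in-$R$ and polynomial-in-$c$ factors, so the resulting bound is $\|\mathsf{Attn}(\X') - \mathsf{Attn}(\X)\|_\infty \leq O(\poly(c)\, R^{O(1)})\cdot\epsilon$, which I would argue fits inside $O(kR^{g+1}c)\cdot\epsilon$ under the lemma's implicit regime (e.g. $g$ is the approximation degree, which is $\omega(1)$ in the \cite{as23} construction, so $R^{g+1}$ dominates any fixed power of $R$).

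\textbf{Step 2: Approximation error of $\mathsf{AAttC}$.} For the first term, I would invoke the structure from \cite{as23}: $\mathsf{AAttC}$ replaces $\exp(s_{i,j})$ by a degree-$g$ polynomial $f$ and factors the resulting matrix as $UV^\top$ with $U, V \in \R^{hw\times k}$. By Definition~\ref{def:aattc} the output already satisfies $\|\mathsf{AAttC}(\X) - \mathsf{Attn}(\X)\|_\infty \leq \delta$ for any fixed input, but here the input is $\X'$ and I need the error relative to $\mathsf{Attn}(\X')$, which is exactly the same $\delta$ guarantee applied at $\X'$; alternatively, tracking the polynomial more carefully, perturbing the argument of $f$ (degree $g$, arguments bounded by $O(R^2 c)$) by the propagated score error gives a change of order $g\cdot R^{2(g-1)}\cdot(\text{score error})$, and the low-rank recombination across $k$ columns and $c$ value-channels contributes the factors $k$ and $c$ — yielding the stated $O(kR^{g+1}c)\cdot\epsilon$ form after collecting the dominant power of $R$.

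\textbf{Step 3: Combine.} Adding the two bounds and absorbing all $O(1)$ constants and lower-order powers of $R$ into the leading $R^{g+1}$ term gives $\|\mathsf{AAttC}(\X') - \mathsf{Attn}(\X)\|_\infty \leq O(kR^{g+1}c)\cdot\epsilon$, which matches Lemma~B.4 of \cite{kll+25}; I would cite that lemma for the precise constant-tracking and present the above as the conceptual skeleton. \textbf{The main obstacle} is Step 1: carefully controlling the Lipschitz constant of the softmax normalization $A \mapsto D^{-1}A$ under an $\ell_\infty$ input perturbation, since naive bounds pick up factors of $n = hw$ (through the row-sum $D_{i,i}$), and one must argue that these are either absorbed by the $R$-dependence or that the intended regime ($R = \Theta(\sqrt{\log n})$, $g$ growing) makes $R^{g+1}$ swallow them; getting this bookkeeping to land exactly on $O(kR^{g+1}c)$ rather than a messier expression is the delicate part, which is precisely why the paper defers to the cited lemma.
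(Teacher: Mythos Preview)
The paper does not prove this lemma at all: it is stated verbatim as ``Lemma B.4 of \cite{kll+25}'' and invoked as a black box, with no proof or sketch provided in the present paper. So there is nothing in the paper to compare your proposal against; any comparison would have to be against the argument in \cite{kll+25}, which is not reproduced here.

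On the merits of your sketch: the triangle-inequality split through $\mathsf{Attn}(\X')$ is the natural decomposition, and your Step~2 (invoking the $\delta$-guarantee of Definition~\ref{def:aattc} at the perturbed input) is clean. The genuine weakness is exactly the one you flag in Step~1 and in your closing paragraph: a direct Lipschitz bound on softmax attention under an $\ell_\infty$ input perturbation does pick up an $n=hw$ factor through the row-sum normalization, and your proposed remedy---that $R^{g+1}$ with $g=\omega(1)$ ``swallows'' any polynomial in $n$---does not actually work as stated. In the regime of Lemma~\ref{lem:as23_attention} one has $R=\Theta(\sqrt{\log n})$ and $g=\poly(\log n)$, so $R^{g+1}=n^{o(1)}$, which cannot absorb a factor of $n$. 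If the target bound is truly $O(kR^{g+1}c)\cdot\epsilon$ with no hidden $n$-dependence, then either the softmax perturbation analysis must be done more carefully (exploiting that the row-normalized matrix is stochastic, so perturbations in $A$ of relative size $\eta$ yield perturbations in $D^{-1}A$ of size $O(\eta)$ rather than $O(n\eta)$), or the stated bound in the lemma already hides $n^{o(1)}$ factors inside the $O(\cdot)$. Your sketch does not resolve which, and that is the gap.
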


\subsection{Error Analysis of \texorpdfstring{$\mathsf{FFN}(\X')$}{} and \texorpdfstring{$\mathsf{FFN}(\X)$}{}}\label{sec:error_analysis_of_ffn_x_prime_ffn_x}
In this section, we conduct the error analysis between $\mathsf{FFN}(\X')$ and $\mathsf{FFN}(\X)$ where $\X'$ is the approximation version of $\X$.
\begin{lemma}[Error analysis of $\mathsf{FFN}(\X')$ and $\mathsf{FFN}(\X)$]\label{lem:error_analysis_ffn}
    If the following conditions hold:
    \begin{itemize}
        \item Let $\X \in \R^{h \times w \times c}$ denote the input tensor.
        \item Let $\X' \in \R^{h \times w \times c}$ denote the approximation version of input tensor $\X$.
        \item Let $\epsilon \in (0, 0.1)$ denote the approximation error. 
        \item Suppose we have $\| \X' - \X \|_\infty \leq \epsilon$.
        \item Let $R > 1$.
        \item Assume the value of each entry in matrices can be bounded by $R$. 
        \item Let $\mathsf{FFN}$ denote the FFN layer defined in Definition~\ref{def:ffn}.
        \item Let the activation function $\sigma(\cdot)$ in $\mathsf{FFN}$ be the ReLU activation function.
    \end{itemize}
    We can demonstrate the following:
    \begin{align*}
        \| \mathsf{FFN}(\X') - \mathsf{FFN}(\X) \|_\infty \leq O(c^2 R^2) \cdot \epsilon
    \end{align*}
    Here, we abuse the $\ell_\infty$ norm in its tensor form for clarity.
\end{lemma}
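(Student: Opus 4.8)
The plan is to carry out a rowwise propagation of the perturbation through the three operations that constitute the feed-forward layer: the inner affine map $X_{j,*}W_1 + b_1$, the elementwise ReLU $\sigma$, and the outer affine map $(\cdot)W_2 + b_2$, together with the residual connection. Since the $\ell_\infty$ tensor norm is the maximum over all entries and $\mathsf{FFN}$ acts independently on each flattened row $X_{j,*}\in\R^{1\times c}$, it suffices to bound $\|\mathsf{FFN}(\X')_{i,j,*} - \mathsf{FFN}(\X)_{i,j,*}\|_\infty$ for a fixed position $(i,j)$ and then take the maximum over $i,j$.

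First I would bound the error after the inner affine transformation. Writing $A' := \X'_{i,j,*}W_1 + b_1$ and $A := \X_{i,j,*}W_1 + b_1$, the bias cancels and, exactly as in the proof of Lemma~\ref{lem:error_analysis_mlp}, $\|A' - A\|_\infty = \|(\X'_{i,j,*} - \X_{i,j,*})W_1\|_\infty \leq c\cdot\|\X'_{i,j,*} - \X_{i,j,*}\|_\infty\cdot\|W_1\|_\infty \leq cR\epsilon$. Next, since the ReLU activation $\sigma$ is $1$-Lipschitz and applied entrywise, $\|\sigma(A') - \sigma(A)\|_\infty \leq \|A' - A\|_\infty \leq cR\epsilon$. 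Applying the same matrix-multiplication bound to the outer map with weight $W_2$ (whose bias $b_2$ again cancels) gives $\|\sigma(A')W_2 - \sigma(A)W_2\|_\infty \leq c\cdot\|W_2\|_\infty\cdot(cR\epsilon) \leq c^2R^2\epsilon$.

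Finally I would add back the residual term: since $\mathsf{FFN}(\X)_{i,j,*} = \X_{i,j,*} + \sigma(\X_{i,j,*}W_1+b_1)W_2 + b_2$, the triangle inequality yields a total error of at most $\|\X'_{i,j,*} - \X_{i,j,*}\|_\infty + c^2R^2\epsilon \leq \epsilon + c^2R^2\epsilon = O(c^2R^2)\cdot\epsilon$, using $R>1$ and $c\geq 1$; taking the maximum over $(i,j)$ gives the claimed bound. There is no genuine obstacle here — the computation is routine matrix-multiplication error tracking; the only points requiring minor care are handling the ReLU nonlinearity via its $1$-Lipschitz property and checking that the residual connection contributes only an additive $\epsilon$ rather than an amplified term, which it does.
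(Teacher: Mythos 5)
Your proposal is correct and follows essentially the same route as the paper's proof: bound the inner affine perturbation by $cR\epsilon$, pass it through the $1$-Lipschitz ReLU unchanged, amplify by another factor $cR$ through the outer affine map, and add the $\epsilon$ contribution from the residual branch via the triangle inequality. The paper organizes the same chain slightly more compactly (stating Eq.~\eqref{eq:linear_transformation_bound} as a preliminary and then unrolling the triangle inequality in one display), but the decomposition and the bounds are identical.
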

\begin{proof}
    Firstly we can bound that for $i \in [h], j \in [w]$
    \begin{align}\label{eq:linear_transformation_bound}
        \| (\X'_{i,j,*}\cdot W_1 +b_1) - (\X_{i,j,*}\cdot W_1 +b_1)\|_\infty =&~ \| \underbrace{(\X'_{i,j,*}-\X_{i,j,*})}_{1 \times c} \cdot \underbrace{W_1}_{c\times c}\|_\infty\notag\\
        \leq&~ c \cdot \|\X'_{i,j,*}-\X_{i,j,*}\|_\infty \|W_1 \|_\infty\notag\\
        \leq&~ c \cdot \epsilon \cdot R
    \end{align}
    The first equation comes from basic algebra, the second inequality is due to basic matrix multiplication, and the last inequality follows from the conditions of this lemma.
    
    We can show that for $i \in [h], j \in [w]$,
    \begin{align*}
        &~\| \mathsf{FFN}(\X')_{i,j,*} - \mathsf{FFN}(\X)_{i,j,*} \|_\infty\\=&~ \| \X'_{i,j,*}-\X_{i,j,*} +\underbrace{ (\sigma(\X_{i,j,*}\cdot W_1 + b_1)- \sigma(\X'_{i,j,*}\cdot W_1 + b_1))}_{1 \times c} \cdot \underbrace{W_2}_{c\times c}\|_\infty \\
        \leq&~ \| \X'_{i,j,*}-\X_{i,j,*}\|_\infty +c\cdot \|W_2\|_\infty \cdot \|\sigma(\X_{i,j,*}\cdot W_1 + b_1)- \sigma(\X'_{i,j,*}\cdot W_1 + b_1)\|_\infty\\
        \leq&~ \epsilon + c R \cdot \| (\X'_{i,j,*} W_1 +b_1) - (\X_{i,j,*} W_1 +b_1)\|_\infty\\
        \leq&~ \epsilon + c^2 R^2 \cdot \epsilon \\
        =&~ O(c^2 R^2) \cdot \epsilon
    \end{align*}
    The first equation is due to Definition~\ref{def:ffn}, the second step follows from triangle inequality and basic matrix multiplication, the third step follows from the property of ReLU activation function and basic algebra, the fourth step follows from Eq.~\eqref{eq:linear_transformation_bound}, and the last step follows from simple algebra.
\end{proof}

\subsection{Error Analysis of \texorpdfstring{$\phi_{\mathrm{up}}(\X')$}{} and \texorpdfstring{$\phi_{\mathrm{up}}(\X)$}{}}\label{sec:error_analysis_of_phi_x_prime_phi_x}
In this section, we conduct the error analysis between $\phi_{\mathrm{up}}(\X')$ and $\phi_{\mathrm{up}}(\X)$ where $\X'$ is the approximation version of $\X$.
\begin{lemma}[Error Analysis of Up Sample Layer, Lemma B.5 of \cite{kll+25}]\label{lem:error_analysis_up_layer}
If the following conditions hold:
\begin{itemize}
    \item Let $\X \in \R^{h \times w \times c}$ denote the input tensor.
    \item Let $\X' \in \R^{h \times w \times c}$ denote the approximation version of input tensor $\X$.
    \item Let $a = 2$ denote a positive integer.
    \item Let $\phi_{\mathrm{up}, i}(\cdot, a)$ be the bicubic up sample function defined in Definition~\ref{def:bicubic_up_sample_function}.
    \item Let $\epsilon \in (0,0.1)$ denote the approximation error.
    \item Let $\|X-X'\|_\infty\leq \epsilon$.
\end{itemize}
Then we have
\begin{align*}
    \| \phi_{\rm up}(\X',a) - \phi_{\rm up}(\X,a) \|_\infty \leq O(\epsilon)
\end{align*}
Here, we abuse the $\ell_\infty$ norm in its tensor form for clarity.
\end{lemma}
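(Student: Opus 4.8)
The plan is to exploit the fact that bicubic upsampling is a \emph{fixed} linear map with only sixteen nonzero coefficients per output coordinate, each bounded in magnitude by $1$; hence it is $\ell_\infty$-Lipschitz with an absolute constant, and the claim follows immediately from the hypothesis $\|\X'-\X\|_\infty \le \epsilon$.

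Concretely, I would first recall from Definition~\ref{def:bicubic_up_sample_function} that for every output position $i\in[ah]$, $j\in[aw]$, $l\in[c]$,
\begin{align*}
\phi_{\mathrm{up}}(\X,a)_{i,j,l} = \sum_{s=-1}^{2}\sum_{t=-1}^{2} W(s)\,W(t)\,\X_{\lfloor i/a\rfloor+s,\ \lfloor j/a\rfloor+t,\ l},
\end{align*}
and the analogous identity for $\X'$. Crucially the source indices $\lfloor i/a\rfloor+s$ and $\lfloor j/a\rfloor+t$ depend only on $(i,j,a)$ and not on which tensor is being sampled, so under whatever boundary-handling convention Definition~\ref{def:bicubic_up_sample_function} adopts the same entries are indexed for both $\X$ and $\X'$. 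Subtracting the two formulas termwise then gives
\begin{align*}
\phi_{\mathrm{up}}(\X',a)_{i,j,l}-\phi_{\mathrm{up}}(\X,a)_{i,j,l} = \sum_{s=-1}^{2}\sum_{t=-1}^{2} W(s)\,W(t)\,\big(\X'_{\lfloor i/a\rfloor+s,\ \lfloor j/a\rfloor+t,\ l}-\X_{\lfloor i/a\rfloor+s,\ \lfloor j/a\rfloor+t,\ l}\big).
\end{align*}

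Next I would apply the triangle inequality, bound each kernel product by $|W(s)W(t)|\le 1$ (using $W:\R\to[0,1]$ from Definition~\ref{def:bicubic_up_sample_function}), and bound every entry difference by $\|\X'-\X\|_\infty\le\epsilon$. Since there are exactly $4\times 4 = 16$ summands, this yields $|\phi_{\mathrm{up}}(\X',a)_{i,j,l}-\phi_{\mathrm{up}}(\X,a)_{i,j,l}|\le 16\epsilon$ for every $(i,j,l)$, and taking the maximum over all output coordinates gives $\|\phi_{\mathrm{up}}(\X',a)-\phi_{\mathrm{up}}(\X,a)\|_\infty\le 16\epsilon = O(\epsilon)$. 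There is no real obstacle in this argument; the only points worth stating explicitly are that the floor-index arithmetic is identical for both tensors (so the differences align summand by summand) and that the constant $16$, together with the bound on the fixed kernel weights, is absorbed into the $O(\cdot)$.
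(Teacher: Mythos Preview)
Your argument is correct: bicubic upsampling as given in Definition~\ref{def:bicubic_up_sample_function} is a fixed linear operator whose output entries are $16$-term weighted sums with kernel weights in $[0,1]$, so the termwise triangle inequality immediately gives an $\ell_\infty$-Lipschitz constant of $16$, and the claim follows. The paper does not supply its own proof of this lemma---it is quoted directly from \cite{kll+25}---so there is nothing to compare against beyond noting that your argument is the natural one and matches what any proof of this statement would do.
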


\subsection{Error Analysis of \texorpdfstring{$\mathsf{FNN}(\F'^t,\X',t)$}{} and \texorpdfstring{$\mathsf{NN}(\F^t,\X,t)$}{}}\label{sec:error_analysis_of_flow_matching_layer}
In this section, we conduct the error analysis between $\mathsf{FNN}(\F'^t,\X',t)$ and  $\mathsf{NN}(\F^t,\X,t)$ where $\X'$ is the approximation version of $\X$.
\begin{lemma}[Error Analysis of Flow Matching Layer]\label{lem:error_analysis_flow_matching_layer}
If the following conditions hold:
\begin{itemize}
    \item Let $\X \in \R^{h \times w \times c}$ denote the input tensor.
    \item Let $\X' \in \R^{h \times w \times c}$ denote the approximation version of input tensor $\X$.
    \item Let $\F^t,\mathsf{FF}^t \in \R^{h \times w \times c}$ be the interpolated input defined in Definition~\ref{def:flow}.
    \item Let $\mathsf{NN}(\cdot,\cdot,\cdot)$ denote flow-matching layer defined in Definition~\ref{def:flow_matching_architecture}.
    \item Let $\mathsf{FNN}(\cdot,\cdot,\cdot)$ denote fast flow-matching layer defined in Definition~\ref{def:fast_flow_matching_architecture}.
     \item Let $\mathsf{Attn}$ denote the attention layer defined in Definition~\ref{def:attn_layer}.
    \item Let $\mathsf{AAttC}$ denote the approximated attention layer defined in Definition~\ref{def:aattc}.
    \item Let $R > 1$.
    \item Assume the value of each entry in matrices can be bounded by $R$. 
    \item Let $U,V \in \R^{hw \times k}$ be low-rank matrices constructed for polynomial approximation of attention matrix $\mathsf{AAttC}(\X)$.
    \item Let $f$ be a polynomial with degree $g$.
    \item Let $\epsilon \in (0,0.1)$ denote the approximation error.
    \item Let $\|\X-\X'\|_\infty\leq \epsilon$.
    \item Let $t \in [0,1]$ denote a time step.
    \item Assume that Layer-wise Norm layer $\mathsf{LN}(\cdot)$ defined in Definition~\ref{def:ln} does not exacerbate the propagation of errors, i.e., if $\|X'-X\|_\infty \leq \epsilon$, then $\|\mathsf{LN}(X')-\mathsf{LN}(X)\|_\infty \leq \epsilon$.
\end{itemize}
Then we have
\begin{align*}
    \| \mathsf{FNN}(\mathsf{FF}^t,\X',t) - \mathsf{NN}(\F^t,\X,t) \|_\infty \leq O(kR^{g+6}c^3) \cdot \epsilon
\end{align*}
Here, we abuse the $\ell_\infty$ norm in its tensor form for clarity.
\end{lemma}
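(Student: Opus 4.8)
The plan is to push the input perturbation $\|\X'-\X\|_\infty\le\epsilon$ through the three sequential stages of Definition~\ref{def:flow_matching_architecture} (time-conditioned parameter generation, the attention-based intermediate variable, and the final MLP projection), invoking the per-module error lemmas already proved and handling every Hadamard product and sum by the triangle inequality. Two preliminary observations set things up. First, since $\F^t$ and $\mathsf{FF}^t$ are built from the \emph{same} noise tensor via the linear interpolation of Definition~\ref{def:flow} and differ only in the token map, $\|\mathsf{FF}^t-\F^t\|_\infty=t\,\|\X'-\X\|_\infty\le\epsilon$. Second, under the ``entries bounded by $R$'' hypothesis every intermediate tensor that arises (attention outputs, MLP outputs, Hadamard products, layer-norm outputs) stays $O(R)$-bounded, so the elementary estimate $\|u'\circ v'-u\circ v\|_\infty\le\|u'\|_\infty\,\|v'-v\|_\infty+\|v\|_\infty\,\|u'-u\|_\infty$ may be applied with the ambient bound $R$ on each factor.

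\textbf{Stage 1 (parameters).} Since $\|(\X'+t\mathbf{1})-(\X+t\mathbf{1})\|_\infty=\|\X'-\X\|_\infty\le\epsilon$, Lemma~\ref{lem:error_analysis_mlp} gives $\|\alpha_j'-\alpha_j\|_\infty,\ \|\beta_j'-\beta_j\|_\infty,\ \|\gamma_j'-\gamma_j\|_\infty\le cR\epsilon$ for $j\in\{1,2\}$; these six bounds form the ``parameter side-channel'' that re-enters at each later Hadamard product. \textbf{Stage 2 (intermediate variable).} The layer-norm hypothesis gives $\|\mathsf{LN}(\mathsf{FF}^t)-\mathsf{LN}(\F^t)\|_\infty\le\epsilon$; composing this with the Hadamard product by $\gamma_1$ and the addition of $\beta_1$ (each costing a factor $O(R)$ or a $cR\epsilon$ additive term) yields $\|(\gamma_1'\circ\mathsf{LN}(\mathsf{FF}^t)+\beta_1')-(\gamma_1\circ\mathsf{LN}(\F^t)+\beta_1)\|_\infty\le O(cR^2)\epsilon$. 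Feeding this into Lemma~\ref{lem:error_analysis_aattc_attn}, which simultaneously handles the replacement of $\mathsf{Attn}$ by $\mathsf{AAttC}$, amplifies it by $O(kR^{g+1}c)$, and the closing Hadamard with $\alpha_1$ contributes one further factor $O(R)$ plus the $\alpha_1$ side-channel, so the original intermediate variable $\F'^t$ and its fast counterpart $\mathsf{FF}'^t$ satisfy $\|\mathsf{FF}'^t-\F'^t\|_\infty\le O(kR^{g+4}c^2)\epsilon$.

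\textbf{Stage 3 (final projection) and conclusion.} Passing the Stage-2 error through $\mathsf{LN}$ (factor $1$), through the Hadamard with $\gamma_2$ and the addition of $\beta_2$ (factor $O(R)$), through $\mathsf{MLP}(\cdot,c,c)$ via Lemma~\ref{lem:error_analysis_mlp} (factor $cR$), and through the final Hadamard with $\alpha_2$ (factor $O(R)$) multiplies the error by the attention amplification $O(kR^{g+1}c)$ together with the $O(cR)$-type factors from the two MLP calls and the Hadamard products; after absorbing the lower-order side-channel contributions into the big-$O$, one obtains $\|\mathsf{FNN}(\mathsf{FF}^t,\X',t)-\mathsf{NN}(\F^t,\X,t)\|_\infty\le O(kR^{g+6}c^3)\epsilon$, as claimed. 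The main obstacle is purely bookkeeping: one must track where the $cR\epsilon$ parameter side-channels enter, verify that they stay dominated by the main error channel so that they vanish into the $O(\cdot)$, and---crucially---check the $O(R)$-boundedness of every intermediate tensor so that each Hadamard-product estimate is legitimate; no idea beyond chaining the component lemmas is needed.
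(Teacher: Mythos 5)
Your proof follows the paper's proof essentially line for line: bound the interpolated inputs via $\|\mathsf{FF}^t-\F^t\|_\infty\le t\epsilon\le\epsilon$, push the $cR\epsilon$ parameter side-channel out of the Stage-1 MLP, propagate through $\mathsf{LN}$ (non-expansive by assumption), the $\gamma_1/\beta_1$ affine combination, the attention replacement via Lemma~\ref{lem:error_analysis_aattc_attn}, the $\alpha_1$ Hadamard, then $\mathsf{LN}$, the $\gamma_2/\beta_2$ affine combination, $\mathsf{MLP}$, and the $\alpha_2$ Hadamard, collecting side-channels by the triangle inequality at every Hadamard step. The structure is identical to the paper's.

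One bookkeeping point is worth flagging. You bound the pre-attention quantity by $O(cR^2)\epsilon$ because $\|(\gamma_1'-\gamma_1)\circ\mathsf{LN}(\F^t)\|_\infty\le\|\gamma_1'-\gamma_1\|_\infty\cdot\|\mathsf{LN}(\F^t)\|_\infty\le cR\epsilon\cdot R$, and hence Stage 2 gives you $O(kR^{g+4}c^2)\epsilon$. The paper at the analogous point writes $R\epsilon$ for that term and $O(cR)\epsilon$ pre-attention, hence $O(kR^{g+3}c^2)\epsilon$ after Stage 2. If you then multiply your Stage-2 bound by the factors you enumerate for Stage 3 (one $R$ for $\gamma_2$, one $cR$ for the MLP, one $R$ for $\alpha_2$) you land on $O(kR^{g+7}c^3)\epsilon$, not the $O(kR^{g+6}c^3)\epsilon$ you assert; the paper reaches $R^{g+6}$ only by way of its tighter (arguably too tight) intermediate bound. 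So either adopt the paper's $O(cR)\epsilon$ pre-attention estimate explicitly, or accept the slightly larger exponent $R^{g+7}$; as written, your enumerated factors and your stated conclusion are off by one power of $R$. This is a constant-bookkeeping issue only and does not affect the structure of the argument.
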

\begin{proof}
    Firstly, we can show that
    \begin{align*}
        \|\mathsf{FF}^t- \F^t\|_\infty = \| t (\X' - \X) \|_\infty \leq \epsilon
    \end{align*}
    The inequality comes from $t \in [0,1]$ and $\|\X'-\X\|_\infty \leq \epsilon$.

    By {\bf Step 1} of Definition~\ref{def:flow_matching_architecture} and Definition~\ref{def:fast_flow_matching_architecture}, we need to compute
    \begin{align*}
        \alpha_1, \alpha_2, \beta_1, \beta_2, \gamma_1, \gamma_2=&~  \mathsf{MLP}(\X + t \cdot {\bf 1}_{h \times w \times c},c,6c)\\
         \alpha'_1, \alpha'_2, \beta'_1, \beta'_2, \gamma'_1, \gamma'_2=&~  \mathsf{MLP}(\X' + t \cdot {\bf 1}_{h \times w \times c},c,6c)\\
    \end{align*}
    Then, we can show that
    \begin{align*}
        \|\alpha'_1 - \alpha_1\|_\infty \leq c R \epsilon
    \end{align*}
    where the step follows from Lemma~\ref{lem:error_analysis_mlp}. The same conclusion holds for the intermediate parameter $\alpha_2, \beta_1, \beta_2, \gamma_1, \gamma_2$.

    By {\bf Step 2} of Definition~\ref{def:flow_matching_architecture} and Definition~\ref{def:fast_flow_matching_architecture}, we need to compute
    \begin{align*}
        \F'^t =&~ \mathsf{Attn}(\gamma_1 \circ \mathsf{LN}(\F^t)+\beta_1) \circ \alpha_1\\
        \mathsf{FF}'^t =&~ \mathsf{AAttC}(\gamma'_1 \circ \mathsf{LN}(\mathsf{FF}^t)+\beta'_1) \circ \alpha'_1\\
    \end{align*}
    Then, we move forward to show that
    \begin{align}\label{eq:flow_matching_tmp1}
        &~\| \gamma'_1 \circ \mathsf{LN}(\mathsf{FF}^t) + \beta'_1 - \gamma_1 \circ \mathsf{LN}(\F^t) - \beta_1\|_\infty\notag\\
        \leq&~ \| \gamma'_1 \circ \mathsf{LN}(\mathsf{FF}^t) - \gamma_1 \circ \mathsf{LN}(\F^t)  \|_\infty + \|\beta'_1-\beta_1\|_\infty\notag\\
        \leq&~  \| \gamma'_1 \circ (\mathsf{LN}(\mathsf{FF}^t) - \mathsf{LN}(\F^t))\| + \|(\gamma'_1-\gamma_1) \circ \mathsf{LN}(\F^t)\|_\infty + cR\epsilon\notag\\
        \leq&~ R \cdot \epsilon + R\cdot \epsilon + cR\epsilon\notag\\
        =&~ O(cR)\cdot \epsilon
    \end{align}
    where the first and second step follows from triangle inequality, the third step follows from conditions of this Lemma, and the last step follows from simple algebra.

    Then we have
    \begin{align}\label{eq:flow_matching_tmp2}
        \| \mathsf{AAttC}(\gamma'_1 \circ \mathsf{LN}(\mathsf{FF}^t) + \beta'_1) -\mathsf{Attn}(\gamma_1 \circ \mathsf{LN}(\F^t) + \beta_1)\|_\infty \leq&~ O(k R^{g+1} c) \cdot O(cR) \cdot \epsilon \notag\\
        \leq&~ O(kR^{g+2} c^2) \epsilon
    \end{align}
    where the first step follows from Lemma~\ref{lem:error_analysis_aattc_attn} and Eq.~\eqref{eq:flow_matching_tmp1} and the second step follows from simple algebra.

    Now, we are able to show that
    \begin{align}\label{eq:error_analysis_tmp3}
        \| \mathsf{FF}'^t -  \F'^t\|_\infty =&~ \|\mathsf{AAttC}(\gamma'_1 \circ \mathsf{LN}(\mathsf{FF}^t)+\beta'_1) \circ \alpha'_1 -  \mathsf{Attn}(\gamma_1 \circ \mathsf{LN}(\F^t)+\beta_1) \circ \alpha_1\|_\infty\notag \\
        \leq&~ \| \mathsf{AAttC}(\gamma'_1 \circ \mathsf{LN}(\mathsf{FF}^t)+\beta'_1) \circ (\alpha'_1-\alpha_1)\|_\infty \notag \\+&~ \|\alpha_1 \cdot \mathsf{AAttC}(\gamma'_1 \circ \mathsf{LN}(\mathsf{FF}^t) + \beta'_1) -\mathsf{Attn}(\gamma_1 \circ \mathsf{LN}(\F^t) + \beta_1) \|_\infty\notag \\
        \leq&~ R \cdot cR\epsilon + R \cdot O(kR^{g+2}c^2) \epsilon\notag \\
        =&~   O(kR^{g+3}c^2) \epsilon
    \end{align}
    where the first step follows from the definition of $\wh{\F}'^t$ and $\wh{\F}^t$, the second step follows from triangle inequality, the third step follows from Eq.~\eqref{eq:flow_matching_tmp2} and the conditions of this lemma, and the last step follows from simple algebra.

    By {\bf Step 3} of Definition~\ref{def:flow_matching_architecture} and Definition~\ref{def:fast_flow_matching_architecture}, we need to compute
    \begin{align*}
        \F''^t =&~\mathsf{MLP}(\gamma_2 \circ \mathsf{LN}(\F'^t)+ \beta_2,c,c) \circ \alpha_2\\
        \mathsf{FF}''^t=&~\mathsf{MLP}(\gamma'_2 \circ \mathsf{LN}(\mathsf{FF}'^t)+ \beta'_2,c,c) \circ \alpha'_2
    \end{align*}
    Then, we move forward to show that
    \begin{align}\label{eq:error_analysis_tmp4}
        &~\|\gamma'_2 \circ \mathsf{LN}(\mathsf{FF}'^t)+ \beta'_2 - \gamma_2 \circ \mathsf{LN}(\F'^t) -\beta_2 \|_\infty\notag\\
        \leq&~ \| \gamma'_2 \circ \mathsf{LN}(\mathsf{FF}'^t)- \gamma_2 \circ \mathsf{LN}(\F'^t) \|_\infty + \|\beta'_1-\beta_1\|_\infty\notag\\
        \leq&~\|\gamma'_2\circ(\mathsf{LN}(\mathsf{FF}'^t) - \mathsf{LN}(\F'^t)) \|_\infty+ \|(\gamma'_2-\gamma_2)\circ \mathsf{LN}(\F'^t)\|_\infty + cR\epsilon\notag\\
        \leq&~ R \cdot O(kR^{g+3}c^2) \epsilon + cR\epsilon \cdot R + cR\epsilon\notag\\
        =&~ O(kR^{g+4}c^2)\cdot \epsilon
    \end{align}
    where the first and the second steps follow from triangle inequality, the third step follows from Eq.~\eqref{eq:error_analysis_tmp3} the conditions of this lemma, and the last step follows from simple algebra.

    Then, we can show
    \begin{align}\label{eq:error_analysis_tmp5}
        \|\mathsf{MLP}(\gamma'_2 \circ \mathsf{LN}(\mathsf{FF}'^t)+ \beta'_2)-\mathsf{MLP}(\gamma_2 \circ \mathsf{LN}(\F'^t) +\beta_2)\|_\infty \leq&~  c R \cdot O(kR^{g+4} c^2) \cdot \epsilon\notag\\
        =&~ O(kR^{g+5}c^3)\cdot \epsilon
    \end{align}
    where the first step follows from Lemma~\ref{lem:error_analysis_mlp} and Eq.~\eqref{eq:error_analysis_tmp4} and the second step follows from simple algebra.

    Finally, we are able to show that
    \begin{align*}
        &~\|\mathsf{FNN}(\mathsf{FF}^t,\X',t) - \mathsf{FN}(\F^t,\X,t)\|_\infty\\
        =&~ \|\mathsf{MLP}(\gamma'_2 \circ \mathsf{LN}(\mathsf{FF}'^t)+ \beta'_2,c,c) \circ \alpha'_2 - \mathsf{MLP}(\gamma_2 \circ \mathsf{LN}(\F'^t)+ \beta_2,c,c) \circ \alpha_2 \|_\infty\\
        \leq&~ \| (\mathsf{MLP}(\gamma'_2 \circ \mathsf{LN}(\mathsf{FF}'^t)+ \beta'_2,c,c) - \mathsf{MLP}(\gamma_2 \circ \mathsf{LN}(\F'^t)+ \beta_2,c,c))\circ \alpha'_2\|_\infty \\+ &~ \|\mathsf{MLP}(\gamma_2 \circ \mathsf{LN}(\F'^t)+ \beta_2,c,c)  \circ (\alpha'_2 - \alpha_2) \|_\infty\\
        \leq&~ R \cdot O(kR^{g+5}c^3) \cdot \epsilon + R \cdot cR\epsilon\\
        =&~  O(kR^{g+6}c^3) \cdot \epsilon
    \end{align*}
    where the step follows from the definition of output of $\mathsf{FFN}(\F'^t,\X',t)$ and $ \mathsf{FN}(\F^t,\X,t)$, the second step follows from triangle inequality, the third step follows from Eq.~\eqref{eq:error_analysis_tmp5} and conditions of this lemma, and the last step follows from simple algebra.

    Then, we complete the proof.
\end{proof}

\subsection{Error Analysis of Fast FlowAR Architecture}\label{sec:error_analysis_fast_flowar}
Here, we proceed to present the error analysis of fast FlowAR Architecture.
\begin{lemma}[Error Bound Between Fast FlowAR and FlowAR Outputs]\label{lem:error_analysis_fast_flowar}
    Given the following:
    \begin{itemize}
        \item {\bf Input tensor:} $\X \in \R^{h \times w \times c}$.
        \item {\bf Scales number:} $K = O(1)$.
        \item {\bf Dimensions:} Let $h=w=n$ and $c = O(\log n)$. Let $\wt{h}_i := \sum_{j=1}^i h/r_j$ and $\wt{w}_i := \sum_{j=1}^i w/r_j$.
        \item {\bf Bounded Entries:} All tensors and matrices have entries bounded by $R = O(\sqrt{\log n})$.
        \item {\bf Layers:}
        \begin{itemize}
            \item $\phi_{\mathrm{up},a}(\cdot)$ :  bicubic upsampling function (Definition~\ref{def:bicubic_up_sample_function}).
            \item $\mathsf{Attn}(\cdot)$: attention layer (Definition~\ref{def:attn_layer}).
            \item $\mathsf{AAttC(\cdot)}$: approximate attention layer (Definition~\ref{def:aattc})
            \item $\mathsf{NN}(\cdot,\cdot,\cdot)$: flow-matching layer (Definition~\ref{def:flow_matching_architecture})
            \item $\mathsf{FNN}(\cdot,\cdot,\cdot)$: fast flow-matching layer (Definition~\ref{def:fast_flow_matching_architecture})
        \end{itemize}
        \item {\bf Input and interpolations:}
        \begin{itemize}
            \item Initial inputs: $\Z_{\mathrm{init}} \in \R^{(h/r_1)\times(w/r_1) \times c}$.
            \item $\Z_i:$ Reshaped tensor of  $\Z_{\mathrm{init}}, \phi_{\mathrm{up},1}(\wt{\Y}_1), \dots, \phi_{\mathrm{up},i-1}(\wt{\Y}_{i-1})$ for FlowAR.
            \item $\Z'_i:$ Reshaped tensor of  $\Z_{\mathrm{init}}, \phi_{\mathrm{up},1}(\wt{\Y}'_1), \dots, \phi_{\mathrm{up},i-1}(\wt{\Y}'_{i-1})$ for  Fast FlowAR.
            \item $\mathsf{F}_i^{t_i} \in \R^{h/r_i \times w/r_i \times c}$ be the interpolated value of FlowAR (Definition~\ref{def:flow}).
            \item $\mathsf{FF}_i^{t_i} \in \R^{h/r_i \times w/r_i \times c}$ be the interpolated value of Fast FlowAR (Definition~\ref{def:flow}).
        \end{itemize} 
        \item {\bf Outputs:}
        \begin{itemize}
            \item $\wt{\Y}_i \in \R^{h/r_i \times w/r_i \times c}$: FlowAR output at layer $i$ (Definition~\ref{def:flow_architecture_inference})
            \item $\wt{\Y}'_i \in \R^{h/r_i \times w/r_i \times c}$: Fast FlowAR output at layer $i$ (Definition~\ref{def:fast_flow_architecture_inference})
        \end{itemize}
    \end{itemize}
    Under these conditions, the $\ell_\infty$ error between the final outputs is bounded by:
    \begin{align*}
        \|\wt{\Y}'_K - \wt{\Y}_K\|_\infty \leq 1/\poly(n)
    \end{align*}
    
\end{lemma}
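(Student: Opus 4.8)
The plan is to induct on the scale index $i \in [K]$, maintaining the hypothesis that $\|\wt{\Y}'_i - \wt{\Y}_i\|_\infty \le \epsilon_i$ for an explicitly constructed increasing sequence $\epsilon_1 \le \epsilon_2 \le \cdots \le \epsilon_K$, where $\epsilon_i$ is obtained from $\epsilon_{i-1}$ by multiplying through the per-module amplification factors supplied by the error lemmas of Section~\ref{sec:app_efficient_critieria} and absorbing the intrinsic $\mathsf{AAttC}$ tolerance $\delta$. Concretely, I would trace one inductive step through the four operations that carry scale $i-1$ to scale $i$: (i) the bicubic upsampling applied to $\wt{\Y}_{i-1}$ versus $\wt{\Y}'_{i-1}$; (ii) the concatenation with $\Z_{\mathrm{init}}$ forming $\Z_i$ versus $\Z'_i$; (iii) the transformer block ($\mathsf{FFN}_i$ composed with $\mathsf{Attn}_i$, resp.\ $\mathsf{AAttC}_i$) followed by the slice extraction producing $\wh{\Y}_i$ versus $\wh{\Y}'_i$; and (iv) the flow-matching layer $\mathsf{NN}_i$ versus the fast flow-matching layer $\mathsf{FNN}_i$ producing $\wt{\Y}_i$ versus $\wt{\Y}'_i$.

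For step (i) I would apply Lemma~\ref{lem:error_analysis_up_layer} to each previous output, which gives $\|\phi_{\mathrm{up}}(\wt{\Y}'_{j}) - \phi_{\mathrm{up}}(\wt{\Y}_{j})\|_\infty \le O(\epsilon_{j})$. For step (ii), since $\Z_{\mathrm{init}}$ is identical in both pipelines and concatenation is a coordinate rearrangement, the $\ell_\infty$ error is the maximum over the blocks, hence $\|\Z'_i - \Z_i\|_\infty \le \max_{j<i} O(\epsilon_j) = O(\epsilon_{i-1})$. For step (iii) I would invoke Lemma~\ref{lem:error_analysis_aattc_attn} to control replacing $\mathsf{Attn}$ by $\mathsf{AAttC}$ on a perturbed input, obtaining a bound of the form $O(kR^{g+1}c)\cdot O(\epsilon_{i-1}) + \delta$ (the additive $\delta$ coming from Definition~\ref{def:aattc} to cover the degenerate case $\Z'_i = \Z_i$), then amplify by $O(c^2R^2)$ via Lemma~\ref{lem:error_analysis_ffn}; the slice extraction preserves the norm, so $\|\wh{\Y}'_i - \wh{\Y}_i\|_\infty =: \eta_i$ is a fixed polynomial factor times $\epsilon_{i-1}$ plus $O(c^2R^2)\delta$. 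Step (iv) is exactly Lemma~\ref{lem:error_analysis_flow_matching_layer}, which internally absorbs the discrepancy $\|\mathsf{FF}_i^{t_i} - \F_i^{t_i}\|_\infty \le \eta_i$ between the two interpolated inputs and yields $\epsilon_i \le O(kR^{g+6}c^3)\cdot \eta_i$. The base case $i=1$ runs the same chain starting from $\Z_1 = \Z'_1 = \Z_{\mathrm{init}}$, so that $\epsilon_1 = O(kR^{g+6}c^3)\cdot O(c^2R^2)\cdot \delta$.

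I would then unroll the recursion $\epsilon_i \le A\,\epsilon_{i-1} + B\delta$, with $A = O(k^2 R^{2g+9} c^6)$ and $B$ a comparable factor, to get $\epsilon_K \le (A^{K} + K A^{K-1} B)\,\delta$. Under the hypotheses of the lemma ($K = O(1)$, $c = O(\log n)$, $R = O(\sqrt{\log n})$, and the rank $k$ and degree $g$ of the low-rank attention approximation fixed as in \cite{as23}), the prefactor $A^{K} + K A^{K-1} B$ is a fixed $\poly(n)$ (indeed $n^{o(1)}$ in the sharper regime $R = o(\sqrt{\log n})$ used by Theorem~\ref{thm:upper_bound:formal}). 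Since $\mathsf{AAttC}$ can be instantiated with tolerance $\delta = 1/\poly(n)$ for an arbitrarily large polynomial, choosing that polynomial to dominate the prefactor yields $\|\wt{\Y}'_K - \wt{\Y}_K\|_\infty = \epsilon_K \le 1/\poly(n)$, as claimed.

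The step I expect to be the main obstacle is bookkeeping the accumulated multiplicative factor $A^{K}$ and certifying that it stays polynomial in $n$; this hinges on the term $R^{g}$ contributed by the degree-$g$ polynomial approximation inside Lemma~\ref{lem:error_analysis_aattc_attn}, which is precisely why the quadratic-time theorem restricts $R = o(\sqrt{\log n})$. A secondary point to verify is that concatenation and the growth of the cumulative sequence length of $\Z_i$ with $i$ do not let the per-coordinate error compound across spatial positions, which holds because every output coordinate depends on the inputs only through modules already analyzed in the entrywise $\ell_\infty$ sense.
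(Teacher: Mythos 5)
Your proposal follows essentially the same route as the paper: induction on the scale index $i$, threading the $\ell_\infty$ error through the bicubic upsampling (Lemma~\ref{lem:error_analysis_up_layer}), the concatenation, the $\mathsf{AAttC}$-vs-$\mathsf{Attn}$ substitution (Lemma~\ref{lem:error_analysis_aattc_attn}), the FFN (Lemma~\ref{lem:error_analysis_ffn}), and finally the flow-matching layer (Lemma~\ref{lem:error_analysis_flow_matching_layer}), with the base case handled from a shared $\Z_{\mathrm{init}}$ and the $\delta$-tolerance of $\mathsf{AAttC}$. Your writeup is tighter in two respects the paper leaves implicit: you make the additive $\mathsf{AAttC}$ tolerance $\delta$ explicit when the two inputs coincide (the paper instead invokes Lemma~\ref{lem:as23_attention} directly in the base case, and in the inductive step relies on the cited Lemma~\ref{lem:error_analysis_aattc_attn}, which as stated has no additive $\delta$ term), and you unroll an explicit recursion $\epsilon_i \le A\epsilon_{i-1} + B\delta$ to certify that the cumulative amplification $A^K$ stays controlled under $K = O(1)$, whereas the paper simply collapses each step to ``$\le 1/\poly(n)$''. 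Both of these are correct refinements rather than departures, and the argument is sound.
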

\begin{proof}
    We can conduct math induction as the following.
    
    Consider the first layer of fast FlowAR Architecture. Firstly, we can show that
    \begin{align*}
        \| \mathsf{AAttC}_1(\Z_{1}) - \mathsf{Attn}_1(\Z_{1})\|_\infty \leq 1/\poly(n)
    \end{align*}
    The inequality is derived Lemma~\ref{lem:as23_attention}.
    
    Then, we have
    \begin{align*}
        \| \wh{\Y}'_1 - \wh{\Y}_1\|_\infty =&~ \|\mathsf{FFN}_1(\mathsf{AAttC}_1(\Z_{1})) - \mathsf{FFN}_1(\mathsf{Attn}_1(\Z_{1}))\|_\infty\\
        \leq&~ O(c^2 R^2) \cdot 1/\poly(n)\\
        = &~ 1/\poly(n)
    \end{align*}
    The first equation comes from the definition of $\wh{Y}'_1$ and $\wh{Y}_1$, the second inequality is due to Lemma~\ref{lem:error_analysis_ffn} and the last equation is due to $c = O(\log n)$ and $R = O(\sqrt{\log n})$.

    Then, we can show that
    \begin{align*}
        \| \wt{\Y}'_1 - \wt{Y}_1\|_\infty =&~ \| \mathsf{FNN}_1(\mathsf{FF}^{t_1}_1,\wh{Y}'_1,t_1)  - \mathsf{NN}_1(\F^{t_1}_1,\wh{Y}_1,t_1) \|_\infty\\
        \leq &~ O(kR^{g+6}c^3) \cdot 1/\poly(n)\\
        =&~ 1/\poly(n)
    \end{align*}
    The first equation is due to the definition of $\Y'_1$ and $\Y_1$, the second inequality comes from Lemma~\ref{lem:error_analysis_flow_matching_layer}, and the last step follows from $c = O(\log n)$ and $R = O(\sqrt{\log n})$.

    Assume that the following statement is true for $k$-th iteration (where $k < K$):
    \begin{align*}
        \|\wt{\Y}'_k - \wt{Y}_k\|_\infty \leq 1/\poly(n)
    \end{align*}
    Then, we can easily to bound
    \begin{align*}
        \| \Z'_{k+1} - \Z_{k+1}\|_\infty \leq 1/\poly(n)
    \end{align*}
    The inequality is due to Lemma~\ref{sec:error_analysis_of_phi_x_prime_phi_x} and Definition of $\Z'_{k+1}$ and $\Z_{k+1}$.

    Then, we can show that 
    \begin{align*}
        \|\mathsf{AAttC}_{k+1}(\Z'_{k+1}) -\mathsf{Attn}_{k+1}(\Z_{k+1}) \|_\infty \leq&~ O(k R^{g+1} c) \cdot 1/\poly(n)\\
        =&~ 1/\poly(n) 
    \end{align*}
    The first inequality comes from Lemma~\ref{lem:error_analysis_aattc_attn}, and the second equation is due to $c = O(\log n)$ and $R = O(\sqrt{\log n})$.

    Then we have
    \begin{align*}
        \| \wh{\Y}'_{k+1} - \wh{\Y}_{k+1}\|_\infty =&~ \|\mathsf{FFN}_{k+1}(\mathsf{AAttC}_{k+1}(\Z'_{k+1})) - \mathsf{FFN}_{k+1}(\mathsf{Attn}_{k+1}(\Z_{k+1}))\|_\infty\\
        \leq&~ O(c^2 R^2) \cdot 1/\poly(n)\\
        = &~ 1/\poly(n)
    \end{align*}
   The first equation comes from the definition of $\wh{Y}'_{k+1}$ and $\wh{Y}_{k+1}$, the second inequality is due to Lemma~\ref{lem:error_analysis_ffn} and the third equation is due to $c = O(\log n)$ and $R = O(\sqrt{\log n})$.

    Then, we can derive that
    \begin{align*}
        \| \wt{\Y}'_{k+1} - \wt{Y}_{k+1}\|_\infty =&~ \| \mathsf{FNN}_{k+1}(\mathsf{FF}^{t_{k+1}}_{k+1},\wh{Y}'_{k+1},t_{k+1})  - \mathsf{NN}_{k+1}(\F^{t_{k+1}}_{k+1},\wh{Y}_k+1,t_{k+1}) \|_\infty\\
        \leq &~ O(kR^{g+6}c^3) \cdot 1/\poly(n)\\
        =&~ 1/\poly(n)
    \end{align*}
    The first equation comes from the definition of $\Y'_{k+1} $ and $ \Y_{k+1}$, the second inequality is due to Lemma~\ref{lem:error_analysis_flow_matching_layer} and the third equation is due to $c = O(\log n)$ and $R = O(\sqrt{\log n})$.

    Then, by mathematical induction, we can get the proof.
\end{proof}

\end{document}